\DeclareMathOperator*{\argmax}{arg\,max}
\DeclareMathOperator*{\argmin}{arg\,min}
\newenvironment{customthm}[1]
  {\innercustomthm}
  {\endinnercustomthm}
\newenvironment{customlem}[1]
  {\innercustomlem}
  {\endinnercustomlem}
\newenvironment{customcor}[1]
  {\innercustomcor}
  {\endinnercustomcor}
\newcommand{\smallsubsection}[1]{\paragraph{#1}\mbox{}\\}
\newtheorem{theorem}{Theorem}[section]
\newtheorem{definition}{Definition}
\newcommand*{\addFileDependency}[1]{
  \typeout{(#1)}
  \@addtofilelist{#1}
  \IfFileExists{#1}{}{\typeout{No file #1.}}
}
\newcommand*{\myexternaldocument}[1]{
    \externaldocument{#1}
    \addFileDependency{#1.tex}
    \addFileDependency{#1.aux}
}
\title{Provable Certificates for Adversarial Examples: Fitting a Ball in the Union of Polytopes}
\author{%
  Matt Jordan\\ 
  University of Texas at Austin \\ 
  \texttt{mjordan@cs.utexas.edu}\\
  \And
  Justin Lewis \\ 
  University of Texas at Austin \\
  \texttt{justin94lewis@utexas.edu}\\
  \And 
  Alexandros G. Dimakis \\
  University of Texas at Austin \\ 
  \texttt{dimakis@austin.utexas.edu}\\
  % examples of more authors
  % \And
  % Coauthor \\
  % Affiliation \\
  % Address \\
  % \texttt{email} \\
  % \AND
  % Coauthor \\
  % Affiliation \\
  % Address \\
  % \texttt{email} \\
  % \And
  % Coauthor \\
  % Affiliation \\
  % Address \\
  % \texttt{email} \\
  % \And
  % Coauthor \\
  % Affiliation \\
  % Address \\
  % \texttt{email} \\
}
\begin{document}
\newtheorem{lemma}[theorem]{Lemma}
\newtheorem{corollary}[theorem]{Corollary}
\maketitle

\begin{abstract}

We propose a novel method for computing exact pointwise robustness of deep neural networks for all convex $\ell_p$ norms. 
Our algorithm, GeoCert, finds the largest $\ell_p$ ball centered at an input point $x_0$, within which the output class of a given neural network with ReLU nonlinearities remains unchanged. 
We relate the problem of computing pointwise robustness of these networks to that of computing the maximum norm ball with a fixed center that can be contained in a non-convex polytope. This is a challenging problem in general, however we show that there exists an efficient algorithm to compute this for polyhedral complices. Further we show that piecewise linear neural networks partition the input space into a polyhedral complex. Our algorithm has the ability to almost immediately output a nontrivial lower bound to the pointwise robustness which is iteratively improved until it ultimately becomes tight. We empirically show that our approach generates distance lower bounds that are tighter compared to prior work, under moderate time constraints.

\end{abstract}

\section{Introduction}\label{sec:intro}

% Paragraph 1 : pointwise robustness 
The problem we consider in this paper is that of finding the $\ell_p$-pointwise robustness of a neural net with ReLU nonlinearities with respect to general $\ell_p$ norms. The pointwise robustness of a neural net classifier, $f$, for a given input point $x_0$ is defined as the smallest distance from $x_0$ to the decision boundary \cite{Bastani2016-zs}. Formally, this is defined as 
\begin{equation}\label{eq:pointwise-robustness}
    \rho(f, x_0, p) := \inf_{x} \{\epsilon \geq 0 \; | \; f(x) \neq f(x_0) \; \land \; ||x-x_0||_p = \epsilon\}.
\end{equation}

%Computing the pointwise robustness, or the average pointwise robustness across a test set of data points, is the central problem in certifying that neural nets are robust to adversarial attacks. Unfortunately, exactly computing the pointwise robustness for even the $\ell_\infty$ norm has been shown to be NP-complete \cite{Katz2017-qz}. Previous work that exactly computes the pointwise robustness has relied on mixed integer linear programming formulations \cite{Cheng2017-xq, Tjeng2017-qp} or SMT-based approaches \cite{Katz2017-qz, Ehlers2017-pg}, however these techniques work only for the $\ell_\infty$-norm. There have also been numerous techniques to efficiently compute a \emph{lower bound} on the pointwise robustness. These approaches employ a variety of relaxation techniques, such as duality \cite{Krishnamurthy2018-ow}, layer-wise approximations of the range of a neural net using linear or semidefinite programming\cite{Zico_Kolter2017-va, Wong2018-gw, Raghunathan2018-nu}, abstract representations with zonotopes \cite{Mirman2018-nx}, and bounding the global or local lipschitz constant of a network \cite{Hein2017-ky, Raghunathan2018-nu, Tsuzuku2018-nx, Szegedy2013-yt}. These approaches, while efficient, may provide very loose lower bounds on the robustness.

% ================================================================
%OUTLINE:  (MJ v2)
% - state problem we solve 
% - state hardness and overview of approaches 
% - state how we solve this problem 
% - tout benefits of our method 

% -- paragraph 2: state hardness and overview of approaches 
Computing the pointwise robustness is the central problem in certifying that neural nets are robust to adversarial attacks. Exactly computing this quantity this problem has been shown to be NP-complete in the $\ell_\infty$ setting \cite{Katz2017-qz}, with hardness of approximation results under the $\ell_1$ norm \cite{fastlin}. Despite these hardness results, multiple algorithms have been devised to exactly compute the pointwise robustness, though they may require exponential time in the worst case. As a result, efficient algorithms have also been developed to give provable lower bounds to the pointwise robustness, though these lower bounds may be quite loose. 

% -- paragraph 3: state how we solve this problem 
In this work, we propose an algorithm that initially outputs a nontrivial lower bound to the pointwise robustness and continually improves this lower bound until it becomes tight. Although our algorithm has performance which is theoretically poor in the worst case, we find that in practice it provides a fundamental compromise between the two extremes of complete and incomplete verifiers. This is useful in the case where a lower-bound to the pointwise robustness is desired under a moderate time budget.

% -- paragraph 4: state the general mathematical problem and results
The central mathematical problem we address is how to find the largest $\ell_p$ ball with a fixed center contained in the union of convex polytopes. We approach this by decomposing the boundary of such a union into convex components. This boundary may have complexity exponential in the dimension in the general case. However, if the polytopes form a polyhedral complex, an efficient boundary decomposition exists and we leverage this to develop an efficient algorithm to compute the largest $\ell_p$ ball with a fixed center contained in the polyhedral complex. We connect this geometric result to the problem of computing the pointwise robustness of piecewise linear neural networks by proving that the linear regions of piecewise linear neural networks indeed form a polyhedral complex. Further, we leverage the lipschitz continuity of neural networks to both initialize at a nontrivial lower bound, and guide our search to tighten this lower bound more quickly.

Our contributions are as follows:
\begin{itemize}
    \item We provide results on the boundary complexity of polyhedral complices, and use these results to motivate an algorithm to compute the the largest interior $\ell_p$ ball centered at $x_0$. 
    \item We prove that the linear regions of piecewise linear neural networks partition the input space into a polyhedral complex. 
    \item We incorporate existing incomplete verifiers to improve our algorithm and demonstrate that under a moderate time budget, our approach can provide tighter lower bounds compared to prior work.  
\end{itemize}

\section{Related Work}\label{sec:related}

\paragraph{Complete Verifiers:} We say that an algorithm is a \emph{complete verifier} if it exactly computes the pointwise robustness of a neural network. Although this problem is NP-Complete in general under an $\ell_\infty$ norm \cite{Katz2017-qz}, there are two main algorithms to do so. The first leverages formal logic and SMT solvers to generate a certificate of robustness \cite{Katz2017-qz}, though this approach only works for $\ell_\infty$ norms. The second formulates certification of piecewise linear neural networks as mixed integer programs and relies on fast MIP solvers to be scalable to reasonably small networks trained on MNIST \cite{Tjeng2017-qp, fischetti2018deep, lomuscio2017approach, dutta2018output, Cheng2017-xq}. This approach extends to the $\ell_2$ domain so long as the mixed integer programming solver utilized can solve linearly-constrained quadratic programs \cite{Tjeng2017-qp}. Both of these approaches are fundamentally different than our proposed method and do not provide a sequence of ever-tightening lower bounds. Certainly each can be used to certify any given lower bound, or provide a counterexample, but the standard technique to do so is unable to reuse previous computation.

\paragraph{Incomplete Verifiers:} There has been a large body of work on algorithms that output a certifiable lower bound on the pointwise robustness. We call these techniques \emph{incomplete verifiers}. These approaches employ a variety of relaxation techniques. Linear programming approaches admit efficient convex relaxations that can provide nontrivial lower bounds \cite{Zico_Kolter2017-va, fastlin, salman-barrier, Ehlers2017-pg}. Exactly computing the Lipschitz constant of neural networks has also been shown to be NP-hard \cite{virmaux2018lipschitz}, but overestimations of the Lipschitz constant have been shown to provide lower bounds to the pointwise robustness \cite{raghunathanlip,fastlin, Szegedy2013-yt, Hein2017-ky, Tsuzuku2018-nx}. Other relaxations, such as those leveraging semidefinite programming, or abstract representations with zonotopes are also able to provide provable lower bounds \cite{Raghunathan2018-nu, Mirman2018-nx}. An equivalent formulation of this problem is providing overestimations on the range of neural nets, for which interval arithmetic has been shown useful \cite{neurify, wang2018formal}. Other approaches generate lower bounds by examining only a single linear region of a PLNN \cite{feizi, croce2018provable}, though we extend these results to arbitrarily many linear regions. These approaches, while typically more efficient, may provide loose lower bounds. 

%Further, these approaches cannot compute the exact distance to the decision boundary generate a tight lower bound even with infinite computation time.

\section{Centered Chebyshev Ball}\label{sec:chebyshev}
%%%%%%%%%%%%%%%%%%%%%%%%%%%%%%%%%%%%%%%%%%%%%%%%%%%%%%%%%%%%%%%
%                                                             %
%                       NOTATION PARAGRAPH                    %
%                                                             %
%%%%%%%%%%%%%%%%%%%%%%%%%%%%%%%%%%%%%%%%%%%%%%%%%%%%%%%%%%%%%%%
\smallsubsection{Notations and Assumptions} 
Before we proceed, some notation. A \textit{convex polytope} is a bounded subset of $\mathbb{R}^n$ that can be described as the intersection of a finite number of halfspaces. The polytopes we study are described succinctly by their linear inequalities (i.e., they are H-polytopes), which means that the number of halfspaces defining the polytope, denoted by $m$, is at most $\mathcal{O}(poly(n))$, i.e. polynomial in the ambient dimension. If a polytope $\mathcal{P}$ is described as $\{x \; | \; Ax \leq b\}$, an $(n-k)$-\emph{face} of $\mathcal{P}$ is a nonempty subset of $\mathcal{P}$ defined as the set $\{x \; | \; x \in \mathcal{P} \; \land \; A^=x=b^=\}$ where $A^=$ is a matrix of rank $k$ composed of a subset of the rows of $A$, and $b^=$ is the corresponding subset of $b$. We use the term \emph{facet} to refer to an $(n-1)$ face of $\mathcal{P}$.
We define the boundary $\delta \mathcal{P}$ of a polytope as the union of the facets of $\mathcal{P}$. We use the term \textit{nonconvex polytope} to describe a subset of $\mathbb{R}^n$ that can be written as a union of finitely many convex polytopes, each with nonempty interior. 
%: 
%\begin{equation}\label{eq:single-polytope-boundary}
%\delta \mathcal{P} = \bigcup_{i\in [m]} \{x \;|\; Ax \leq b \land a_i^Tx = b_i\}
%\end{equation}
%where $a_i$ is the $i^{th}$ row of $A$.
The $\ell_p$-norm ball of size $t$ centered at point $x_0$ is denoted by $B_t^p(x_0):= \{x \; | \; ||x-x_0||_p \leq t\}$. The results presented hold for $\ell_p$ norms for $p \geq 1$. When the choice of norm is arbitrary, we use $||\cdot||$ to denote the norm and $B_t(x_0)$ to refer to the corresponding norm ball.

%%%%%%%%%%%%%%%%%%%%%%%%%%%%%%%%%%%%%%%%%%%%%%%%%%%%%%%%%%%%%%%
%                                                             %
%                       SINGLE POLYTOPE SECTION               %
%                                                             %
%%%%%%%%%%%%%%%%%%%%%%%%%%%%%%%%%%%%%%%%%%%%%%%%%%%%%%%%%%%%%%%

\noindent \textbf{Centered Chebyshev Balls:}
%\smallsubsection{Centered Chebyshev Balls}
Working towards the case of a union of polytopes, we first consider the simple case of fitting the largest $\ell_p$-ball with a fixed center inside a single polytope. The uncentered version of this problem is typically referred to as finding the \emph{Chebyshev center} of a polytope and can be computed via a single linear program \cite{Boyd2004-dj, Botkin1994-ln}. When the center is fixed, this can be viewed as computing the projection to the boundary of the polytope. In fact, in the case for a single polytope, it suffices to compute the projection onto the hyperplanes containing each facet. See Appendix \ref{app:chebyshev} for further discussion computing projections onto polytopes. Ultimately, because of the polytope's geometric structure, the problem's decomposition is straightforward. This theme of efficient boundary decomposition will prove to hold true for polyhedral complices as shown in the following sections.

Now, we turn our attention to the case of finding a centered Chebyshev ball inside a general nonconvex polytope. This amounts to computing the projection to the boundary of the region. The key idea here is that the boundary of a nonconvex polytope can be described as the union of finitely many $(n-1)$-dimensional polytopes; however, the decomposition may be quite complex. We define this set formally as follows:
\begin{definition}\label{def:boundary}
The \textbf{boundary} of a non-convex polytope $P$ is the largest set $T\subseteq P$ such that every point $x\in T$ satisfies the following two properties:
\begin{enumerate}[label=(\roman*)]
    \item There exists an $\epsilon_0$ and a direction $u$ such that for all $\epsilon \in (0, \epsilon_0)$, there exists a neighborhood centered around $x + \epsilon u$ that is contained in $P$. \label{item:inbound}
    \item There exists an $\eta_0$ and a direction $v$ such that for all $\eta \in (0, \eta_0)$, $x+\eta v \notin P$. \label{item:outbound}
\end{enumerate}
\end{definition}

The boundary is composed of finitely many convex polytopes, and computing the projection to a single convex polytope is an efficiently computable convex program. If there exists an efficient decomposition of the boundary of a nonconvex polytope into convex sets, then a viable algorithm is to simply compute the minimal distance from $x_0$ to each component of the boundary and return the minimum. Unfortunately, for general nonconvex polytopes, there may not be an efficient convex decomposition. See Theorem \ref{thm:hyp_arrang} in Appendix \ref{app:boundary}.

However, there do exist classes of nonconvex polytopes that admit a convex decomposition with size that is no larger than the description of the nonconvex polytope itself. To this end, we introduce the following definition (see also Ch. 5 of \cite{Ziegler1995-fl}):
\begin{definition}
A nonconvex polytope, described as the union of elements of the set $\mathscr{P}=\{\mathcal{P}_1, ..., \mathcal{P}_k\}$ forms a \textbf{polyhedral complex} if, for every $\mathcal{P}_i, \mathcal{P}_j\in \mathscr{P}$ with nonempty intersection, $\mathcal{P}_i\cap\mathcal{P}_j$ is a face of both $\mathcal{P}_i$ and $\mathcal{P}_j$. Additionally, for brevity, if a pair of polytopes $\mathcal{P}, \mathcal{Q}$, form a polyhedral complex, we say they are $\textbf{PC}$. (See Figure \ref{fig:polytope_examples} for examples.)
\end{definition}
\begin{figure}
    \centering
    \includegraphics[scale=0.35]{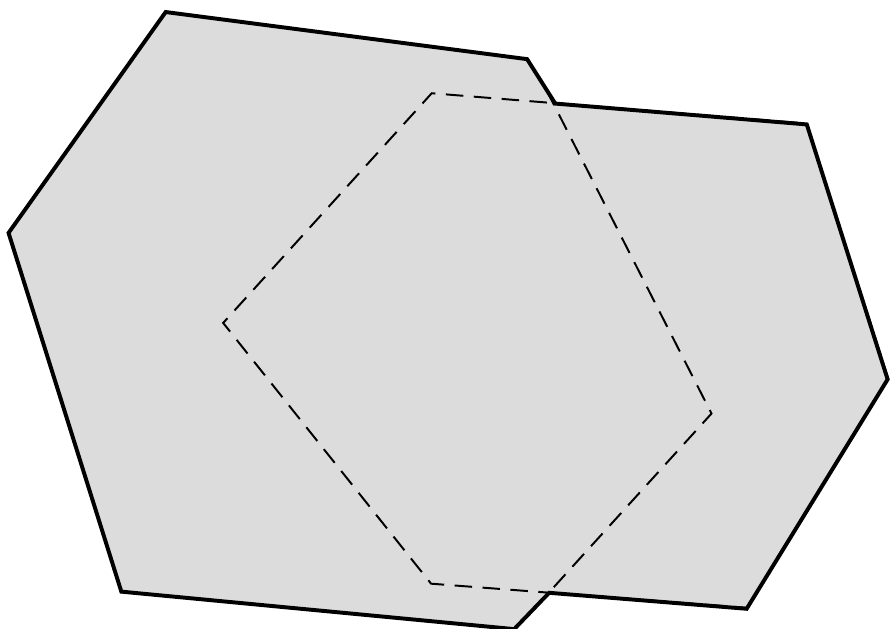}
    \includegraphics[scale=0.35]{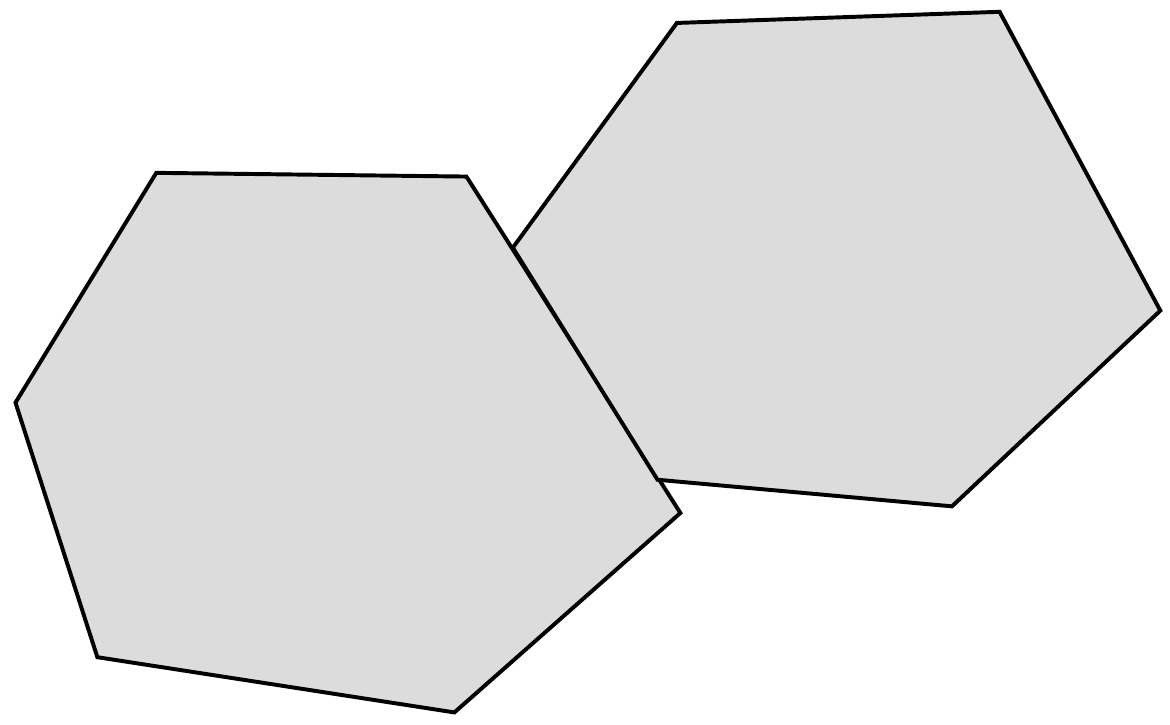}
    \includegraphics[scale=0.35]{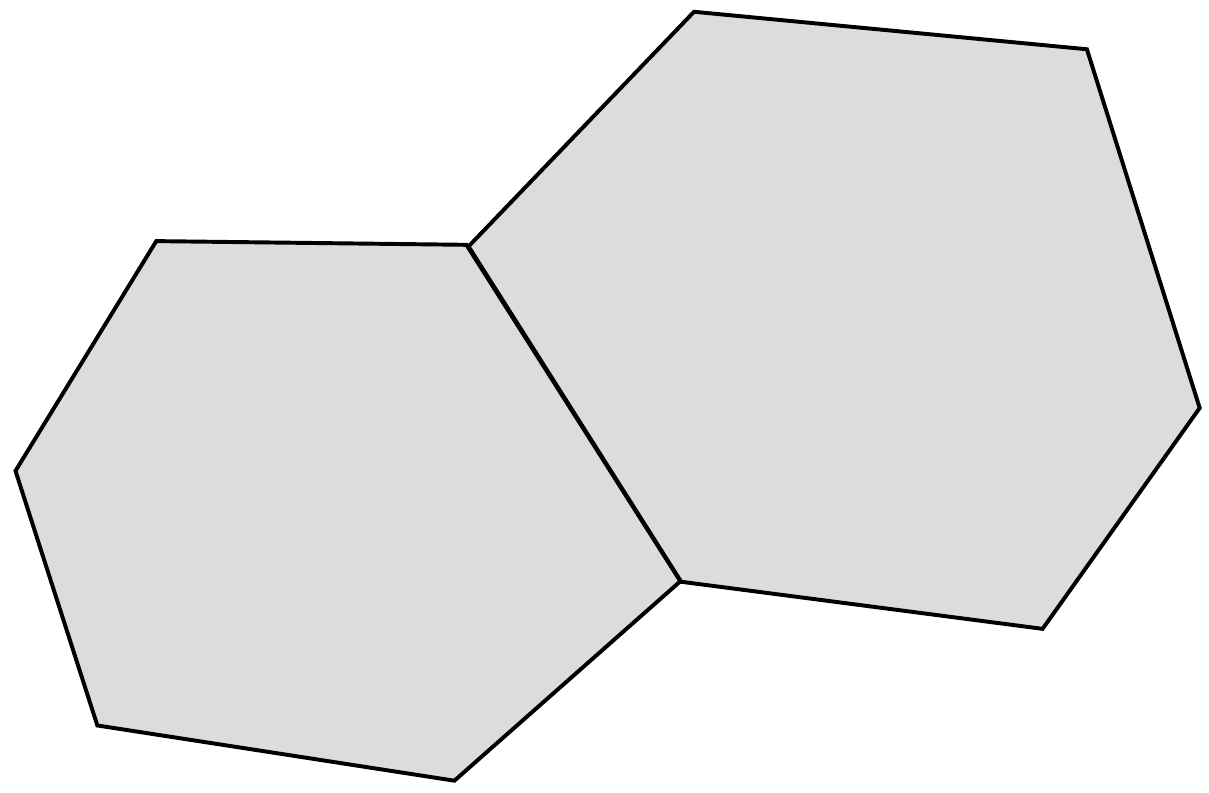}

    \caption{Three potential configurations of a nonconvex polytope. Note that only the rightmost nonconvex polytope forms a polyhedral complex.}
    \label{fig:polytope_examples}
\end{figure}

We can now state our main theorem concerning the computation of the centered Chebyshev ball within polyhedral complices:
\begin{theorem}\label{thm:main_algo}
Given a polyhedral complex, $\mathscr{P} =\{\mathcal{P}_1, \dots \mathcal{P}_k\}$, where $\mathcal{P}_i$ is defined as the intersection of $m_i$ closed halfspaces. Let $M=\sum_i m_i$, and let $x_0$ be a point contained by at least one such $\mathcal{P}_i$. Then the boundary of $\bigcup_{i \in [k]} \mathcal{P}_i$ is represented by at most $M$ $(n-1)$-dimensional polytopes. There exists an algorithm that can compute this boundary in $\mathcal{O}(poly(n, M, k))$ time. 
\end{theorem}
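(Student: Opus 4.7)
The plan is to show that the boundary of $\bigcup_{i \in [k]} \mathcal{P}_i$ decomposes exactly as the union of those facets $F$ of some $\mathcal{P}_i$ that are \emph{not} shared (as a full facet) with any other polytope in the complex. Condition (i) of Definition~\ref{def:boundary} forces a boundary point to lie in the closure of $\text{int}(\mathcal{P}_i)$ for some $i$, i.e.\ in $\mathcal{P}_i$, while condition (ii) rules out every point of $\text{int}(\mathcal{P}_i)$. Hence every boundary point lies on some facet of some $\mathcal{P}_i$, and this immediately bounds the number of $(n{-}1)$-dimensional pieces by $M = \sum_i m_i$.

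Next I would classify each facet $F$ of $\mathcal{P}_i$ with outward normal $\nu$. Suppose some $\mathcal{P}_j$ with $j \neq i$ meets the relative interior of $F$. The polyhedral complex property says $\mathcal{P}_i \cap \mathcal{P}_j$ is a face of both; since this face is a face of $\mathcal{P}_i$ that meets the relative interior of $F$, the standard fact that distinct faces of a polytope have disjoint relative interiors forces it to contain $F$. A dimension argument then pins down the intersection to be $F$ itself (the case $\mathcal{P}_i \subseteq \mathcal{P}_j$ is excluded because it would force $\mathcal{P}_i$ and $\mathcal{P}_j$ to have overlapping interiors, contradicting the complex property). Thus $F$ is a facet of $\mathcal{P}_j$ as well, $\mathcal{P}_j$ lies on the opposite side of the affine hull of $F$, and every relative-interior point of $F$ has a full neighborhood inside the union, failing (ii). Conversely, if no such $\mathcal{P}_j$ exists, then for any relative-interior point $x \in F$, moving along $-\nu$ enters $\text{int}(\mathcal{P}_i)$ (giving (i)) while moving along $+\nu$ exits every $\mathcal{P}_j$ (giving (ii)), so $x$ is a boundary point. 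Counting: each shared facet appears in exactly two of the $m_i$'s and each boundary facet in at most one, yielding at most $M$ boundary facets.

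For the algorithm, enumerate all $\leq M$ facets; for each facet $F$ of $\mathcal{P}_i$, iterate over every $j \neq i$ and test $F \subseteq \mathcal{P}_j$, which reduces to checking each defining inequality of $\mathcal{P}_j$ against $F$ by a linear program. The total runtime is $\text{poly}(n, M, k)$. The main obstacle I anticipate is making the classification step above airtight: one must rule out partial-overlap pathologies in which several neighbors cover disjoint slivers of the outside of $F$ or meet $F$ only along lower-dimensional faces. The polyhedral complex hypothesis, by forcing pairwise intersections to be shared faces of full dimension $(n{-}1)$ whenever they meet $F$ in relative interior, is precisely the structural ingredient that eliminates these configurations and makes the clean facet-by-facet decomposition and polynomial algorithm possible.
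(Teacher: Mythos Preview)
Your approach is essentially the paper's: define $T$ to be the union of facets of the $\mathcal{P}_i$ that are \emph{not} shared with any other $\mathcal{P}_j$, argue $T$ equals the boundary, and check sharing via linear programs. Your relative-interior dichotomy (a facet is either a common facet of two cells or touched by no other cell in its relative interior) matches the paper's characterization exactly, and your algorithm is a minor variant of theirs (the paper picks a relative-interior point of each facet and tests membership in the other cells, rather than testing $F\subseteq\mathcal{P}_j$ inequality-by-inequality; both are $\mathrm{poly}(n,M,k)$).

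The obstacle you flag---points on lower-dimensional faces of a non-shared facet $F$, where your outward-normal argument for condition~(ii) no longer applies because several $\mathcal{P}_j$ may meet $x$---is precisely where the paper spends its effort. For such $x$ it first moves inside the relative interior of $F$ to a nearby point $w$, finds a point $c$ just outside $Z$ near $w$, and then restricts to the $2$-dimensional affine plane through $x,w,c$; in that plane each neighboring $\mathcal{P}_j$ touches $\mathcal{P}_i$ only at $x$, so separating lines through $x$ carve out a cone containing the open segment $(x,c)$, which therefore lies outside $Z$ and witnesses condition~(ii) at $x$. The complementary statement (points of $Z\setminus T$ fail (ii)) is handled by showing $Z\setminus T$ is open, iterating over all shared facets through a given point. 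So your plan is correct and on the same track; the paper's additions are exactly the two arguments needed to close the gap you identified.
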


Returning to our desired application, we now prove a corollary about the centered Chebyshev ball contained in a union of polytopes.
\begin{corollary}
Given a collection, $\mathscr{P}=\{\mathcal{P}_1, \dots \mathcal{P}_k\}$ that meets all the conditions outlined in theorem \ref{thm:main_algo}, with the boundary of $\mathscr{P}$ computed as in theorem \ref{thm:main_algo}, the centered Chebyshev ball around $x_0$ has size
\begin{equation}\label{eq:algo-answer}
    t:= \inf_{x \in T} ||x-x_0|| 
\end{equation}
This can be solved by at most $M$ linear programs in the case of $\ell_\infty$ norm, or at most $M$ linearly constrained quadratic programs in the case of the $\ell_2$-norm. 
\end{corollary}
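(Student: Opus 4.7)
The plan is to first establish the geometric identity relating the centered Chebyshev radius to the infimum distance to the boundary $T$, then invoke Theorem \ref{thm:main_algo} to split the infimum over the convex pieces of $T$, and finally identify each point-to-piece problem as an LP or QP.

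First I would show that the largest $r$ with $B_r^p(x_0) \subseteq \bigcup_i \mathcal{P}_i$ equals $\inf_{x \in T} \|x-x_0\|_p$. The $\leq$ direction uses property (ii) of Definition \ref{def:boundary}: every $x \in T$ admits points arbitrarily close to $x$ outside the union, so if $\|x - x_0\|_p < r$ then $B_r^p(x_0)$ escapes the union. The reverse direction is a connectedness argument: any straight line segment from $x_0$ to a candidate point in $B_r^p(x_0)$ that left $\bigcup_i \mathcal{P}_i$ would have to cross a point of $T$, contradicting the assumption that the open ball avoids $T$.

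Second, Theorem \ref{thm:main_algo} decomposes $T$ as $\bigcup_{j=1}^{N} T_j$ with $N \leq M$ convex $(n-1)$-dimensional polytopes given explicitly by linear systems. Since the infimum over a finite union is the minimum of the individual infima, $t = \min_{j \in [N]} \inf_{x \in T_j} \|x - x_0\|_p$, reducing the original problem to $N$ point-to-polytope projections.

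Third, each projection is a convex program over a polyhedral feasible region. For $p=\infty$, the standard epigraph reformulation with an auxiliary scalar $s$ and constraints $-s \leq x_i - (x_0)_i \leq s$ (together with the linear description of $T_j$) and objective $\min s$ yields a linear program. For $p=2$, minimizing the monotonically equivalent objective $\|x-x_0\|_2^2$ over the linear constraints defining $T_j$ is a linearly constrained convex quadratic program. Summing across the $N \leq M$ pieces yields the claimed bound. The only subtle step is the first one: pinning down the topological identification between the boundary in Definition \ref{def:boundary} and the boundary delivered by Theorem \ref{thm:main_algo}, and handling the degenerate case $x_0 \in T$ where $t = 0$ trivially; the rest is routine convex-programming bookkeeping.
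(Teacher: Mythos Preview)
Your proposal is correct and matches the paper's approach. The paper does not give an explicit proof of this corollary, treating it as an immediate consequence of Theorem~\ref{thm:main_algo} together with the projection formulations in Appendix~\ref{app:chebyshev}; your three-step outline (identify the Chebyshev radius with $\inf_{x\in T}\|x-x_0\|$, decompose $T$ into at most $M$ convex facets, then solve each projection as an LP or LCQP) is exactly the argument the paper leaves implicit.
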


% \begin{figure}[h!]
%     \centering
%     \subfigure{\includesvg[width=0.6\columnwidth,height=0.6\textheight]
%     {images/svgs/union_polytopes_algo.svg}}
%     \subfigure{\includesvg[width=0.6\columnwidth,height=0.6\textheight]
%     {images/svgs/union_polytopes_algo_2.svg}}
%     \subfigure{\includesvg[width=0.6\columnwidth,height=0.6\textheight]
%     {images/svgs/union_polytopes_algo_3.svg}}
%     \caption{}
%     \label{fig:algorithm_example}
% \end{figure}

%%%%%%%%%%%%%%%%%%%%%%%%%%%%%%%%%%%%%%%%%%%%%%%%%%%%%%%%%%%%%%%
%                                                             %
%                       GRAPH THEORY + ALGORITHM              %
%                                                             %
%%%%%%%%%%%%%%%%%%%%%%%%%%%%%%%%%%%%%%%%%%%%%%%%%%%%%%%%%%%%%%%
%\smallsubsection{Graph Theoretic Formulation:} 
\noindent \textbf{Graph Theoretic Formulation:}
Theorem \ref{thm:main_algo} and its corollary provide a natural algorithm to computing the centered Chebyshev ball of a polyhedral complex: compute the convex components of the boundary and then compute the projection to each component. However one can hope to do better; one may not have to compute the distance to \emph{every} boundary facet. In the absence of other information, one must at least compute the projection to every facet intersecting the centered Chebyshev ball. 

A more natural way to view this problem is as a search problem along a bipartite graph, composed of a set of left vertices $\mathscr{F}$, right vertices $\mathscr{P}$ and edges $E$ connecting them. The left vertices represent all facets inside the polyhedral complex, and the right vertices represent all $n$-dimensional polytopes of the polyhedral complex. An edge between a face $\mathcal{F}$ and a polytope $\mathcal{P}$ exists iff $\mathcal{F}$ is a facet of $\mathcal{P}$. In other words, the graph of interest is composed of the terminal elements of the face lattice and their direct ancestors. For any polyhedral complex, the left-degree of this graph is at most 2.

The boundary facets, $T$, are some subset of $\mathscr{F}$ and our algorithm aims only to return the minimal distance between $x_0$ and $T$. Since $x_0$ exists in at least one polytope $\mathcal{P}_0 \in \mathscr{P}$, we can search locally outward starting at $\mathcal{P}_0$. For reasons that will become clear later, we rephrase distances as `potentials' and aim to find the element of $T$ which has the lowest potential. We define the pointwise potential, some function mapping $\mathbb{R}^n\rightarrow \mathbb{R}$ as $\phi(y)$, and the facet-wise potential, mapping facets to $\mathbb{R}$ as $\Phi(\mathcal{F}):= \min_{y\in\mathcal{F}} \phi(y)$. For now, one can view $\phi$ as the distance function under an $\ell_p$ norm, $\phi(y):=||x_0 - y||$. 

Algorithm 1 presents a technique to compute the centered Chebyshev ball as a search algorithm. The idea is to maintain a set of `frontier facets' in a priority queue, ordered by their potential $\Phi$. At each iteration we pop the frontier facet with minimal potential. Since this facet has left degree 2, at most one of its neighboring polytopes must not have yet been explored. If such a polytope exists, for each of its facets we add the facet and its potential $\Phi$ to the priority queue. A formal proof of correctness for using the distance function as a potential can be found in Corollary \ref{cor:lp-potential} in Appendix \ref{app:potential_proofs}.

\begin{figure}
\centering
    \begin{subfigure}{0.5\textwidth}
    \begin{algorithm}[H] 
        \SetAlgoLined
        \caption{Algorithm 1: GeoCert}
            \textbf{Input:} point $x_0$, potential $\Phi$\;
            \textbf{Initialization:} \;
            // Setup priority queue, seen-polytope set\;
            $Q \gets [\;]; C \gets \{\mathcal{P}(x_0)\}$\; 
            // Handle first polytope's facets\;
            \For{Facet $\mathcal{F}\in N(\mathcal{P}(x_0))$}{
                $Q.push((\Phi(\mathcal{F}), \mathcal{F}))$\;
            }
            // Loop until boundary is popped\;
            \While{$Q \neq \emptyset$}{
            $\mathcal{F} \gets Q.pop()$\; 
            \eIf{$\mathcal{F}$ is boundary}{
            \textbf{Return} $\mathcal{F}$\;
            }{
            \For{$\mathcal{P} \in N(\mathcal{F}) \setminus C$}{
            \For{$\mathcal{F}\in N(P)$}{
                $Q.push((\Phi(\mathcal{F}), \mathcal{F})$\;
            }
            }
            }}
    %\label{alg:geocert}
    \end{algorithm} 
    \end{subfigure}% need this comment symbol to avoid overfull hbox
    \begin{subfigure}{.5\textwidth}
    \centering
        \includegraphics[scale=0.25]{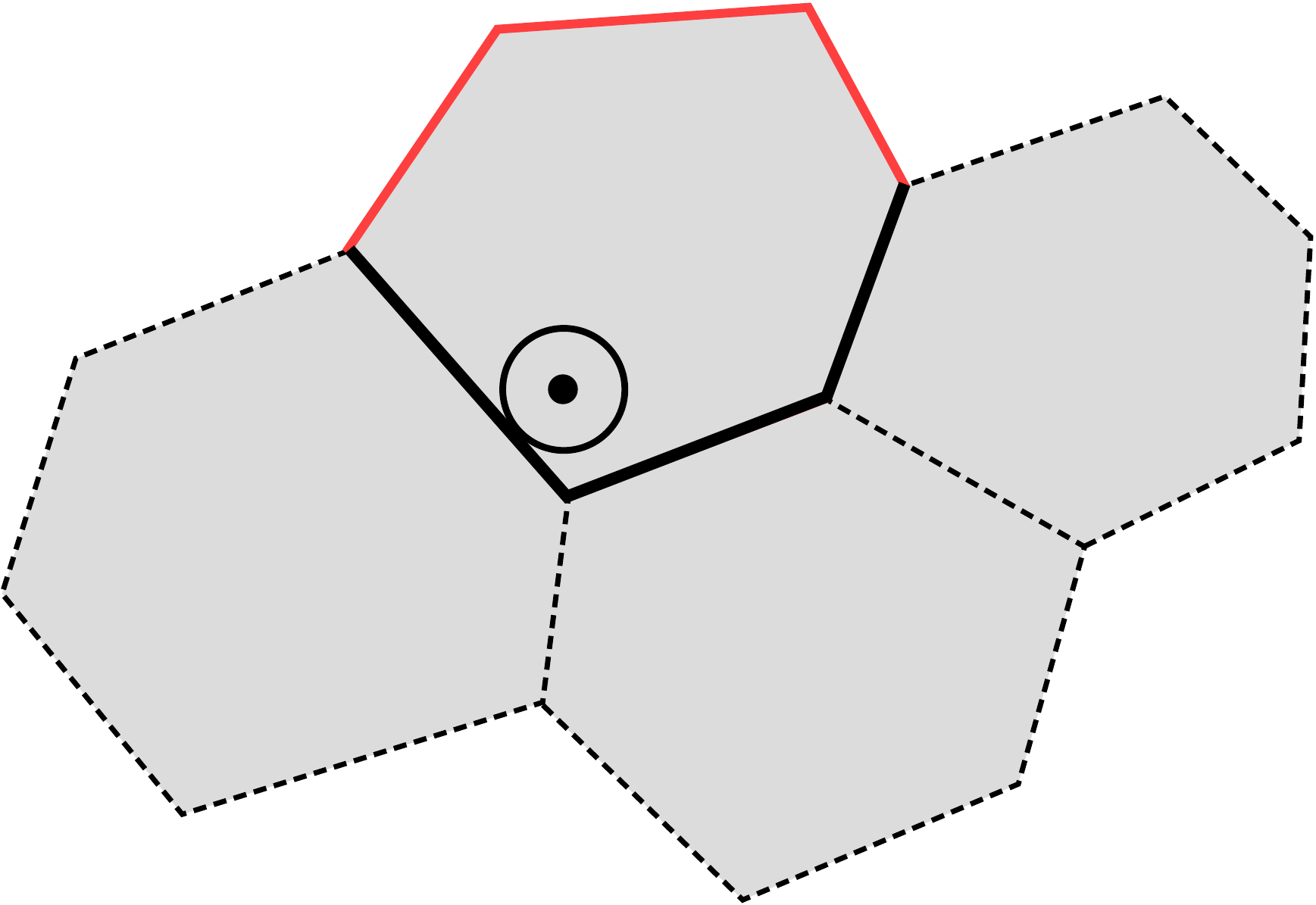}\\
        \includegraphics[scale=0.25]{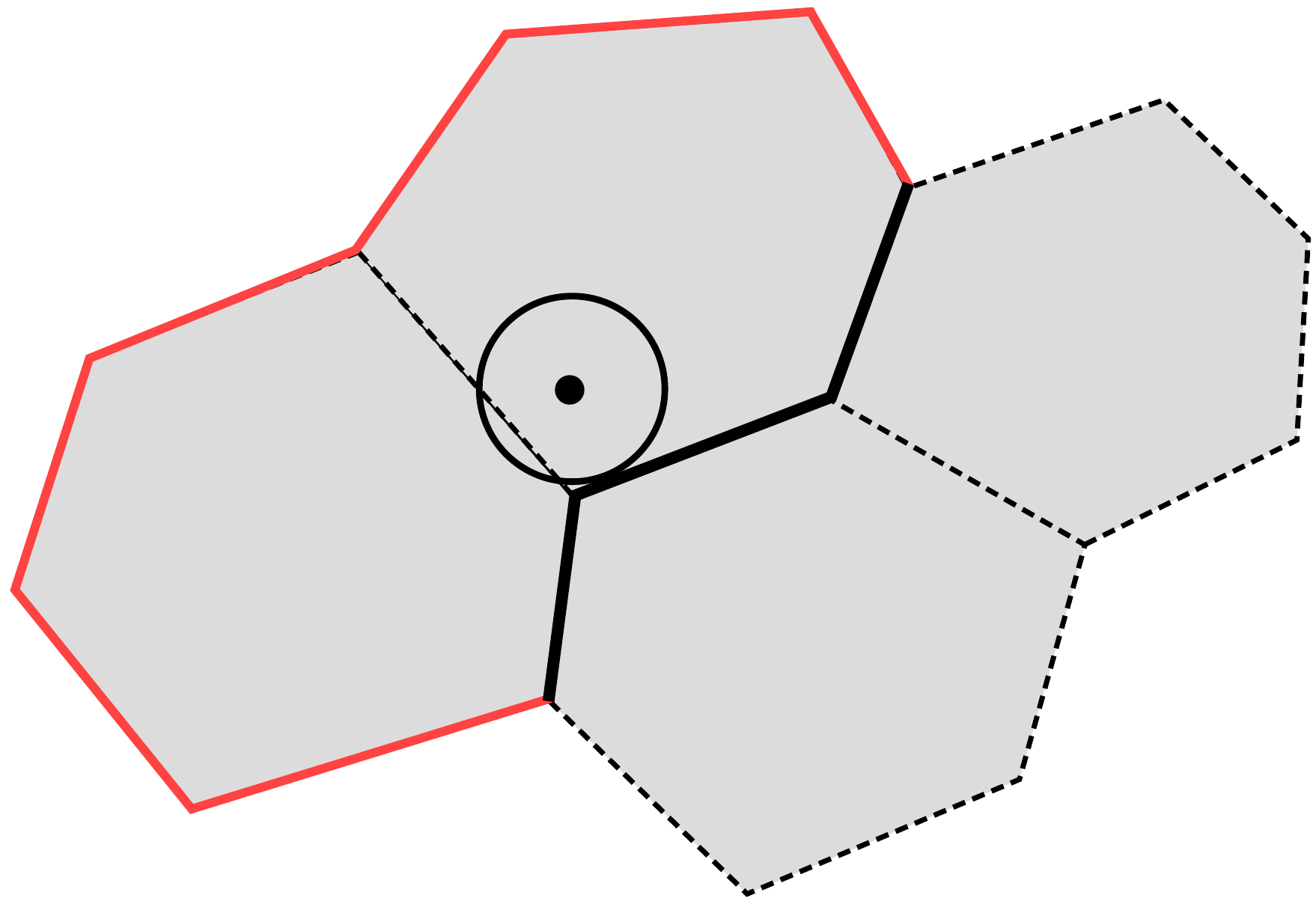}\\
        \includegraphics[scale=0.25]{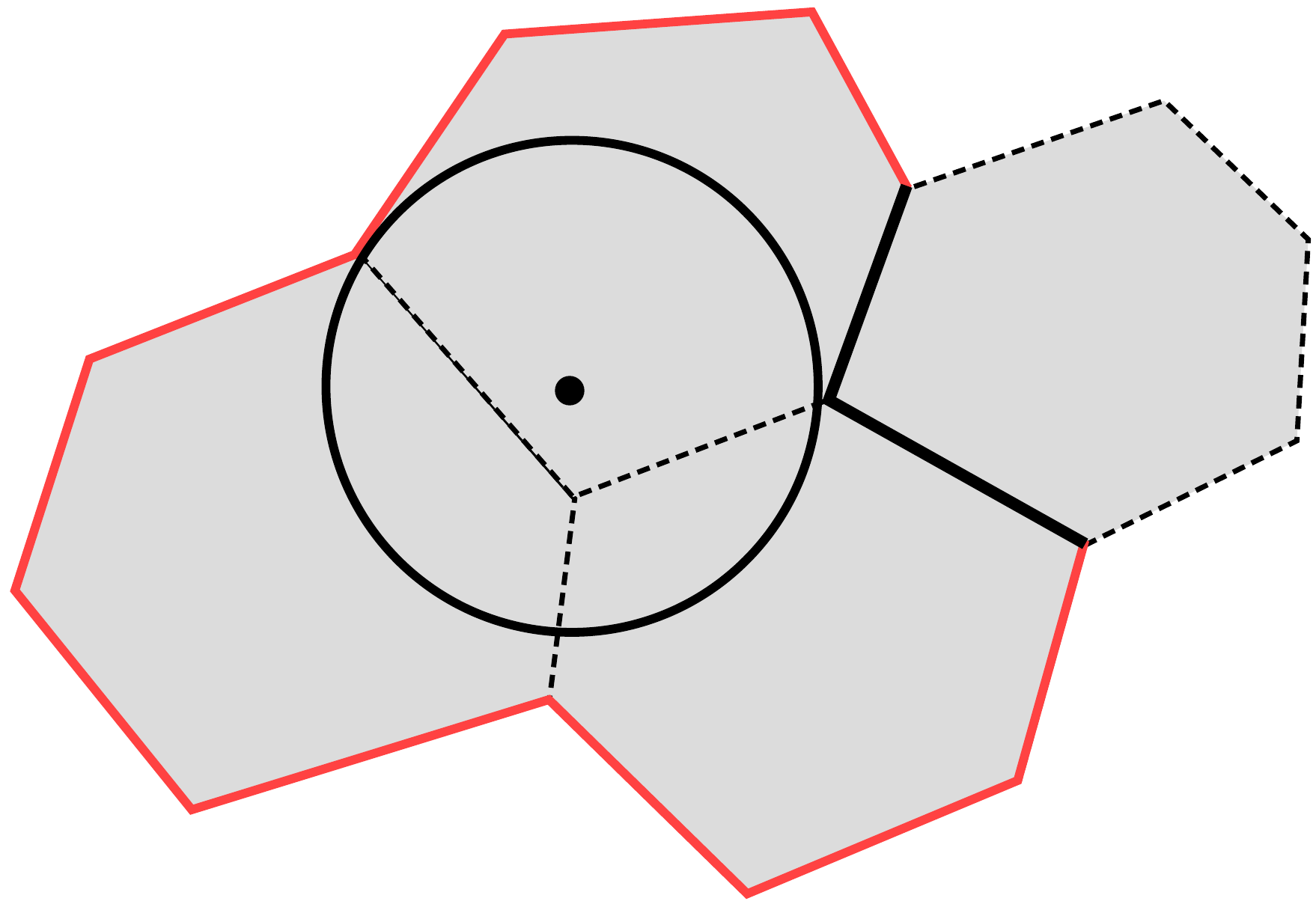}\\
    \end{subfigure}
\caption{Pseudocode for GeoCert (left) and a pictorial representation of the algorithm's behavior on a simple example (right). The facets colored belong to the priority queue, with red and black denoting adversarial facets and non-adversarial facets respectively. Once the minimal facet in the queue is adversarial, the algorithm stops.}
\end{figure}

%%%%%%%%%%%%%%%%%%%%%%%%%%%%%%%%%%%%%%%%%%%%%%%%%
%                                               %
%               HYPERPLANE LEMMAS               %
%                                               %
%%%%%%%%%%%%%%%%%%%%%%%%%%%%%%%%%%%%%%%%%%%%%%%%%
\smallsubsection{Iteratively Constructing Polyhedral Complices} 
Finally, we note an approach by which polyhedral complices may be formed that will become useful when we discuss PLNN's in the following section. We present the following three lemmas which relate to iterative constructions of polyhedral complices. Informally, they state that given any polytope or pair of polytopes which are PC, a slice with a hyperplane or a global intersection with a polytope generates a set that is still PC. 

\begin{lemma}\label{lemma:hyperplane-glue}
Given an arbitrary polytope $\mathcal{P}:= \{x \; | \; Ax \leq b\}$ and a hyperplane $\mathcal{H}:= \{x \; | \; c^Tx = d\}$ that intersects the interior of $\mathcal{P}$, the two polytopes formed by the intersection of $\mathcal{P}$ and the each of closed halfpsaces defined by $\mathcal{H}$ are PC.
\end{lemma}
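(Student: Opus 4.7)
The plan is to verify the polyhedral complex condition directly from the definition by writing down the H-representations of the two pieces. Let $H^+ := \{x : c^Tx \leq d\}$ and $H^- := \{x : c^Tx \geq d\}$ be the two closed halfspaces determined by $\mathcal{H}$, and set $\mathcal{P}_1 := \mathcal{P} \cap H^+$ and $\mathcal{P}_2 := \mathcal{P} \cap H^-$. So $\mathcal{P}_1$ admits the H-representation $\{x : Ax \leq b,\ c^Tx \leq d\}$ and $\mathcal{P}_2$ the representation $\{x : Ax \leq b,\ -c^Tx \leq -d\}$.

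First I would use the hypothesis that $\mathcal{H}$ intersects the interior of $\mathcal{P}$ to show that both $\mathcal{P}_1$ and $\mathcal{P}_2$ have nonempty interior; this is required for them to be admissible pieces of a nonconvex polytope under the paper's conventions. Pick $x^* \in \operatorname{int}(\mathcal{P}) \cap \mathcal{H}$, take a small ball around $x^*$ contained in $\mathcal{P}$, and perturb $x^*$ by $\pm \epsilon c / \|c\|_2^2$ for small enough $\epsilon$: the two perturbed points lie in $\operatorname{int}(\mathcal{P})$ and strictly satisfy $c^Tx < d$ and $c^Tx > d$, respectively, so they are interior to $\mathcal{P}_1$ and $\mathcal{P}_2$.

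Next, I would compute $\mathcal{P}_1 \cap \mathcal{P}_2 = \mathcal{P} \cap \mathcal{H} = \{x : Ax \leq b,\ c^Tx = d\}$, which is nonempty by the same interior point $x^*$. To see this is a face of $\mathcal{P}_1$, apply the paper's definition of face to the H-representation of $\mathcal{P}_1$: choose $A^= := [c^T]$ as the single-row submatrix (rank $1$ since $c \neq 0$, as $\mathcal{H}$ is a hyperplane) and $b^= := [d]$. Then $\{x \in \mathcal{P}_1 : A^=x = b^=\}$ equals $\mathcal{P}_1 \cap \mathcal{H} = \mathcal{P}_1 \cap \mathcal{P}_2$, exhibiting it as an $(n-1)$-face (a facet) of $\mathcal{P}_1$. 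The argument for $\mathcal{P}_2$ is symmetric, using $A^= := [-c^T]$ and $b^= := [-d]$.

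Since $\{\mathcal{P}_1, \mathcal{P}_2\}$ is the entire collection and the only nontrivial pairwise intersection has just been shown to be a face of both, this verifies the polyhedral complex property. I do not expect a substantive obstacle here: the only subtlety is correctly invoking the "intersects the interior" hypothesis to avoid one of the two pieces degenerating into a lower-dimensional set, and noting that $c \neq 0$ supplies the required rank-$1$ condition in the face definition.
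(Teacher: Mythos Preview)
Your proposal is correct and follows essentially the same direct-verification approach as the paper: define the two halves, observe their intersection is $\mathcal{P}\cap\mathcal{H}$, and note this is a face of each. You have simply filled in the details (nonempty interior via the perturbation argument, and the explicit rank-$1$ row for the face definition) that the paper's one-line proof leaves implicit.
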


\begin{lemma}
\label{lemma:double-hyperplane-glued}
Let $\mathcal{P},\mathcal{Q}$ be two PC polytopes and let $H_\mathcal{P}$, $H_\mathcal{Q}$ be two hyperplanes that define two closed halfspaces each, $H^+_\mathcal{P}, H^-_\mathcal{P}, H^+_\mathcal{Q}, H^-_\mathcal{Q}$. If $\mathcal{P}\cap\mathcal{Q}\cap H_\mathcal{P} =\mathcal{P}\cap\mathcal{Q}\cap H_\mathcal{Q}$ then the subset of the four resulting polytopes $\{\mathcal{P}\cap H^+_\mathcal{P}, \mathcal{P}\cap H^-_\mathcal{P}, \mathcal{Q}\cap H^+_\mathcal{Q}, \mathcal{Q}\cap H^-_\mathcal{Q}\}$ with nonempty interior forms a polyhedral complex.
\end{lemma}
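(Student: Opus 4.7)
The plan is to decompose the verification of the polyhedral complex property into \emph{within-polytope} pairs, where one invokes the previous lemma, and \emph{cross-polytope} pairs, where one exploits the hypothesis that $H_\mathcal{P}$ and $H_\mathcal{Q}$ agree on $F := \mathcal{P}\cap\mathcal{Q}$. Since $\mathcal{P}$ and $\mathcal{Q}$ are PC, $F$ is a face of both, so there is a subset of the defining inequalities of $\mathcal{P}$ (respectively $\mathcal{Q}$) whose activation as equalities cuts out $F$; this is the structural fact I will repeatedly use when exhibiting faces.

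For the within-polytope pairs $(\mathcal{P}\cap H^+_\mathcal{P}, \mathcal{P}\cap H^-_\mathcal{P})$ and $(\mathcal{Q}\cap H^+_\mathcal{Q}, \mathcal{Q}\cap H^-_\mathcal{Q})$, I would simply invoke Lemma~\ref{lemma:hyperplane-glue}. When $H_\mathcal{P}$ meets the interior of $\mathcal{P}$, both halves are full-dimensional and PC; otherwise one half has empty interior, is dropped per the lemma statement, and the pair becomes trivial.

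For the cross-polytope pairs $\mathcal{P}\cap H^{s_1}_\mathcal{P}$ and $\mathcal{Q}\cap H^{s_2}_\mathcal{Q}$ with $s_1,s_2\in\{+,-\}$, I would first relabel the $\pm$ on $H_\mathcal{Q}$ (if needed) so that the hypothesis $F\cap H_\mathcal{P}=F\cap H_\mathcal{Q}$ upgrades to the halfspace-level agreement $F\cap H^+_\mathcal{P}=F\cap H^+_\mathcal{Q}$ and $F\cap H^-_\mathcal{P}=F\cap H^-_\mathcal{Q}$. A short case analysis then simplifies each cross-pair intersection to a subset of $F$: for matching signs the intersection becomes $F\cap H^{s}_\mathcal{P}$, while for opposite signs it collapses to $F\cap H_\mathcal{P}$. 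Each such set is exhibited as a face of either parent polytope by activating the face-defining equalities of $F$ (available since $F$ is a face of both parents) and, when $H^\pm$ appears, turning the corresponding inequality $c^Tx\leq d$ into the equality $c^Tx=d$; this is exactly the description of a face in the H-polytope sense.

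The main obstacle I anticipate is the orientation and edge-case bookkeeping: the hypothesis is stated purely in terms of hyperplanes, not of oriented halfspaces, so the sign-relabeling step needs a careful justification, and one has to check the degenerate possibilities separately (e.g.\ $F\subseteq H_\mathcal{P}$, so the cut does not split $F$; or $H_\mathcal{P}$ disjoint from $F$; or a candidate polytope has empty interior and is dropped). In each such degenerate case the simplified expression for the pairwise intersection collapses to $F$ itself, to $\emptyset$, or the pair is simply absent, and the face verification in the preceding paragraph still goes through.
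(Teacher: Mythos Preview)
Your proposal is correct and matches the paper's approach: both split into within-polytope pairs (handled by Lemma~\ref{lemma:hyperplane-glue}) and cross-polytope pairs, align the halfspace orientations so that $F\cap H^+_\mathcal{P}=F\cap H^+_\mathcal{Q}$, and then verify that each cross-pair intersection is a face of both parents. The only cosmetic difference is that the paper organizes the cross-term analysis as an explicit four-way case split on $S:=F\cap H_\mathcal{P}$ (namely $S=\emptyset$, $S$ a proper face of $F$, $S=F$, or the generic slice), whereas you frame it as a single uniform computation with the degenerate cases relegated to bookkeeping; the underlying verifications are identical.
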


And the following will be necessary when we handle the case where we wish to compute the pointwise robustness for the image classification domain, where valid images are typically defined as vectors contained in the hypercube $[0, 1]^n$. 

\begin{lemma}\label{lemma:hyperbox-pc}
Let $\mathscr{P} = \{\mathcal{P}_1, \dots \mathcal{P}_k\}$ be a polyhedral complex and let $\mathcal{D}$ be any polytope. Then the set $\{\mathcal{P}_i \cap \mathcal{D} \; | \; \mathcal{P}_i \in \mathscr{P}\}$ also forms a polyhedral complex.
\end{lemma}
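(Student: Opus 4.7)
The plan is to reduce the lemma to a single algebraic fact about H-representations: if $F$ is a face of a polytope $\mathcal{P}$ and $\mathcal{D}$ is any polytope, then $F\cap\mathcal{D}$ (whenever nonempty) is a face of $\mathcal{P}\cap\mathcal{D}$. Once that is in hand, the polyhedral-complex property transfers from $\mathscr{P}$ to $\{\mathcal{P}_i\cap\mathcal{D}\}$ almost immediately.

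First, I would restrict attention to the nonempty members of $\{\mathcal{P}_i\cap\mathcal{D}\}$; each is a polytope, since it is an intersection of finitely many halfspaces (those defining $\mathcal{P}_i$ together with those defining $\mathcal{D}$). Pick any two such polytopes $\mathcal{P}_i\cap\mathcal{D}$ and $\mathcal{P}_j\cap\mathcal{D}$ whose intersection
\[
(\mathcal{P}_i\cap\mathcal{D})\cap(\mathcal{P}_j\cap\mathcal{D}) = \mathcal{P}_i\cap\mathcal{P}_j\cap\mathcal{D}
\]
is nonempty. Then in particular $\mathcal{P}_i\cap\mathcal{P}_j\neq\emptyset$, so by the polyhedral-complex hypothesis on $\mathscr{P}$, there is a subset $A^=x=b^=$ of the defining rows of $\mathcal{P}_i$ (with $A^=$ of some rank $k$) realizing $\mathcal{P}_i\cap\mathcal{P}_j$ as an $(n-k)$-face of $\mathcal{P}_i$, i.e.\ $\mathcal{P}_i\cap\mathcal{P}_j=\{x\in\mathcal{P}_i : A^=x=b^=\}$.

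Now comes the key algebraic step. The polytope $\mathcal{P}_i\cap\mathcal{D}$ is described in H-form by concatenating the rows of $\mathcal{P}_i$'s inequalities with those of $\mathcal{D}$'s. Using exactly the \emph{same} subset of rows from $\mathcal{P}_i$ (which still has rank $k$ in the enlarged system, since the choice of $A^=$ is unchanged) to form equalities, I obtain
\[
\{x\in \mathcal{P}_i\cap\mathcal{D}: A^=x=b^=\} \;=\; \mathcal{D}\cap\{x\in\mathcal{P}_i: A^=x=b^=\} \;=\; \mathcal{D}\cap(\mathcal{P}_i\cap\mathcal{P}_j) \;=\; \mathcal{P}_i\cap\mathcal{P}_j\cap\mathcal{D},
\]
which is nonempty by assumption, and hence is a face of $\mathcal{P}_i\cap\mathcal{D}$ by the definition quoted in the preliminaries. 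Swapping the roles of $i$ and $j$, the same argument shows $\mathcal{P}_i\cap\mathcal{P}_j\cap\mathcal{D}$ is also a face of $\mathcal{P}_j\cap\mathcal{D}$. This is exactly the polyhedral-complex condition for the collection $\{\mathcal{P}_i\cap\mathcal{D}\}$, completing the proof.

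I do not expect a real obstacle here; the only point requiring mild care is ensuring the \emph{same} row-subset that cut $\mathcal{P}_i\cap\mathcal{P}_j$ out of $\mathcal{P}_i$ continues to serve as a valid $A^=$ in the augmented system defining $\mathcal{P}_i\cap\mathcal{D}$, i.e.\ that its rank (and therefore the dimension-indexing in the face definition) is unaffected by appending $\mathcal{D}$'s inequalities. Once this is observed, the rest of the argument is purely set-theoretic bookkeeping.
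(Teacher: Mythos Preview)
Your argument is correct, and it takes a genuinely different route from the paper's own proof. The paper proves the lemma \emph{iteratively}: it writes $\mathcal{D}=\bigcap_j H_j$ as an intersection of halfspaces, argues it suffices to show that intersecting the whole complex with a single halfspace preserves the PC property, and then reduces that step to a direct application of Lemma~\ref{lemma:double-hyperplane-glued} (taking $H_\mathcal{P}=H_\mathcal{Q}$ equal to the bounding hyperplane of $H_j$). Your proof instead works in one shot at the level of H-representations, using only the paper's definition of an $(n-k)$-face: the same row-subset $A^{=}$ that exhibits $\mathcal{P}_i\cap\mathcal{P}_j$ as a face of $\mathcal{P}_i$ still has rank $k$ in the augmented system for $\mathcal{P}_i\cap\mathcal{D}$, so it exhibits $(\mathcal{P}_i\cap\mathcal{P}_j)\cap\mathcal{D}$ as a face of $\mathcal{P}_i\cap\mathcal{D}$. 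This is more elementary and self-contained (no dependence on Lemma~\ref{lemma:double-hyperplane-glued} and its four-case analysis), while the paper's approach has the virtue of reusing machinery already built for the inductive proof of Theorem~\ref{thm:relu_nets_are_PG}. The one place you flagged as needing care---that the rank of $A^{=}$ is unchanged when $\mathcal{D}$'s rows are appended---is indeed immediate, since $A^{=}$ is literally the same matrix viewed as a row-subset of the enlarged system.
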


\begin{wrapfigure}{R}{0.40\textwidth}
    \centering
    \vspace{-20pt}
    \includegraphics[width=\linewidth]{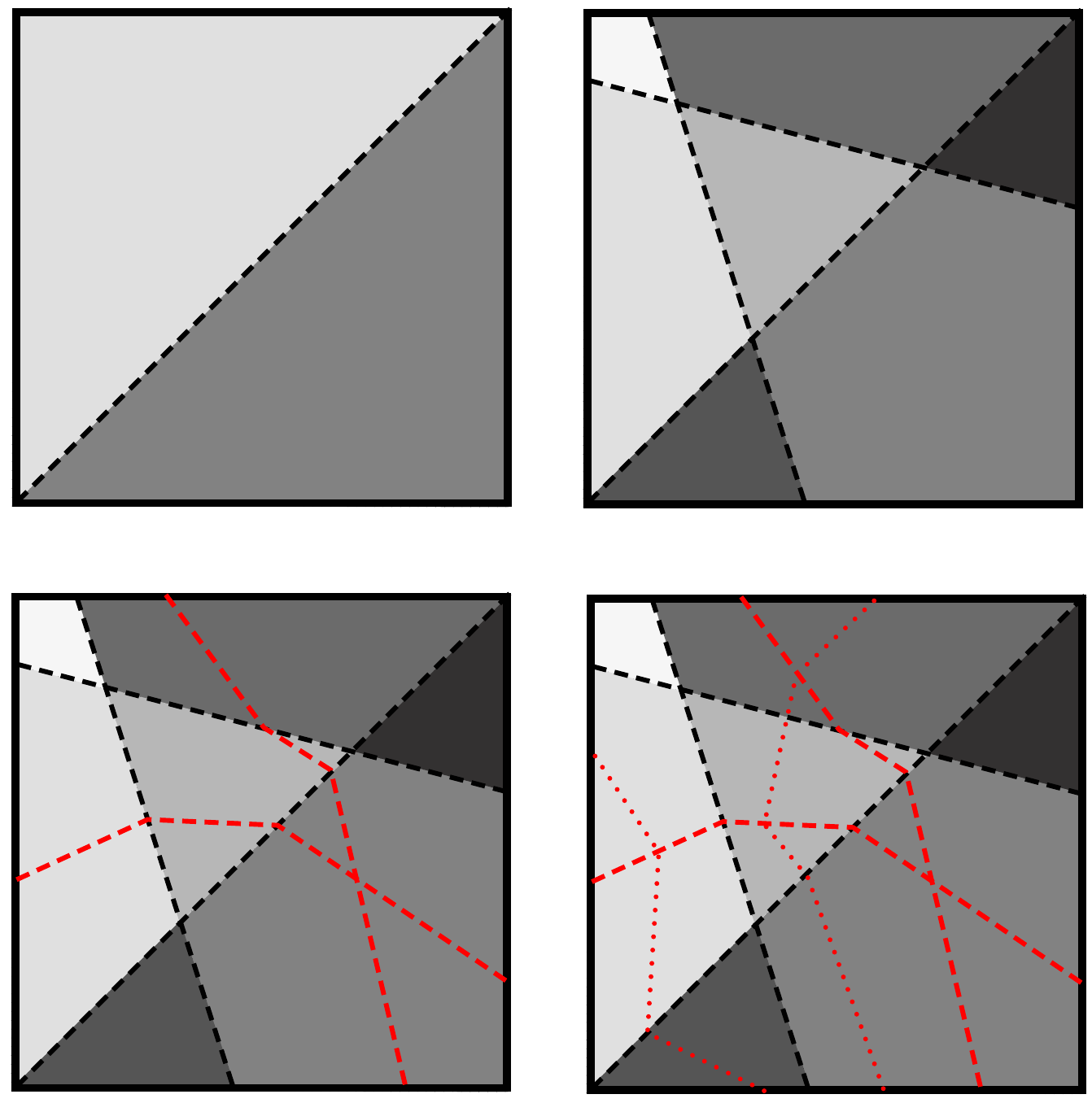}
    \vspace{-10pt}
    \caption{ Pictorial reference for proof of Theorem \ref{thm:relu_nets_are_PG}. \emph{(Top Left)} A single Relu activation partitions the input space into two PC polytopes \emph{(Top Right)} as additional activations are added at the first layer, the collection is still PC by Lemma \ref{lemma:double-hyperplane-glued}. \emph{(Bottom Left)} as the next layer of activations are added, the partitioning is linear within each region created previously and PC at the previous boundaries, thus still PC. \emph{(Bottom Right)} the partitioning due to all subsequent layers preserves PC-ness by induction.}
    \vspace{-25pt}
\end{wrapfigure}

% This can be pushed to the appendix -mj
% Finally we note that for the problem of a centered Chebyshev ball inside a union of only two polytopes, if their intersection is sufficiently low dimension, one polytope is irrelevant.
% \begin{lemma}\label{lemma:intersection-check}
% Let $\mathcal{P}$, $\mathcal{Q}$ be polytopes whose intersection is $(n-d)$ dimensional, for some $d\geq 2$, and let $x_0 \in \mathcal{P}$, with $B_t(x_0)$ the largest $\ell_p$-norm ball centered at $x_0$ contained in $\mathcal{P}\cup \mathcal{Q}$. Then $B_t(x_0)$ is contained entirely in $\mathcal{P}$.
% \end{lemma}

\section{Piecewise Linear Neural Networks}\label{sec:plnn}

We now demonstrate an application of the geometric results described above to certifying robustness of neural nets. We only discuss networks with fully connected layers and ReLU nonlinearities, but our results hold for networks with convolutional and skip layers as well as max and average pooling layers. 
Let $f$ be an arbitrary $L$-layer feed forward neural net with fully connected layers and ReLU nonlinearities, where each layer $f^{(i)}: \mathbb{R}^{n_{i-1}} \rightarrow \mathbb{R}^{n_{i}}$ has the form 

  \begin{equation}
  f^{(i)}(x) =
\begin{cases}
W_i x + b_i, & \text {if i = 1}\\
W_i \sigma(f^{(i-1)}(x)) +b_i, & \text{if } i > 1  \\
\end{cases}
  \end{equation}

where $\sigma$ refers to the element-wise ReLU operator. And we denote the final layer output $f^{(L)}(x)$ as $f(x)$. We typically use the capital $F(x)$ to refer to the maximum index of $f$: $F(x):= \argmax_i f_i(x)$. We define the $\emph{decision region}$ of $f$ at $x_0$ as the set of points for which the classifier returns the same label as it does for $x_0$: $\{x\; | \; F(x) = F(x_0)\}$. 

It is important to note is that $f^{(i)}(x)$ refers to the \textit{pre-ReLU} activations of the $i^{th}$ layer of $f$. Let $m$ be the number of neurons of $f$, that is $m=\sum_{i=1}^{L-1} n_i$. We describe a neuron configuration as a ternary vector, $A\in \{-1, 0, 1\}^m$, such that each coordinate of $A$ corresponds to a particular neuron in $f$. In particular, for neuron $j$, 
\begin{equation}
A_j =
\begin{cases}
+1, & \text{if neuron $j$ is `on'}\\
-1, & \text{if neuron $j$ is `off'}\\
0, & \text{if neuron $j$ is both `on' and `off'}\\
\end{cases}    
\end{equation}

Where a neuron being `on' corresponds to the pre-ReLU activation is at least zero, `off' corresponds to the pre-ReLU being at most zero, and if a neuron is both on and off its pre-ReLU activation is identically zero. Further each neuron configuration corresponds to a set 
$$\mathcal{P}_A = \{x \; | \; f(x) \text{ has neuron activation consistent with } A\}$$

% Also can just be stated and moved to the appendix 

The following have been proved before, but we include them to introduce notational familiarity:
\begin{lemma}\label{lemma:neural}
For a given neuron configuration $A$, the following are true about $\mathcal{P}_A$,
\begin{enumerate} [label=(\roman*)]
    \item  $f^{(i)}(x)$ is linear in $x$ for all $x\in \mathcal{P}_A$\label{neural:i}. 
    \item $\mathcal{P}_A$ is a polytope.
    \label{neural:ii}
\end{enumerate}
\end{lemma}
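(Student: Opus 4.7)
The plan is to prove both statements simultaneously by induction on the layer index $i$, since linearity of $f^{(i)}$ on $\mathcal{P}_A$ is exactly what lets us express the layer-$i$ neuron activation constraints as halfspaces, and those halfspaces are what make $\mathcal{P}_A$ a polytope in the first place. Concretely, let $\mathcal{P}_A^{(i)}$ denote the subset of $\mathbb{R}^{n_0}$ cut out by the neuron constraints coming from layers $1,\dots,i$ of the configuration $A$; so $\mathcal{P}_A^{(0)} = \mathbb{R}^{n_0}$ and $\mathcal{P}_A = \mathcal{P}_A^{(L-1)}$. The inductive hypothesis at step $i$ will be: (a) $f^{(i)}$ agrees with a single affine function on all of $\mathcal{P}_A^{(i-1)}$, and (b) $\mathcal{P}_A^{(i)}$ is a (possibly unbounded) polyhedron.

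For the base case $i=1$, the map $f^{(1)}(x) = W_1 x + b_1$ is affine on all of $\mathbb{R}^{n_0} = \mathcal{P}_A^{(0)}$, establishing (a). The constraints imposed by the first layer of $A$ have the form $(W_1 x + b_1)_j \geq 0$, $(W_1 x + b_1)_j \leq 0$, or both (when $A_j = 0$), each of which is a closed halfspace (or the intersection of two), so their conjunction $\mathcal{P}_A^{(1)}$ is a polyhedron, giving (b).

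For the inductive step, assume (a) and (b) hold up to layer $i-1$. On $\mathcal{P}_A^{(i-1)}$, the vector $f^{(i-1)}(x)$ is affine in $x$, and for each coordinate $j$ the value $\sigma(f^{(i-1)}(x))_j$ is equal to $f^{(i-1)}(x)_j$ when $A_j = +1$, to $0$ when $A_j = -1$, and to $0$ when $A_j = 0$ (since in this case the pre-activation is identically zero on $\mathcal{P}_A^{(i-1)}$). In every case $\sigma(f^{(i-1)}(x))$ is an affine function of $x$ on $\mathcal{P}_A^{(i-1)}$, so $f^{(i)}(x) = W_i \sigma(f^{(i-1)}(x)) + b_i$ is affine on this set, proving (a) at step $i$. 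Because $f^{(i)}$ is affine on $\mathcal{P}_A^{(i-1)}$, each layer-$i$ neuron constraint $f^{(i)}_j(x) \geq 0$, $\leq 0$, or $= 0$ becomes a halfspace (or pair of halfspaces) in $x$, and intersecting these with the polyhedron $\mathcal{P}_A^{(i-1)}$ yields another polyhedron $\mathcal{P}_A^{(i)}$, establishing (b) at step $i$.

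Once the induction terminates at $i = L-1$, part (i) of the lemma follows for every layer, and $\mathcal{P}_A = \mathcal{P}_A^{(L-1)}$ is a polyhedron; assuming the input is restricted to a bounded region such as $[0,1]^{n_0}$ (or invoking Lemma \ref{lemma:hyperbox-pc}) promotes this to a polytope in the bounded sense used throughout the paper. The main subtlety I expect is the $A_j = 0$ case, where a neuron is simultaneously ``on'' and ``off''; the correct reading is that $\mathcal{P}_A$ is then contained in the hyperplane $\{f^{(i)}_j(x) = 0\}$, which forces the formal $\sigma$ to coincide with both $0$ and the linear map on this lower-dimensional slice and keeps the polyhedron description consistent — without this care the inductive affine extension of $f^{(i)}$ could appear ambiguous.
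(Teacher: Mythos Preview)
Your proof is correct and takes essentially the same approach as the paper: both arguments observe that fixing the neuron configuration $A$ replaces each ReLU by a fixed diagonal linear map (identity or zero per coordinate), so each $f^{(i)}$ is a composition of affine maps on $\mathcal{P}_A$ and the defining activation constraints become linear inequalities. The paper presents this directly via the matrices $\Lambda_i = \mathrm{diag}(A_i)$ rather than as an explicit layer-by-layer induction, but the substance is identical; your handling of the $A_j=0$ case and of boundedness is, if anything, more careful than the paper's own sketch.
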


This lets us connect the polyhedral complex results from the previous section towards computing the pointwise robustness of PLNNs. Letting the potential $\phi$ be the $\ell_p$ distance, we can apply Algorithm 1 towards this problem.

\begin{theorem}
The collection of $\mathcal{P}_A$ for all $A$, such that $\mathcal{P}_A$ has nonempty interior forms a polyhedral complex. Further, the decision region of $F$ at $x_0$ also forms a polyhedral complex. 
\label{thm:relu_nets_are_PG}
\end{theorem}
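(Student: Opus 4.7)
The plan is to induct on the neurons of the network, adding them one at a time and invoking Lemmas \ref{lemma:hyperplane-glue} and \ref{lemma:double-hyperplane-glued} at each step. The figure in the excerpt already sketches the picture; I would turn it into a formal induction where the invariant is that after processing the first $j$ neurons (in some topological order compatible with the layer structure), the collection of $\mathcal{P}_A$ with nonempty interior, indexed by the partial configurations $A\in\{-1,+1\}^j$, forms a polyhedral complex.

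For the \emph{base case} I would take a single neuron in the first layer. Its pre-activation $w^\top x + b$ is a global linear function, so Lemma \ref{lemma:hyperplane-glue} applied with $\mathcal{P}$ any large enough ambient box (or just $\mathbb{R}^n$ via a limiting/direct argument) gives that the two resulting halfspaces are PC. For \emph{adding the next neuron} within the first layer, the new hyperplane is again globally linear, so it slices every existing polytope in the complex with the \emph{same} hyperplane; in particular, on any shared face of two adjacent polytopes the two induced slicing hyperplanes coincide. That is exactly the hypothesis needed to invoke Lemma \ref{lemma:double-hyperplane-glued} iteratively, so the refined collection remains a polyhedral complex.

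The main obstacle is the \emph{inductive step across layers}: when we add a neuron $k$ from layer $i\ge 2$, its pre-activation is only piecewise linear in $x$, not globally linear. By Lemma \ref{lemma:neural}\ref{neural:i}, however, it is an affine function $\alpha_A^\top x + \beta_A$ on each $\mathcal{P}_A$ already in our complex, so its zero set restricts to a (possibly empty) hyperplane inside each $\mathcal{P}_A$. The key point is that $f^{(i)}$ is a continuous function of $x$, so for any two adjacent $\mathcal{P}_A,\mathcal{P}_{A'}$ with shared face $\mathcal{P}_A\cap\mathcal{P}_{A'}$, the two local hyperplanes $\{\alpha_A^\top x+\beta_A=0\}$ and $\{\alpha_{A'}^\top x+\beta_{A'}=0\}$ agree when intersected with this shared face. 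This is precisely the condition $\mathcal{P}\cap\mathcal{Q}\cap H_\mathcal{P}=\mathcal{P}\cap\mathcal{Q}\cap H_\mathcal{Q}$ of Lemma \ref{lemma:double-hyperplane-glued}, so a single application of that lemma to every PC pair in the complex produces a new complex that is still PC. After accounting for faces where the cut is degenerate (the polytope lies entirely in one halfspace), the complex obtained by discarding empty-interior pieces is still PC. Iterating this over every neuron in every layer yields the first statement.

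For the \emph{decision region} claim, I would treat each margin function $g_j(x):=f_{F(x_0)}(x)-f_j(x)$ for $j\ne F(x_0)$ as a ``virtual output neuron'': within each $\mathcal{P}_A$ of the fully refined complex, $g_j$ is affine, and by continuity the hyperplanes $\{g_j=0\}$ again agree on shared faces of adjacent cells. Applying the same induction (once per $j$) gives a further refinement that is still a polyhedral complex and in which each cell has a constant sign pattern for $(g_j)_{j\ne F(x_0)}$, hence a constant predicted label. The decision region at $x_0$ is then the union of those cells whose sign pattern is nonnegative in every coordinate; this is a subcollection of a polyhedral complex, and since any subcollection inherits the face-intersection property directly from the definition, it is itself a polyhedral complex. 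If desired, one can also intersect everywhere with the domain $[0,1]^n$ via Lemma \ref{lemma:hyperbox-pc} to specialize to the image setting.
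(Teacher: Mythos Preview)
Your proposal is correct and follows essentially the same route as the paper: a double induction over layers and neurons, using Lemma~\ref{lemma:hyperplane-glue} for the first slice and Lemma~\ref{lemma:double-hyperplane-glued} for every subsequent neuron, with continuity of $f^{(i)}$ supplying the agreement condition $\mathcal{P}\cap\mathcal{Q}\cap H_\mathcal{P}=\mathcal{P}\cap\mathcal{Q}\cap H_\mathcal{Q}$ on shared faces. The decision-region argument is likewise the same idea---treat each margin $g_j$ as one more piecewise-linear cut and apply Lemma~\ref{lemma:double-hyperplane-glued}---with the only cosmetic difference that you refine by the full hyperplane and then take a subcollection, whereas the paper intersects directly with the halfspace $\{g_j\ge 0\}$.
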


In fact, except for a set of measure zero over the parameter space, the facets of each such linear region correspond to exactly one ReLU flipping configurations:
\begin{corollary}\label{cor:relu-hamming}
If the network parameters are in general position and $A,B$ are neuron configurations such that $dim(\mathcal{P}_A)=dim(\mathcal{P}_B)=n$ and their intersection is of dimension $(n-1)$, then $A,B$ have hamming distance 1 and their intersection corresponds to exactly one ReLU flipping signs.
\end{corollary}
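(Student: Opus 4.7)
The plan is to show that each full-dimensional cell $\mathcal{P}_A$ corresponds to a sign pattern with no zero entries, identify the shared facet with an affine hyperplane, and then use general position to rule out two neurons flipping simultaneously.

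First I would observe that $A$ must lie in $\{-1,+1\}^m$: a coordinate $A_j = 0$ forces the pre-ReLU activation of neuron $j$ to vanish identically on $\mathcal{P}_A$, cutting the dimension down to at most $n-1$. So both $A$ and $B$ are genuine sign patterns. Let $F = \mathcal{P}_A \cap \mathcal{P}_B$; since $\dim F = n - 1$, the affine span of $F$ is a unique affine hyperplane $H = \{x : n^\top x = d\}$, and $F$ is a facet of both $\mathcal{P}_A$ and $\mathcal{P}_B$. Let $S = \{j : A_j \neq B_j\}$; the goal is to show $|S| = 1$.

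Next I would use Lemma \ref{lemma:neural}(i) to assert that for every neuron $j$, its pre-ReLU activation $h_j^A(x)$ restricted to $\mathcal{P}_A$ is a linear (affine) function of $x$. For any $j \in S$, since $A_j = +1$ and $B_j = -1$ (or vice versa), $h_j^A$ takes strictly positive values on the interior of $\mathcal{P}_A$ near $F$ and, by continuity along paths crossing through $F$ into $\mathcal{P}_B$, must vanish on $F$ itself. An affine function that vanishes on an $(n-1)$-dimensional set must vanish on the entire affine hyperplane through it, so $h_j^A$ is proportional to the affine form $n^\top x - d$ defining $H$. In particular, if $|S| \geq 2$, then for any two distinct $j_1, j_2 \in S$ the affine forms $h_{j_1}^A$ and $h_{j_2}^A$ are scalar multiples of one another.

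The final step is to invoke the general position hypothesis to preclude this. I would formulate general position as: for every sign pattern $A$ with $\mathcal{P}_A$ nonempty and every pair of distinct neurons $(j_1, j_2)$, the affine forms $h_{j_1}^A, h_{j_2}^A$ are not scalar multiples of each other. Since $h_j^A$ is an explicit polynomial expression in the network weights $W_i$ and biases $b_i$ (obtained by composing the linear pieces dictated by $A$), the proportionality condition is the vanishing of a nontrivial polynomial in the parameters; hence it defines a measure-zero set. There are only finitely many sign patterns and finitely many pairs of neurons, so the union of bad parameter sets is still measure zero. For parameters outside this set we get $|S| \leq 1$, and since $A \neq B$ (otherwise $\mathcal{P}_A = \mathcal{P}_B$ would have interior intersection, contradicting that their intersection has dimension $n-1$) we get $|S| = 1$. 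The single neuron in $S$ is exactly the ReLU whose pre-ReLU activation vanishes on $F$ and whose sign flips between $A$ and $B$, giving the last clause.

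The main obstacle here is the formalization of general position: one must verify that proportionality of two pre-ReLU linear forms restricted to $\mathcal{P}_A$ is a nontrivial algebraic condition (not always satisfied), so that it genuinely cuts out a measure-zero set rather than being vacuous. This amounts to exhibiting, for each pattern $A$ and each pair $(j_1, j_2)$, at least one choice of parameters making the two forms non-proportional, which is straightforward since $h_{j_1}^A$ and $h_{j_2}^A$ depend on disjoint rows of the weight matrix at the layer of $j_1$ (respectively $j_2$) when those neurons are in the same layer, and on independent compositions of weights otherwise.
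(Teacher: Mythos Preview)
Your proposal is correct and follows essentially the same route as the paper's proof: both argue that full dimensionality forces $A,B\in\{-1,+1\}^m$, identify the $(n-1)$-dimensional intersection as a facet on which any flipping neuron's pre-ReLU affine form must vanish, and then invoke general position to preclude two distinct such forms from determining the same hyperplane. The paper's version is terser---it cites the polyhedral complex structure (Theorem~\ref{thm:relu_nets_are_PG}) to conclude the intersection is a common facet and simply asserts that in general position each facet corresponds to exactly one neuron---whereas you unpack the proportionality-of-affine-forms argument and the measure-zero justification explicitly.
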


\section{Speedups}

\begin{wrapfigure}{L}{0.40\textwidth}
    \centering
    \vspace{-20pt}
    \includegraphics[width=\linewidth]{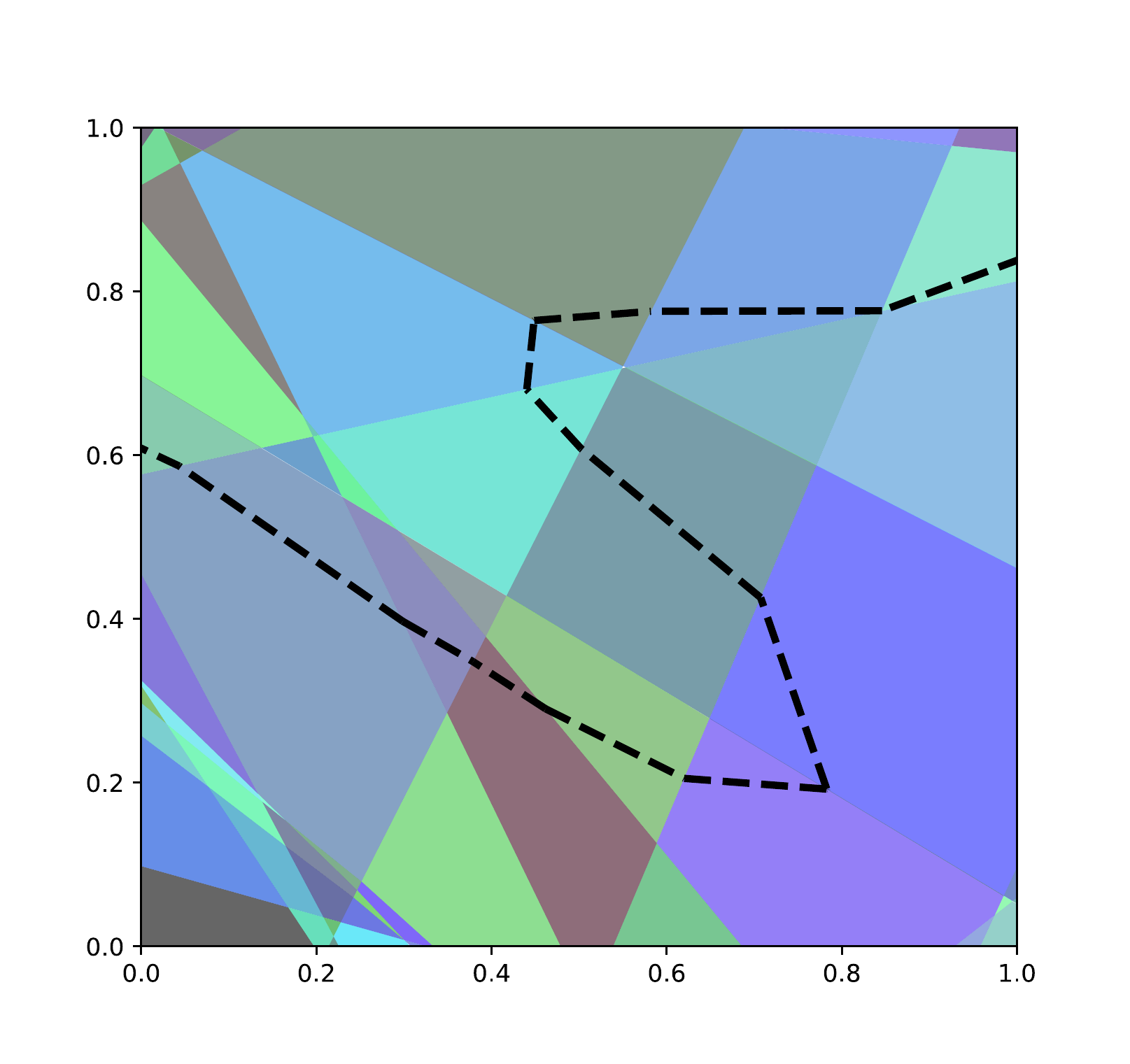}
    \vspace{-10pt}
    \caption{Piecewise Linear Regions of a 2D toy network. The dotted line represents the decision boundary.}
    \vspace{-15pt}
\end{wrapfigure}

While our results in section \ref{sec:chebyshev} hold for general polyhedral complices, we can boost the performance of GeoCert by leveraging additional structure of PLNNs. As the runtime of GeoCert hinges upon the total number of iterations and time per iteration, we discuss techniques to improve each.
\smallsubsection{Improving Iteration Speed Via Upper Bounds}\label{sec:speedups}
At each iteration, GeoCert pops the minimal element from the priority queue of `frontier facets' and, using the graph theoretic lens, considers the facets in its two-hop neighborhood. Geometrically this corresponds to popping the minimal-distance facet seen so far, considering the polytope on the opposite side of that facet and computing the distan1ce to each of its facets. In the worst case, the number of facets of each linear region is the number of ReLU's in the PLNN. While computing the projection requires a linear or quadratic program, as we will show, it is usually not necessary to compute a convex program for each every nonlinearity at every iteration.

If we can quickly guarantee that a potential facet is infeasible within the domain of interest then we avoid computing the projection exactly. In the image classification domain, the domain of valid images is usually the unit hypercube. If an upper bound on the pointwise robustness, $U$, is known, then it suffices to restrict our domain to $\mathcal{D}' := B_U(x_0)\cap \mathcal{D}$. This aids us in two ways: (i) if the hyperplane containing a facet does not intersect $\mathcal{D}'$ then the facet also does not intersect $\mathcal{D}'$; (ii) a tighter restriction on the domain allows for tighter bounds on pre-ReLU activations. For point (i), we observe that computing the feasibility of the intersection of a hyperplane and hyperbox is linear in the dimension and hence many facets can very quickly be deemed infeasible. For point (ii), if we can guarantee ReLU stability, by Corollary \ref{cor:relu-hamming}, then we can deem the facets corresponding to the each stable ReLU as infeasible. ReLU stability additionally provides tighter upper bounds on the Lipschitz constants of the network. 

Any valid adversarial example provides an upper bound on the pointwise robustness. Any point on any facet on the boundary of the decision region also provides an upper bound. In Appendix \ref{app:upper_bounds}, we describe a novel tweak that can be used to generate adversarial examples tailored to be close to the original point. Also, during the runtime of Geocert, any time a boundary facet is added to the priority queue, we update the upper bound based on the projection magnitude to this facet.

\smallsubsection{Improving Number of Iterations Via Lipschitz Overestimation}
When one uses distance as a potential function, if the true pointwise robustness is $\rho$, then GeoCert must examine every polytope that intersects $B_\rho(x_0)$. This is necessary in the case when no extra information is known about the polyhedral complex of interest. However one can incorporate the lipschitz-continuity of a PLNN into the potential function $\phi$ to reduce on the number of linear regions examined. The main idea is that as the network has some smoothness properties, any facet for which the classifier is very confident in its answer must be very far from the decision boundary. 
\begin{theorem}\label{thm:lipschitz-potential}
Letting $F(x_0) = i$, and $g_j(x) = f_i(x) - f_j(x)$ and an upper bound $L_j$ on the lipschitz continuity of $g_j$, using $\phi_{lip}(y):=||x_0 - y|| + \min_{j\neq i} \frac{g_j(y)}{L_j}$ as a potential for GeoCert maintains its correctness in computing the pointwise robustness.
\end{theorem}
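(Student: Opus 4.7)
The plan is to show that, despite augmenting the potential with the Lipschitz correction, GeoCert's Dijkstra-style search still terminates precisely at the closest adversarial facet. Two ingredients suffice: (A) on every adversarial facet the augmented potential equals the Euclidean distance, so the termination value the algorithm reads off is correct; and (B) along the straight segment from $x_0$ to the closest adversarial point $y^*$, the pointwise potential $\phi_{lip}$ never exceeds $d^* := \|x_0 - y^*\|$, so the usual frontier-queue invariant goes through.

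For (A), consider a facet $\mathcal{F}$ shared between regions $\mathcal{P}_A$ (with $F \equiv i$) and $\mathcal{P}_B$ (with $F \equiv j^* \neq i$). Approaching $y \in \mathcal{F}$ from the $\mathcal{P}_A$ side forces $g_j(y) \geq 0$ for every $j \neq i$; approaching from the $\mathcal{P}_B$ side forces $g_{j^*}(y) \leq 0$. Continuity of $g_{j^*}$ pins $g_{j^*}(y) = 0$ on $\mathcal{F}$, so $\min_{j \neq i} g_j(y)/L_j = 0$ and the induced facet potential reduces to $\min_{y \in \mathcal{F}} \|x_0 - y\|$. For (B), (A) gives $g_{j^*}(y^*) = 0$; for any $y = x_0 + (t/d^*)(y^* - x_0)$ with $t \in [0, d^*]$, the Lipschitz bound on $g_{j^*}$ yields
\[
g_{j^*}(y) \;\leq\; g_{j^*}(y^*) + L_{j^*}\|y - y^*\| \;=\; L_{j^*}(d^* - t),
\]
so $\min_{j \neq i} g_j(y)/L_j \leq d^* - t$ and therefore $\phi_{lip}(y) \leq t + (d^* - t) = d^*$, as required.

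With (A) and (B) in hand the rest is essentially the correctness argument for the plain distance potential in Corollary \ref{cor:lp-potential}. I would maintain the invariant that at every step of GeoCert, the segment from $x_0$ to $y^*$ either lies entirely inside the already-explored region---in which case $\mathcal{F}^*$ itself is in the priority queue and, by (A), has facet potential exactly $d^*$---or it exits the explored region through a frontier facet $\mathcal{F}_p$ that contains a segment point $y$ and therefore has facet potential at most $\phi_{lip}(y) \leq d^*$ by (B). When GeoCert pops an adversarial facet $\mathcal{F}'$, its potential is at most $d^*$; by (A) this equals the actual Euclidean distance from $x_0$ to $\mathcal{F}'$, and by minimality of $d^*$ this distance must equal $d^*$, so the returned facet certifies the true pointwise robustness. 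The main obstacle I anticipate is Claim (A): guaranteeing $g_{j^*}(y^*) = 0$ despite the infimum (rather than minimum) in the definition of pointwise robustness, and handling argmax ties carefully along the adversarial facet so the continuity step really yields $g_{j^*} \equiv 0$ on $\mathcal{F}^*$.
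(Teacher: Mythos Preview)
Your argument is correct, but it takes a genuinely different route from the paper. The paper isolates an abstract property it calls \emph{ray-monotonicity}: a potential $\phi$ is ray-monotonic if $t \mapsto \phi(x_0 + tv)$ is non-decreasing along \emph{every} ray from $x_0$. It first proves a general lemma (Lemma~\ref{lemma:pq-monotonic}) that for any ray-monotonic potential the facets popped by GeoCert have non-decreasing $\Phi$-values, and then verifies ray-monotonicity of each $\phi_{lip,j}$ by differentiating $\eta_{j,v}(t)=t\|v\|_p + g_j(x_0+tv)/L_j$ and applying H\"older's inequality to bound $\langle v,\nabla g_j\rangle$ by $\|v\|_p\|\nabla g_j\|_q \le L_j\|v\|_p$. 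Your claim~(A) is exactly the paper's final observation, but your claim~(B) replaces global ray-monotonicity with a pointwise bound $\phi_{lip}\le d^*$ along the single segment $[x_0,y^*]$, obtained directly from the Lipschitz inequality anchored at $y^*$ rather than from a derivative computation. Your Dijkstra-style invariant (the segment to $y^*$ always pierces a queued facet of potential $\le d^*$) is then a bespoke argument rather than an appeal to the general monotone-pops lemma; note that Corollary~\ref{cor:lp-potential}, which you cite, actually only checks ray-monotonicity of $\|\cdot - x_0\|$ and does not contain this invariant argument.

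What each buys: the paper's framework is modular---once ray-monotonicity is checked, Lemma~\ref{lemma:pq-monotonic} applies off the shelf---and, importantly, it yields the stronger conclusion that \emph{all} pops are non-decreasing, which is what justifies the paper's ``anytime lower bound that only increases'' claim in Section~\ref{sec:speedups}. Your argument is more elementary (no differentiation, no H\"older, just one application of the Lipschitz inequality) and suffices for the correctness statement as written, but as stated it only shows every pop has potential $\le d^*$; it does not by itself give monotone improvement of the running lower bound. The concern you flag about infimum-vs-minimum and ties in~(A) is not a real obstacle: the decision region is closed so the infimum is attained on its boundary, and continuity of $g_{j^*}$ across the shared facet pins it to zero exactly as you describe.
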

The intuition behind this choice of potential is that it biases the set of seen polytopes to not expand too much in directions for which the distance to the decision boundary is guaranteed to be large. This effectively is able to reduce the number of polytopes examined, and hence the number of iterations of geocert, while still maintaining complete verification. A critical bonus of this approach is that it allows one to 'warm-start' GeoCert with a nontrivial lower bound that will only increase until becoming tight at termination. A more thorough discussion on upper-bounding the lipschitz constant of each $g_j$ can be found in \cite{fastlin}.

\section{Experiments}\label{sec:experiments}

%\subsection{Exactly Computing the Pointwise Robustness}
\noindent \textbf{Exactly Computing the Pointwise Robustness:}
Our first experiment compares the average pointwise robustness bounds provided by two complete verification methods, GeoCert and MIP, as well as an incomplete verifier, Fast-Lip. The average $\ell_p$ distance returned by each method and the average required time (in seconds) to achieve this bound are provided in Table \ref{table:full}. Verification for $\ell_2$ and $\ell_\infty$ robustness was conducted for 128 random validation images for two networks trained on MNIST. Networks are divided into binary and non-binary examples. Binary networks were trained to distinguish a subset of 1's and 7's from the full MNIST dataset. All networks were trained with $\ell_1$ weight regularization with $\lambda$ set to $2\times 10^{-3}$. All networks are composed of fully connected layers with ReLU activations. The layer-sizes for the two networks are as follows: i) [784, 10, 50, 10, 2] termed 70NetBin and ii) [784, 20, 20, 2] termed 40NetBin.

From Table \ref{table:full}, it is clear that Geocert and MIP return the exact robustness value while Fast-Lip provides a lower bound. While the runtimes for MIP are faster than those for GeoCert, they are within an order of magnitude. In these experiments, we record the timing when each method is left to run to completion; however, in the experiment to follow we demonstrate that GeoCert provides a non-trivial lower bound faster than other methods. 

\begin{table}[ht]

\caption{(Left) Times (seconds) to compute exact pointwise robustness on binary MNIST networks for both the $\ell_2$ and $\ell_\infty$ settings over 128 random examples. Boldface corresponds to the exact pointwise robustness. (Right) Provable lower bounds for a binary MNIST network under a fixed 300s time limit. Note that GeoCert initializes at the bound provided by Fast-Lip and continually improves. Boldface here corresponds to the tightest lower bound found. Note that our algorithm outperforms all previous methods for this task. }

    \begin{subtable}{.5\linewidth}
      \centering

\begin{tabular}{@{}ll|rr|rr@{}}
\toprule
                              &                                & \multicolumn{2}{l|}{70NetBin}     & \multicolumn{2}{l|}{40NetBin} \\ \midrule
\multicolumn{1}{c|}{Method}   & $\ell_p$                       & \multicolumn{1}{l}{Dist.} & Time  & Dist.         & Time          \\ \midrule
\multicolumn{1}{l|}{Fast-Lip} & \multirow{3}{*}{$\ell_\infty$} & 0.136                     & 0.010 & 0.132         & 0.007         \\
\multicolumn{1}{l|}{GeoCert}  &                                & \textbf{0.191}                     & 1.300 & \textbf{0.187}         & 4.031         \\
\multicolumn{1}{l|}{MIP}      &                                & \textbf{0.191}                     & 0.947 & \textbf{0.187}         & 0.689         \\ \midrule
\multicolumn{1}{l|}{Fast-Lip} & \multirow{3}{*}{$\ell_2$}      & 1.289                     & 0.010 & 1.319         & 0.013         \\
\multicolumn{1}{l|}{GeoCert}  &                                & \textbf{1.555}                     & 4.789 & \textbf{1.607}         & 21.852        \\
\multicolumn{1}{l|}{MIP}      &                                & \textbf{1.555}                     & 4.030 & \textbf{1.607}         & 5.831         \\ \bottomrule

\end{tabular}
    \end{subtable}%
    \begin{subtable}{.5\linewidth}
      \centering
\begin{tabular}{@{}l|rrr@{}}       %\label{table:exp_two}

\toprule
        & \multicolumn{3}{c}{Method} \\ \midrule
Ex. & Fast-Lip  & GeoCert & MIP \\\midrule 
1       & 1.782     & \textbf{2.251}    & 2.0 \\
2       & 1.319     & \textbf{1.356}    & 1.0 \\
3       & 1.501     & \textbf{1.620}    & 1.0 \\
4       & 1.975     & \textbf{2.499}    & 2.0 \\
5       & 1.871     & \textbf{2.402}    & 2.0 \\ \bottomrule

\end{tabular}
    \end{subtable} 
        \label{table:full}

\end{table}

\noindent \textbf{Best Lower Bound Under a Time Limit:}
%
%\subsection{Best Lower Bound Under a Time Limit}
To demonstrate the ability of GeoCert to provide a lower bound greater than those generated by incomplete verifiers and other complete verifiers under a fixed time limit we run the following experiment. On the binary MNIST dataset, we train a network with layer sizes [784, 20, 20, 20, 2] using Adam and a weight decay of 0.02 \cite{kingma2014adam}. We allow a time limit of 5 minutes per example, which is not sufficient for either GeoCert or MIP to complete. As per the codebase associated with \cite{Tjeng2017-qp}, for MIP we use a binary search procedure of $\epsilon =[0.5, 1.0, 2.0, 4.0, \dots]$ to verify increasingly larger lower bounds. We also compare against the lower bounds generated by Fast-Lip \cite{fastlin}, noting that using the Lipschitz potential described in Section \ref{sec:speedups} allows GeoCert to immediately initialize to the bound produced by Fast-Lip. We find that in all examples considered, after 5 minutes, GeoCert is able to generate larger lower-bounds compared to MIP. Table \ref{table:full} demonstrates these results for 5 randomly chosen examples.
%\textcolor{blue}{Admittedly, a more fine-grained update procedure for MIP might allow for faster verification up to a provided $\epsilon$, but MIP lacks the ability to reuse computation.} 

\begin{figure}
    \centering
    \includegraphics[scale=0.4]{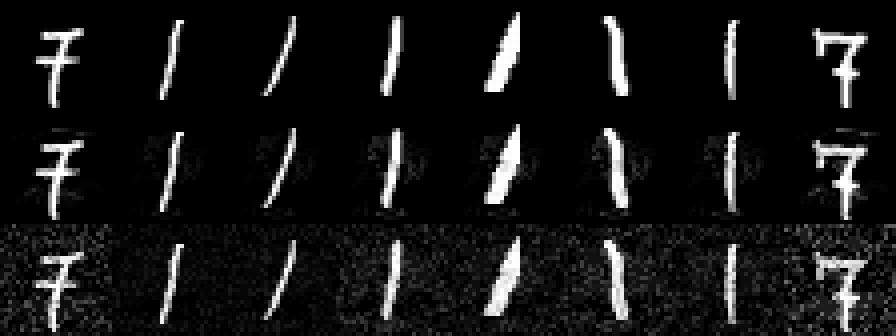}
    \caption{Original MNIST images (top) compared to their minimal distance adversarial examples as found by GeoCert (middle) and the minimal distortion adversarial attacks found by Carlini-Wagner $\ell_2$ attack. The average $\ell_2$ distortion found by GeoCert is 31.6\% less that found by Carlini-Wagner.}
    \label{fig:mnist_digits}
\end{figure}
% \begin{table}[]
% \centering 
% \begin{tabular}{@{}l|lllll@{}}
% \toprule
%          & \multicolumn{5}{c}{Example number}     \\ \midrule
% Method   & 1     & 2     & 3     & 4      & 5     \\ \midrule
% Fast-Lip & 1.782 & 1.319 & 1.501 & 1.9751 & 1.871 \\
% GeoCert  & 2.251 & 1.356 & 1.620 & 2.499  & 2.402 \\
% MIP      & 2.0   & 1.0   & 1.0   & 2.0    & 2.0   \\ \bottomrule
% \end{tabular}
% \end{table}

\section{Conclusion}

This paper presents a novel approach towards both
bounding and exactly computing the pointwise robustness of piecewise linear neural networks for all convex $\ell_p$ norms. 
Our technique differs fundamentally from existing complete verifiers in that it leverages local geometric information to continually tighten a provable lower bound. Our technique is built upon the notion of computing the centered Chebyshev ball inside a polyhedral complex. We demonstrate that polyhedral complices have efficient boundary decompositions and that each decision region of a piecewise linear neural network forms such a polyhedral complex. We leverage the Lipschitz continuity of PLNN's to immediately output a nontrivial lower bound to the pointwise robustness and improve this lower bound until it ultimately becomes tight.

We observe that mixed integer programming approaches are typically faster in computing the exact pointwise robustness compared to our method. However,  our method provides intermediate valid lower bounds that are produced significantly faster. 
Hence, under a time constraint, our approach is able to produce distance lower bounds that are typically tighter compared to incomplete verifiers 
and faster compared to MIP solvers. 
An important direction for future work would be to optimize our implementation so that we can scale our method to larger networks. This is a critical challenge for all machine learning verification methods. 

%Ultimately, our paper approaches the problem of verifying pointwise robustness from a purely geometric perspective. This viewpoint is entirely novel and holds merit in the insight it provides. From this lens, verifying robustness for PLNNs is abstracted to finding the largest $\ell_p$ ball within a polyhedral complex. We show that such complices possess unique structure which leads to an efficient algorithm in Geocert. Although exact verification is provably difficult in the worst case, we show that we are competitive with other exact verifiers and additionally provide benefit as a compromise between complete and incomplete methods. Lastly we present a number of geometrically inspired speedups and provide evidence that the complexity of the linear regions of a PLNN is well removed from worst case.   

\bibliographystyle{plain}
\bibliography{new_references}

\begin{thebibliography}{10}

\bibitem{Bastani2016-zs}
Osbert Bastani, Yani Ioannou, Leonidas Lampropoulos, Dimitrios Vytiniotis,
  Aditya Nori, and Antonio Criminisi.
\newblock Measuring neural net robustness with constraints.
\newblock May 2016.

\bibitem{Botkin1994-ln}
N~D Botkin and V~L Turova-Botkina.
\newblock An algorithm for finding the chebyshev center of a convex polyhedron.
\newblock {\em Appl. Math. Optim.}, 29(2):211--222, March 1994.

\bibitem{Boyd2004-dj}
Stephen Boyd and Lieven Vandenberghe.
\newblock {\em Convex Optimization}.
\newblock Cambridge University Press, March 2004.

\bibitem{cwattack}
Nicholas Carlini and David~A. Wagner.
\newblock Towards evaluating the robustness of neural networks.
\newblock {\em CoRR}, abs/1608.04644, 2016.

\bibitem{Cheng2017-xq}
Chih-Hong Cheng, Georg N{\"u}hrenberg, and Harald Ruess.
\newblock Maximum resilience of artificial neural networks.
\newblock April 2017.

\bibitem{croce2018provable}
Francesco Croce, Maksym Andriushchenko, and Matthias Hein.
\newblock Provable robustness of relu networks via maximization of linear
  regions.
\newblock {\em arXiv preprint arXiv:1810.07481}, 2018.

\bibitem{dutta2018output}
Souradeep Dutta, Susmit Jha, Sriram Sankaranarayanan, and Ashish Tiwari.
\newblock Output range analysis for deep feedforward neural networks.
\newblock In {\em NASA Formal Methods Symposium}, pages 121--138. Springer,
  2018.

\bibitem{Ehlers2017-pg}
Ruediger Ehlers.
\newblock Formal verification of {Piece-Wise} linear {Feed-Forward} neural
  networks.
\newblock May 2017.

\bibitem{fischetti2018deep}
Matteo Fischetti and Jason Jo.
\newblock Deep neural networks and mixed integer linear optimization.
\newblock {\em Constraints}, 23:296--309, 2018.

\bibitem{Hein2017-ky}
Matthias Hein and Maksym Andriushchenko.
\newblock Formal guarantees on the robustness of a classifier against
  adversarial manipulation.
\newblock May 2017.

\bibitem{Karmarkar1984-ah}
N~Karmarkar.
\newblock A new polynomial-time algorithm for linear programming.
\newblock {\em Combinatorica}, 4(4):373--395, December 1984.

\bibitem{Katz2017-qz}
Guy Katz, Clark Barrett, David Dill, Kyle Julian, and Mykel Kochenderfer.
\newblock Reluplex: An efficient {SMT} solver for verifying deep neural
  networks.
\newblock February 2017.

\bibitem{kingma2014adam}
Diederik~P Kingma and Jimmy Ba.
\newblock Adam: A method for stochastic optimization.
\newblock {\em arXiv preprint arXiv:1412.6980}, 2014.

\bibitem{kurakin2016adversarial}
Alexey Kurakin, Ian Goodfellow, and Samy Bengio.
\newblock Adversarial machine learning at scale.
\newblock {\em arXiv preprint arXiv:1611.01236}, 2016.

\bibitem{lomuscio2017approach}
Alessio Lomuscio and Lalit Maganti.
\newblock An approach to reachability analysis for feed-forward relu neural
  networks.
\newblock {\em arXiv preprint arXiv:1706.07351}, 2017.

\bibitem{madry2017towards}
Aleksander Madry, Aleksandar Makelov, Ludwig Schmidt, Dimitris Tsipras, and
  Adrian Vladu.
\newblock Towards deep learning models resistant to adversarial attacks.
\newblock {\em arXiv preprint arXiv:1706.06083}, 2017.

\bibitem{Mirman2018-nx}
Matthew Mirman, Timon Gehr, and Martin Vechev.
\newblock Differentiable abstract interpretation for provably robust neural
  networks.
\newblock In Jennifer Dy and Andreas Krause, editors, {\em Proceedings of the
  35th International Conference on Machine Learning}, volume~80 of {\em
  Proceedings of Machine Learning Research}, pages 3578--3586,
  Stockholmsm{\"a}ssan, Stockholm Sweden, 2018. PMLR.

\bibitem{raghunathanlip}
Aditi Raghunathan, Jacob Steinhardt, and Percy Liang.
\newblock Certified defenses against adversarial examples.
\newblock {\em arXiv preprint arXiv:1801.09344}, 2018.

\bibitem{Raghunathan2018-nu}
Aditi Raghunathan, Jacob Steinhardt, and Percy Liang.
\newblock Certified defenses against adversarial examples.
\newblock January 2018.

\bibitem{salman-barrier}
Hadi Salman, Greg Yang, Huan Zhang, Cho{-}Jui Hsieh, and Pengchuan Zhang.
\newblock A convex relaxation barrier to tight robustness verification of
  neural networks.
\newblock {\em CoRR}, abs/1902.08722, 2019.

\bibitem{Schrijver1998-em}
Alexander Schrijver.
\newblock {\em Theory of Linear and Integer Programming}.
\newblock John Wiley \& Sons, July 1998.

\bibitem{feizi}
Sahil Singla and Soheil Feizi.
\newblock Robustness certificates against adversarial examples for relu
  networks.
\newblock {\em CoRR}, abs/1902.01235, 2019.

\bibitem{Stanley2004-qv}
Richard~P Stanley.
\newblock An introduction to hyperplane arrangements.
\newblock {\em Geometric combinatorics}, 13:389--496, 2004.

\bibitem{Szegedy2013-yt}
Christian Szegedy, Wojciech Zaremba, Ilya Sutskever, Joan Bruna, Dumitru Erhan,
  Ian Goodfellow, and Rob Fergus.
\newblock Intriguing properties of neural networks.
\newblock December 2013.

\bibitem{Tjeng2017-qp}
Vincent Tjeng, Kai Xiao, and Russ Tedrake.
\newblock Evaluating robustness of neural networks with mixed integer
  programming.
\newblock November 2017.

\bibitem{tramer2017ensemble}
Florian Tram{\`e}r, Alexey Kurakin, Nicolas Papernot, Ian Goodfellow, Dan
  Boneh, and Patrick McDaniel.
\newblock Ensemble adversarial training: Attacks and defenses.
\newblock {\em arXiv preprint arXiv:1705.07204}, 2017.

\bibitem{Tsuzuku2018-nx}
Yusuke Tsuzuku, Issei Sato, and Masashi Sugiyama.
\newblock {Lipschitz-Margin} training: Scalable certification of perturbation
  invariance for deep neural networks.
\newblock February 2018.

\bibitem{virmaux2018lipschitz}
Aladin Virmaux and Kevin Scaman.
\newblock Lipschitz regularity of deep neural networks: analysis and efficient
  estimation.
\newblock In {\em Advances in Neural Information Processing Systems}, pages
  3835--3844, 2018.

\bibitem{neurify}
Shiqi Wang, Kexin Pei, Justin Whitehouse, Junfeng Yang, and Suman Jana.
\newblock Efficient formal safety analysis of neural networks.
\newblock {\em CoRR}, abs/1809.08098, 2018.

\bibitem{wang2018formal}
Shiqi Wang, Kexin Pei, Justin Whitehouse, Junfeng Yang, and Suman Jana.
\newblock Formal security analysis of neural networks using symbolic intervals.
\newblock In {\em 27th $\{$USENIX$\}$ Security Symposium ($\{$USENIX$\}$
  Security 18)}, pages 1599--1614, 2018.

\bibitem{fastlin}
Tsui-Wei Weng, Huan Zhang, Hongge Chen, Zhao Song, Cho-Jui Hsieh, Duane Boning,
  Inderjit~S Dhillon, and Luca Daniel.
\newblock Towards fast computation of certified robustness for relu networks.
\newblock {\em arXiv preprint arXiv:1804.09699}, 2018.

\bibitem{Ye1989-ri}
Yinyu Ye and Edison Tse.
\newblock An extension of karmarkar's projective algorithm for convex quadratic
  programming.
\newblock {\em Math. Program.}, 44(1):157--179, May 1989.

\bibitem{Zico_Kolter2017-va}
J~Zico~Kolter and Eric Wong.
\newblock Provable defenses against adversarial examples via the convex outer
  adversarial polytope.
\newblock November 2017.

\bibitem{Ziegler1995-fl}
G{\"u}nter~M Ziegler.
\newblock {\em Lectures on Polytopes}.
\newblock Graduate Texts in Mathematics. Springer-Verlag New York, 1 edition,
  1995.

\end{thebibliography}
\newpage
\appendix
\section{Further discussion on Centered Chebyshev Balls}\label{app:chebyshev}
\subsection{Centered Chebyshev Ball of a Single Polytope} 
Here we present a more thorough discussion of the case of computing a centered Chebyshev ball for a single polytope, as well as general formulations for projections onto polytopes under various $\ell_p$ norms. 

Consider a polytope $\mathcal{P}: = \{x \; | \; Ax \leq b\}$. The problem of finding the the centered Chebyshev ball under an $\ell_p$ norm can written as the following optimization problem:
\begin{align} \label{eq:chebyshev-center}
\max &\quad t  \\
    \text{s.t.} \quad&  \sup_{||v|| \leq 1} a_i^T(x_0 + tv) \leq b_i \; \; \; \forall i\in [m]. \notag \\ 
    \notag
\end{align}

As a brief aside, note that if the center $x_0$ is not fixed, it is introduced as a variable in the optimization, and in general this requires a linear program to be solved. With a fixed center, each constraint can be rewritten as $t||a_i||_* \leq b_i - a_i^Tx_0$, for $||\cdot||_*$ being the dual norm of  $||\cdot||$. Thus the program becomes 
\begin{align} \label{eq:chebyshev-center-simple}
\max &\quad t  \\
    \text{s.t.} \quad  &t  \leq \frac{b_i - a_i^Tx_0}{||a_i||_*} \; \; \; \forall i \in [m]\notag \\ 
    \notag
\end{align}

% \begin{figure*}[h!]\label{fig:polytope-examples}
%     \centering
%     \subfigure{\includesvg[width=0.6\columnwidth,height=0.4\textheight]{images/svgs/arbit_union.svg}}
%     \hfill 
%     \subfigure{\includegraphics[width=0.6\columnwidth,height=0.6\textheight,             keepaspectratio]{images/glued_polytopes.pdf}}
%     \hfill
%     \subfigure{\includegraphics[width=0.6\columnwidth,height=0.6\textheight,keepaspectratio]{images/perf_glued_polytopes.pdf}}
%     \caption{Three examples of polytopes with nonempty intersection. \emph{(Left)} is neither glued nor perfectly glued, \emph{(Middle)} is glued but not perfectly glued, \emph{(Right)} is perfectly glued}
% \end{figure*} 

\begin{figure}
    \centering
    \includegraphics[scale=0.66]{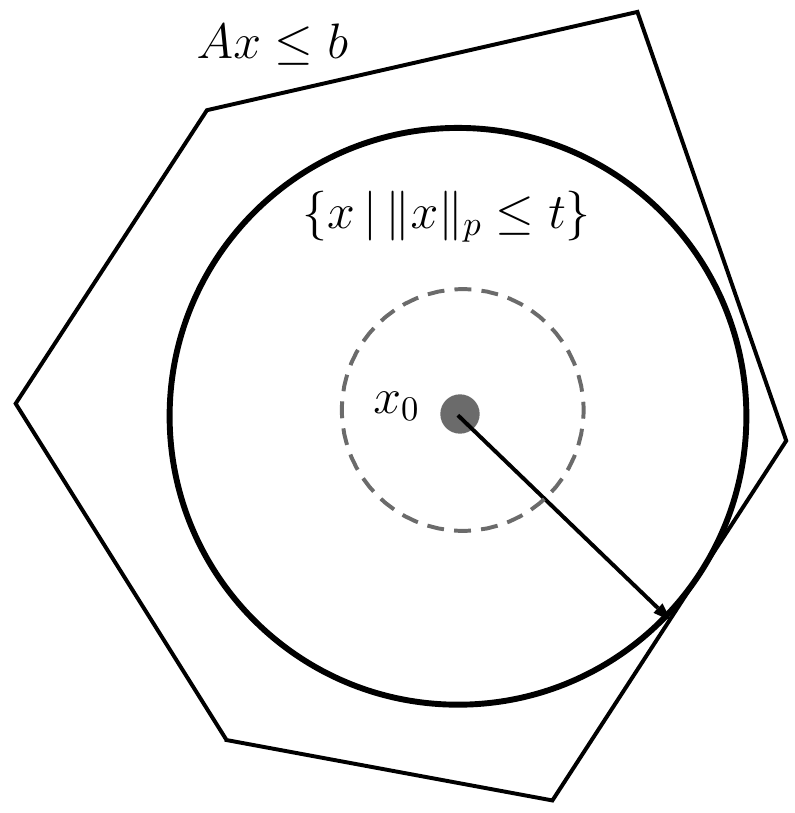}
    \includegraphics[scale=0.66]{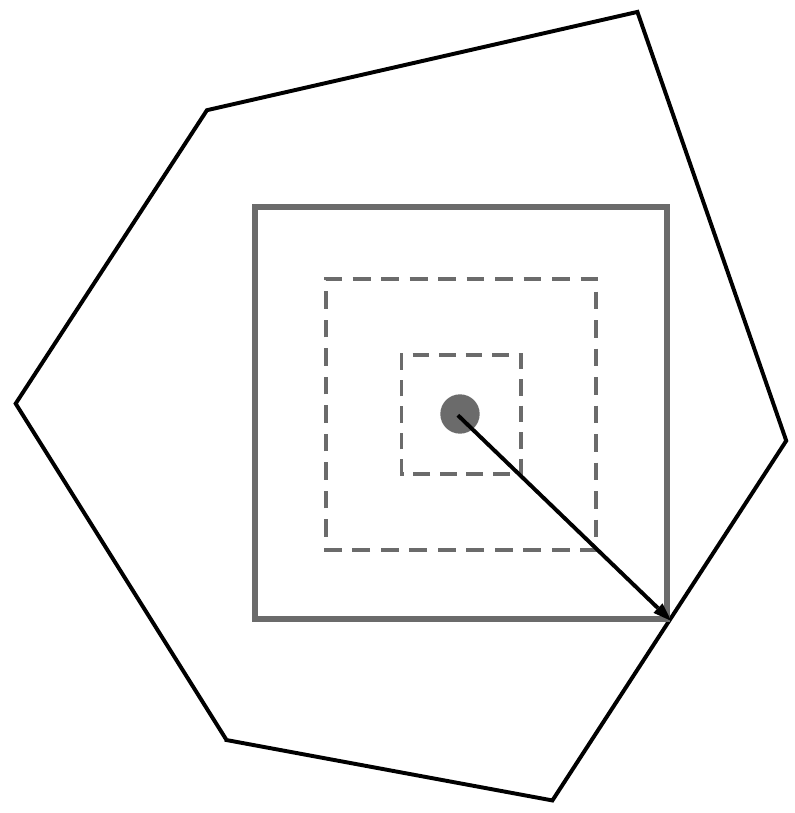}
    \caption{Pictorial examples of computing the centered Chebyshev ball for the $\ell_2, \ell_\infty$ norms.}
    \label{fig:chebyshev-single}
\end{figure}
which can be solved as taking the minimum over all $i$ of $\frac{b_i-a_i^Tx_0}{||a_i||_*}$. Understanding what is occurring here will be central to our theorems, so we decompose the above problem. Note that each constraint $a_i^Tx \leq b_i$ defines a hyperplane, and $\frac{b_i-a_i^Tx_0}{||a_i||_*}$ denotes the $\ell_p$ distance from $x_0$ to that hyperplane. In other words, this provides a lower bound on the $\ell_p$ distance to the facet of $\mathcal{P}$ generated by constraint $i$ being tight. However, the minimum of these lower bounds must be tight for the constraint that bounds the centered Chebyshev ball and therefore it suffices to compute this lower bound everywhere. Finding the centered Chebyshev ball is equivalent to finding the minimum distance to each component of the boundary of $\mathcal{P}$. An alternative, albeit more laborious, solution to finding the centered Chebyshev ball is to consider the minimal $\ell_p$ distance to $\delta \mathcal{P}$ directly by computing the $\ell_p$ distance to each facet of $\mathcal{P}$ and taking the minimum. 

\subsection{Projections onto Polytopes}
As our algorithm heavily relies on the ability to efficiently compute the projection to a facet, which is itself a polytope, we describe the general formulation here. Formally, provided a polytope $\mathcal{P}:=\{x \; | \; Ax \leq b\}$ and a point $x_0\notin \mathcal{P}$, we wish to compute $\min_{x \in \mathcal{P}} ||x_0 - x||_p$. To compute this exactly, we decompose $x$ in the minimum to $x_0 + v$ and optimize over $v$. This is a linear program in the $\ell_1$ case, and a linearly constrained quadratic program in the $\ell_2$ case. For $\ell_\infty$ we introduce $n+1$ auxiliary variables and $2n$ additional constraints: 
\begin{align} \label{eq:ellinf-lp}
\min_{t, v} \quad t & \\
    \text{s.t.} \quad A(x_0 + v) \leq b \notag \\ 
    t & \geq 0 \notag \\ 
    -t \cdot \mathbbm{1} & \leq v \leq t \cdot \mathbbm{1} \notag \\ 
\end{align}

In the $\ell_1$ case, we require $2n$ auxiliary variables:
\begin{align} \label{eq:ellone-lp}
\min_{t, v} \quad \sum t_i & \\
    \text{s.t.} \quad A(x_0 + v) &\leq b \notag \\ 
    t & \geq 0 \notag \\ 
    -t_i &\leq v_i \leq t_i  \notag \;\;\;\; \forall i\in [n]\\ 
\end{align}

And in the case of the $\ell_2$-norm, the objective becomes quadratic while the constraints remain linear:
\begin{align} \label{eq:ell2-lcqp}
\min_{v} \quad \sum_i v_i^2 & \\
    \text{s.t.} \quad A(x_0 + v) &\leq b \notag \\ 
    \notag 
\end{align}

In both cases there exist polynomial time algorithms to solve these exactly and efficient implementations to solve these quickly in practice \cite{Karmarkar1984-ah, Ye1989-ri}.
Thus, we can solve the problem of finding the \emph{centered Chebyshev ball} of a single polytope by solving the minimum distance to each facet, each formulated as an efficient LP or QP. 

\subsection{Notes on Hyperplanes}
Additionally we mention some cheap tricks that are useful when the polytopes of interest are $(n-1)$-dimensional. This implies that they lie entirely in some $(n-1)$-dimensional affine subspace, say $\mathcal{P} \subseteq H$ for $H:=\{x\; | \; a^Tx=b\}$. To compute a lower-bound on the projection of $x_0$ onto $\mathcal{P}$, one can compute the projection of $x_0$ onto $H$, which can be done in linear time in the dimension:
\begin{align} \label{eq:hyperplane-proj}
\min_{t, v} \quad t & \\
    \text{s.t.} \quad a^t(x_0 + v) \leq b \notag \\ 
    ||v|| &=1
    \notag 
\end{align}
Reformulating the first constraint, one has $t=\frac{b-a^Tx_0}{a^Tv}$. This quantity is minimzed when $a^Tv$ is maximized, and $\max_{||v||=1} a^Tv$ is, by definition, the dual norm $||\cdot||_*$ of $a$. Hence the projection onto a hyperplane is $\frac{b-a^Tx_0}{||a||_*}$.

In section \ref{sec:speedups}, we mention that it is efficient to compute the feasibility of $H\cap B$ for $B$ being some hyperbox defined by coordinate lower and upper bound vectors, $l$ and $u$ as $\{x \; | \; l \leq x \leq u\}$. We can decompose $a$ into its nonnegative components $a^+$ and its negative components $a^-$ such that $H = \{x\;|\; (a^++a^-)^Tx = b\}$. Then, by interval arithmetic, we notice that the set $\{c \; | \; a^Tx \;\;\; \forall x \in B\}$ is the interval $[(a^+)^Tl + (a^-)^Tu, (a^-)^Tl + (a^+)^Tu]$. Iff $b$ is contained in this interval, then the intersection $H\cap B$ is nonempty.

\section{Proofs about Boundary Decompositions}\label{app:boundary}
Here we prove our theorems about efficient boundary decomposotions of polyhedral complices. First we state a hardness result that claims that for arbitrary nonconvex polytopes, the size of the smallest convex decomposition of the boundary may be exponential in the dimension. 

%%%%%%%%%%%%%%%%%%%%%%%%%%%%%%%%%%%%%%%%%%%%%%%%%%%%%%%%%%%%%%%
%                                                             %
%                       HYPERPLANE ARRANGEMENT                %
%                                                             %
%%%%%%%%%%%%%%%%%%%%%%%%%%%%%%%%%%%%%%%%%%%%%%%%%%%%%%%%%%%%%%%

\begin{theorem}\label{thm:hyp_arrang}
There exists a collection of polytopes $\mathscr{P}=\{\mathcal{P}_1, \dots \mathcal{P}_k\}$ each with dimension $n$ and 2 constraints (for a total of $2k$ constraints) such that the boundary of $\bigcup_{i\in [k]} \mathcal{P}_i$ is composed of $\Omega(k^{n-1})$ convex components.
\end{theorem}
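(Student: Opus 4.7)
The plan is to exhibit an explicit family of ``slab'' polytopes in $\mathbb{R}^n$ whose union has $\Omega(k^{n-1})$ maximal convex boundary components. I would choose unit vectors $a_1, \dots, a_k \in \mathbb{R}^n$ in general position (any $n$ of them linearly independent, and no pair parallel) and a small $\epsilon > 0$, and set
\[
\mathcal{P}_i := \{ x \in \mathbb{R}^n : -\epsilon \leq a_i^T x \leq \epsilon \},
\]
which is cut out by exactly two halfspace inequalities. To enforce boundedness one may intersect each $\mathcal{P}_i$ with a single large hypercube containing all pairwise slab intersections; this adds only a constant number of extra constraints per polytope and does not affect the asymptotic complexity.

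Next I would show that any $(n-1)$-dimensional maximal convex subset of the boundary of $\bigcup_i \mathcal{P}_i$ is contained in one of the $2k$ bounding hyperplanes $H_i^\pm := \{x : a_i^T x = \pm\epsilon\}$, because any boundary point must satisfy $|a_i^T x| = \epsilon$ for some $i$ (so it lies on some $H_i^\pm$) and $|a_j^T x| \geq \epsilon$ for every $j \neq i$ (so it avoids the interior of every other slab). Components lying on distinct hyperplanes cannot be merged into a single convex set, so it suffices to lower-bound the number of convex components within a single $H_i^+$ and then multiply by $k$. Restricting the other $2(k-1)$ bounding hyperplanes to $H_i^+$ yields an arrangement of affine $(n-2)$-flats in $H_i^+ \cong \mathbb{R}^{n-1}$ with $\Theta(k^{n-1})$ open cells; inside each cell the signs of $a_j^T x - \epsilon$ and $a_j^T x + \epsilon$ are constant for every $j \neq i$, so each open cell is either entirely contained in some $\mathrm{int}(\mathcal{P}_j)$ or entirely outside every $\mathcal{P}_j$, and only the latter cells appear as maximal convex boundary components lying in $H_i^+$.

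The main obstacle is to lower-bound the number of ``outside every other slab'' cells. My plan is to restrict attention to the unbounded cells of this arrangement, of which any arrangement of $\Theta(k)$ hyperplanes in general position in $\mathbb{R}^{n-1}$ has $\Theta(k^{n-2})$. For any unbounded cell $C$ with an interior ray $\{x_0 + tv : t \geq 0\} \subseteq C$, general position of the $a_j$'s forces $a_j^T v \neq 0$ for every $j \neq i$, so $a_j^T(x_0 + tv) \to \pm\infty$; combined with the constant-sign property on $C$, this forces $|a_j^T x| > \epsilon$ throughout $C$, meaning $C$ is entirely outside every other slab. Hence each unbounded cell is a distinct convex boundary component, and summing the $\Omega(k^{n-2})$ components per hyperplane over the $k$ choices of $i$ yields the claimed $\Omega(k^{n-1})$ lower bound on the convex complexity of the boundary.
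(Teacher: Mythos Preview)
Your construction and high-level strategy are reasonable, but the core step has a real gap. You assert that for an unbounded cell $C\subset H_i^+$ with interior ray direction $v$, general position of the $a_j$ forces $a_j^T v\neq 0$ for every $j\neq i$. This fails because $v$ is not an a priori fixed vector: it is determined by the cell $C$, which in turn depends on the $a_j$. Concretely, take $n=3$ and $a_1=e_1$, $a_2=e_2$, $a_3=e_3$ (any $n$ of them are independent, as you require). On $H_1^+=\{x_1=\epsilon\}$ the cell $C=\{x_1=\epsilon,\;x_2>\epsilon,\;|x_3|<\epsilon\}$ is an unbounded cell of the induced arrangement, its unique recession direction is $v=e_2$, and $a_3^T v=0$; indeed $C\subset\mathrm{int}(\mathcal P_3)$, so $C$ contributes nothing to the boundary. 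This is not a non-generic accident: any unbounded cell sandwiched between the two parallel hyperplanes $H_j^+\cap H_i^+$ and $H_j^-\cap H_i^+$ necessarily has its recession cone contained in $\{v:a_j^T v=0\}$, and such cells exist for every $j$ under any small perturbation of the normals.

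The paper avoids this difficulty by breaking the symmetry of your construction. It fixes one distinguished slab $\mathcal P_1=\{0\le x_1\le 1\}$ and makes the remaining $k-1$ slabs thin neighborhoods of a \emph{generic affine} hyperplane arrangement living in the orthogonal copy of $\mathbb R^{n-1}$ (their normals lie in $e_1^\perp$ and their offsets $b_i$ are unrelated). On the single facet $\{x_1=0\}$ one then sees a generic arrangement of $k-1$ hyperplanes with $\Theta(k^{n-1})$ regions, and by choosing the slab thickness smaller than the minimum feature size of that arrangement every region retains a surviving boundary component; no summation over facets is needed. The obstruction in your version is precisely that all of your slabs pass through the origin, so on any $H_i^+$ every other $\mathcal P_j$ contains the common point $\epsilon a_i$, and many cells of the induced arrangement are swallowed by some $\mathcal P_j$. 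Your approach could be repaired either by offsetting the slabs so they no longer share a point (which essentially reproduces the paper's picture on one facet) or by supplying a separate lower bound on the number of unbounded cells that avoid \emph{every} slab; as written, that lower bound is not established.
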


\begin{figure}
    \centering
    \includegraphics[scale=0.5]{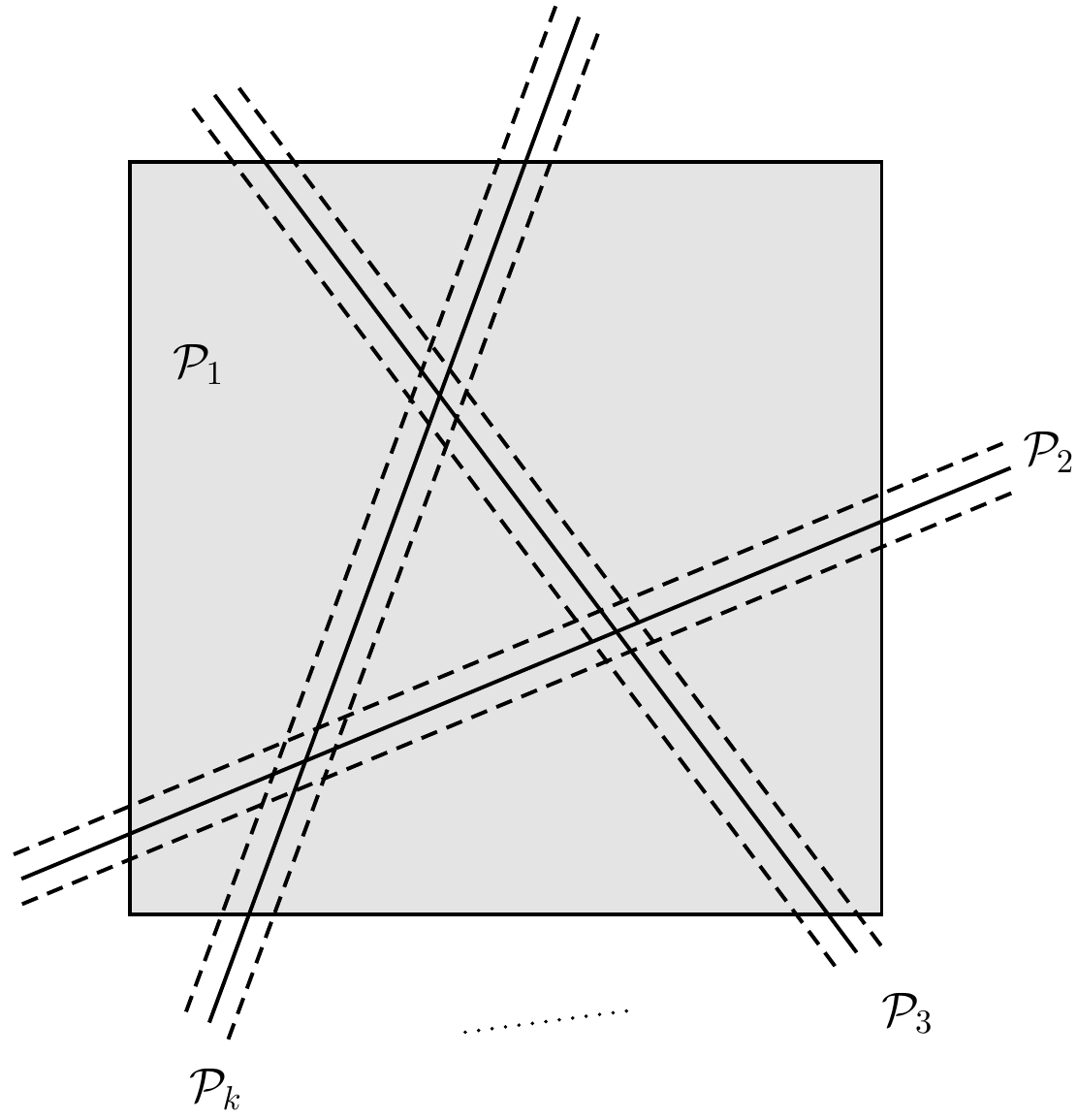}
    \caption{Pictorial aid for Theorem \ref{thm:hyp_arrang}}
    \label{fig:hyperplane-arrangement}
\end{figure}
\begin{proof}
We prove this by construction. We rely crucially on a result from hyperplane arrangements. It is a classical result that given a choice in placement of $m$ hyperplanes in $\mathbb{R}^n$, 
the maximum number of regions that can be generated is given, in closed form as $R(n, m) := 1 + \sum_{j=1}^n \binom{m}{j}$ \cite{Stanley2004-qv}.
Leveraging this, we construct our polytopes. Let $\mathcal{P}_1 =\{x \; | \; 0 \leq x_1 \leq 1\}$ such that it has exactly two facets, where each facet is an $(n-1)$ flat. Let $\mathcal{A}$ be an arrangement of $k-1$ hyperplanes in $\mathbb{R}^{n-1}$ that generates a maximal number of regions. Each one of the regions generated by $\mathcal{A}$ is certainly a polytope contained in $\mathbb{R}^{n-1}$, so since there are finitely many polytopes each with finitely many vertices, let $\epsilon$ be the minimal distance between any two vertices within the same polytope. Let the $i^{th}$ hyperplane in $\mathcal{A}$ be defined as $\{x \in \mathbb{R}^{n-1} \; | \; a_i^Tx = b_i\}$. Then we can define $\mathcal{P}_{i+1} := \{x\in \mathbb{R}^n \; | \; b_i - \epsilon/3 \leq (0, a_i)^T x \leq b_i+\epsilon/3\}$. Thus the $(n-1)$-flat that describes each facet of $\mathcal{P}_1$ remains broken up into $R(n-1,k-1) = \Omega(k^{n-1})$ disjoint convex components. Each of these exists on the boundary of the union of $\mathscr{P}$.
\end{proof}

% \begin{figure}[htbp]
%     \centering
%     \includesvg[width=0.8\linewidth,height=0.8\textheight]{images/svgs/hyperplane_arrangement.svg}
%     \caption{Pictorial reference for Theorem \ref{thm:hyp_arrang}. Hyperplanes in $\mathbb{R}^{n-1}$ are maximally arranged to create $\Omega(k^{n-1})$ partioned regions of a $(n-1)$-flat of polytope $\mathcal{P}_1$. These hyperplanes are then each given a small amount of freedom in the remaining dimension of $\mathcal{R}^n$ s.t. all regions are preserved.}
% \end{figure}

%%%%%%%%%%%%%%%%%%%%%%%%%%%%%%%%%%%%%%%%%%%%%%%%%%%%%%%%%%%%%%%
%                                                             %
%                       PC BOUNDARY DECOMPOSITION             %
%                                                             %
%%%%%%%%%%%%%%%%%%%%%%%%%%%%%%%%%%%%%%%%%%%%%%%%%%%%%%%%%%%%%%%

Now we can restate and prove our theorems regarding the efficient boundary decompositions of polyhedral complices. 

\begin{customthm}{\ref{thm:main_algo}}
Given a polyhedral complex, $\mathscr{P} =\{\mathcal{P}_1, \dots \mathcal{P}_k\}$, where $\mathcal{P}_i$ is defined as the intersection of $m_i$ closed halfspaces. Let $M=\sum_i m_i$, and let $x_0$ be a point contained by at least one such $\mathcal{P}_i$. Then the boundary of $\bigcup_{i \in [k]} \mathcal{P}_i$ is represented by at most $M$ $(n-1)$-dimensional polytopes. There exists an algorithm that can compute this boundary in $\mathcal{O}(poly(n, M, k))$ time. 
\end{customthm}

\begin{proof} 
Let $Z=\bigcup_{i\in [k]} \mathcal{P}_i$. Let $F_{i,j}$ refer to the $j^{th}$ facet of $\mathcal{P}_i$, and let $\mathcal{F}_i$ be the set of facets of $\mathcal{P}_i$ that are not facets of any other $\mathcal{P}_j$. Then, letting $T = \bigcup_{i \in [k]} \mathcal{F}_i$. We claim that the boundary of $Z$ is exactly $T$. 

Without loss of generality, assume that $Z$ is a single connected component, in the topological sense. If $Z$ were multiple connected components, then we could handle each of them in turn. To demonstrate that $T$ is the boundary of $Z$ we need to show that for any $x\in T$ that points \ref{item:inbound}, \ref{item:outbound} of definition \ref{def:boundary} hold, and that condition \ref{item:outbound} fails for any point $y \in Z \setminus T$. 

To demonstrate point \ref{item:inbound} above, we note that $x\in\mathcal{P}_i$ for at least one $\mathcal{P}_i$. By assumption each $\mathcal{P}_i$ has a nonempty interior, and thus contains some point $y\in \mathcal{P}_i$ for which a neighborhood $N(y) \subset \mathcal{P}_i$. Thus if $\mathcal{P}_i$ is given as an $H$-polytope of the form $\{x \; | \; Ax \leq b\}$, then $Ay < b$. Since $\mathcal{P}_i$ is convex, then any convex combination between $x,y$ is contained in $\mathcal{P}_i$, and in fact for all $\lambda \in [0, 1)$, $A(\lambda x + (1-\lambda) y) < b$. Certainly any point $z$ such that $Az<b$ has a neighborhood $N(z)$ contained in $\mathcal{P}_1$.

Proving that $x\in T$ satisfies point \ref{item:outbound} is more complicated. Let $\mathcal{Q}$ be a facet containing $x$, and let $\mathcal{P}_i$ be a polytope containing $\mathcal{Q}$. Let $H$ be the hyperplane containing $\mathcal{Q}$. Then for all $j \neq i$, $\mathcal{P}_i \cap \mathcal{P}_j$ is either the empty set or resides in a face of $\mathcal{P}_j$ of dimension at most $(n-2)$. A standard result about polytopes states that if $\mathcal{Q}$ is an $(n-1)$ dimensional polytope, it can be defined by the set $\{x \; | \; A^=x=b^= \land A^-x\leq b^-\}$ where $A^=$ has rank 1. Additionally there exists a point $y\in \mathcal{Q}$ such that $A^-y < b$ \cite{Schrijver1998-em}. Then every point along the open line segment $(x, y)$ is contained in the relative interior of $\mathcal{Q}$, and by definition cannot be contained in any face of $\mathcal{P}_j$ for $j\neq i$. Further, since the relative interior of $\mathcal{Q}$ is open, every point $w$ along $(x,y)$ is contained in a neighborhood $N(w)$, with restriction to $H$ $N(w)_{|H}$. Then certainly $N(w)_{|H} \subseteq relInt(\mathcal{Q}) \subset \mathcal{Q}$ , which implies that $N(w)_{|H}$ is disjoint from $\cup_{j\neq i}\mathcal{P}_j$. 

Let $H^-$ be the closed halfspace defined by $H$ containing $\mathcal{P}_i$, then $N(w)\cap (H^-)^c$ is both open and disjoint from $\mathcal{P}_i$ in addition to being disjoint from $\mathcal{P}_j$ for all $j\neq i$. Let $c$ be a point in $N(w)\cap Z^c$, such that the open line segment between $(w,c)$ is contained in $N(w)\cap Z^c$. We now restrict our attention to the 2-dimensional linear subspace of $\mathbb{R}^n$ containing $x,w,c$, denoted as $V$. Each $\mathcal{P}_{j|V}$ is either the emptyset or a polytope containing $x$. Let $\mathcal{U}_{j|V}$ be the set of these 2-d restricted polytopes containing $x$, and note that each $\mathcal{U}_{j|V}$ intersects with $\mathcal{P}_{i|V}$ only at $x$. Because each element of $\mathcal{U}_{|V}$ intersects with $\mathcal{P}_{i|V}$ only at $x$, there must a hyperplane $H_j$, (line in $V$) passing through $x$ separating each element of $\mathcal{U}_{|V}$ and $c$. Let $H_j^+$ be the closed halfspace defined by $H_j$ containing $c$. Then $\cap H_j$ defines a polytope $\mathcal{S}$ that only intersects with $\mathcal{P_i}$ at $x$. The line segment between $(x,c)$ lies inside $\mathcal{S}$ and thus does not intersect any $\mathcal{P}_{j|V}$ for $j \neq i$. $(x,c)$ also lies strictly on one side of the hyperplane $H$ that $\mathcal{Q}$ resides in, and thus every point along $(x,c)$ is not contained in $\mathcal{P}_i$. Hence, $(x, c)$ is not contained in $Z$, as desired. 

Finally, to show that there is no point $y$ in the boundary of $Z$ that not contained in $T$. It suffices to show that $Z\setminus T$ is open, as if this were the case, then any $y\in Z\setminus T$ would be contained in a neighborhood $N(y) \subseteq Z\setminus T$ and thus fail to meet condition \ref{item:outbound} of the definition of the boundary. Let $x\in Z\setminus T$. Then $x$ is contained in the interior of some $\mathcal{P}_i$ or it is contained in a facet contained in both $\mathcal{P}_i, \mathcal{P}_j$, for some $i,j$. This follows from the fact that $x$ either is contained in a facet of some $\mathcal{P}_i$ or not. If not, $x$ is strictly in the interior of some $\mathcal{P}_i$ and is contained in a neighborhood $N(x) \subset \mathcal{P}_i$. If so, then $x$ needs to be contained in a facet, $F_{i,j}$ of $\mathcal{P}_i$ and $\mathcal{P}_j$, else $x\in T$. Either $x$ is contained in the relative interior of $F_{i,j}$ or not. If so, then a neighborhood of $x$, $N(x)$, is bisected by $F_{i,j}$, where each half is contained in either $\mathcal{P}_i$ or $\mathcal{P}_j$. If not, then $x$ needs to be contained in a facet of some $\mathcal{P}_m$, for $m\neq i,j$, because it needs to be contained in some other facet of $\mathcal{P}_i$. This other facet needs to be a facet of some $\mathcal{P}_m$ because otherwise it would be contained in $T$ and certainly $\mathcal{P}_i\cap\mathcal{P}_j = F_{i,j}$ such that $m\neq j$. We repeat this process until we have enumerated all facets containing $x$, of which there are at most $\binom{k}{2}$. There are then at most $k$ polytopes containing $N(x)$, and their union contains $N(x)$. Thus $Z\setminus T$ is open.

To demonstrate that $T$ is represented by at most $M$ polytopes and that $T$ can be computed in $\mathcal{O}(poly(n, M, k))$ time, note that each polytope $\mathcal{P}_i$ has at most $m_i$ facets, and not all of these are included in $T$. Thus the number of facets, and hence polytopes, that define $T$ is at most $\sum m_i= M$. Enumerating each of these polytopes can be done in time linear in $M$. To compare if two facets are equivalent, one can find a point $y \in F_{i,j}$ such that it is in the relative interior of $F_{i,j}$. Such a point can be found in polynomial time using a linear program. Since $\mathscr{P}$ is a polyhedral complex, if such a $y$ is contained in $F_{i,j}$ and $F_{i', j'}$ then $F_{i,j}=F_{i',j'}$. There are at most $\binom{M}{2}$ facets, so $T$ can be determined in time polynomial in $n, M, k$.

\end{proof}

\section{Proofs of Correctness for GeoCert}\label{app:potential_proofs}

In this section we expand upon the graph theoretic interpretation of GeoCert and prove its correctness. Recall the setup: given a polyhedral complex $\mathscr{P}$, which can be viewed as a bipartite graph of $n$-dimensional polytopes and their $(n-1)$-dimensional faces, some of which are labeled as `boundary' facets, our goal is to return the boundary facet which admits minimal distance to a fixed point $x_0$. In our primary discussion we replaced 'distance' with a `potential' function. Formally, we let our pointwise potential to be some function $\phi: \mathbb{R}^n \rightarrow \mathbb{R}$, and the facetwise potential, $\Phi: \mathcal{P}(\mathbb{R}^n) \rightarrow (\mathbb{R} \cup \{+\infty\})$ to be defined as 
\begin{equation}
    \Phi(\mathcal{F}) = \begin{cases} 
        +\infty, & \text{if } \mathcal{F} = \emptyset \\ 
        \min\limits_{y\in \mathcal{F}} \phi(y), & \text{otherwise}
    \end{cases}
\end{equation}

Certainly, letting $\phi(y):= ||y - x_0||$ and finding the boundary facet with minimal potential $\Phi$ is equivalent to finding the facet with minimal distance to $x_0$. However, this choice of $\phi$ is not the only valid one for which GeoCert will provide the corect answer to the centered Chebyshev ball problem. To this end, we provide a sufficient condition on a pointwise potential function $\phi$ such that GeoCert will still provide the correct answer. We can then demonstrate that any potential function satisfying this property will cause GeoCert to return the correct answer. Finally we can show that the $\ell_p$-distance potential satisfies these properties, and that the lipschitz potential described in Section \ref{sec:speedups} also satisfies this property. 

\begin{definition}\label{def:ray-monotonic}
    Given a potential function $\phi$ defined only on the set of points contained in a polyhedral complex $\mathscr{P}$, we let $\eta_v(t) := \phi(x_0 + t\cdot v)$ for any vector $v$ and any positive scalar $t > 0$. Then we say that $\phi$ is \textbf{ray monotonic} if for every $v, t > 0$, $\dfrac{\delta \eta}{\delta t}(t) \geq 0$. 
\end{definition}

With this definition in hand, we can prove a structural invariant of the operation of GeoCert that will directly prove the claim of correctness. 
\begin{lemma}\label{lemma:pq-monotonic}
    For any polyhedral complex $\mathscr{P}$ point $x_0$, and ray-monotonic potential $\phi$, let $\mathcal{F}_i$ be the facet popped at the $i^{th}$ iteration of GeoCert. Then for all $i < j$, $\Phi(\mathcal{F}_i) \leq \Phi(\mathcal{F}_j)$. 
\end{lemma}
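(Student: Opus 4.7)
The plan is to reduce the claim to showing that, at every iteration $i$, every facet newly pushed to the priority queue has facetwise potential at least $\Phi(\mathcal{F}_i)$. Combined with the standard priority-queue property that the next pop is always the current minimum, this yields the lemma by induction on $i$: elements already in $Q$ before the pop have potential at least $\Phi(\mathcal{F}_i)$ by minimality of $\mathcal{F}_i$, the elements pushed during iteration $i$ inherit the same lower bound, and so the next pop satisfies $\Phi(\mathcal{F}_{i+1}) \geq \Phi(\mathcal{F}_i)$.

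A key auxiliary invariant I would maintain, by induction on iterations, is the following: at the start of each iteration, every facet that is a facet of some polytope in $C$ and also of some polytope not in $C$ is currently present in $Q$. Initially $C = \{\mathcal{P}(x_0)\}$ and all facets of $\mathcal{P}(x_0)$ are pushed, so the invariant holds. At a generic iteration, the popped $\mathcal{F}_i$ becomes an interior facet of the updated explored set $C \cup \{\mathcal{P}'\}$, where $\mathcal{P}'$ is the newly added polytope, and all facets of $\mathcal{P}'$ are pushed to $Q$; these operations preserve the invariant, using the fact from the graph-theoretic formulation that each facet has at most two $n$-dimensional polytope neighbors.

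The heart of the monotonicity argument concerns a single newly pushed facet $\mathcal{F}'$ of $\mathcal{P}'$. Let $z^* \in \mathcal{F}'$ minimize $\phi$ over $\mathcal{F}'$ and consider the segment from $x_0$ to $z^*$. Assume first that the relative interior of $\mathcal{F}'$ lies outside $C$; then the segment must exit $C$ and cross some facet $\mathcal{F}''$ which, by the invariant, belongs to $Q$ at the moment $\mathcal{F}_i$ was popped. Letting $w$ denote the exit point, we obtain $\Phi(\mathcal{F}_i) \leq \Phi(\mathcal{F}'') \leq \phi(w) \leq \phi(z^*) = \Phi(\mathcal{F}')$, where the first inequality is minimality of $\mathcal{F}_i$ in $Q$, the second is the definition of $\Phi$, and the third is ray monotonicity of $\phi$ along the ray from $x_0$ through $z^*$. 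The complementary case, in which the other polytope neighbor of $\mathcal{F}'$ was already in $C$, is immediate: then $\mathcal{F}'$ was itself a boundary facet of $C$ and by the invariant was already in $Q$, so $\Phi(\mathcal{F}') \geq \Phi(\mathcal{F}_i)$.

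I expect the main subtlety to be the treatment of minimizers $z^*$ that sit on lower-dimensional faces of $\mathcal{F}'$ shared with polytopes in $C$, even though the relative interior of $\mathcal{F}'$ lies outside $C$. This is handled by a limiting argument: approximate $z^*$ by a sequence of points in the relative interior of $\mathcal{F}'$, apply the exit argument above to each, and pass to the limit using continuity of $\phi$ along rays, which follows from the differentiability built into Definition \ref{def:ray-monotonic}. Once this and the book-keeping of the explored-set invariant are pinned down, the remainder of the argument is a clean induction.
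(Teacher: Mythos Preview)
Your proposal is correct and follows essentially the same approach as the paper: both argue by induction that any newly pushed facet has potential at least $\Phi(\mathcal{F}_i)$ by taking the minimizer of the new facet, drawing the segment from $x_0$ to it, noting that the segment must cross a facet already in the queue (on the boundary of the explored region $C$), and applying ray monotonicity. Your version is somewhat more careful than the paper's in that you explicitly state and maintain the invariant about boundary-of-$C$ facets lying in $Q$, and you address the edge case where the minimizer $z^*$ lands on a lower-dimensional face of $\mathcal{F}'$ via a limiting argument, whereas the paper's proof simply asserts that $y_{\min}$ is not in the interior of any seen polytope and leaves this subtlety implicit.
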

\begin{proof}
We proceed by induction. In the base case we only consider the first and second iteration. Supposing without loss of generality that $x_0$ is contained in exactly one polytope $\mathcal{P}\in \mathscr{P}$. Then the initial set of facets added to the priority queue is exactly the set of facets of $\mathcal{P}$, which we denote as $\{\mathcal{F}_\mathcal{P}(1), \mathcal{F}_\mathcal{P}(2), \dots, \mathcal{F}_\mathcal{P}(k)\}$ which are ordered by potential, without loss of generality. 

At the first iteration, $\mathcal{F}_\mathcal{P}(1)$ is popped, and a new polytope $\mathcal{S}$ is opened. The set of facets of added to the priority queue $Q$, also ordered by potential, is $\{\mathcal{F}_\mathcal{S}(1), \mathcal{F}_\mathcal{S}(2), \dots, \mathcal{F}_\mathcal{S}(k)\}$. We would like to show that whichever facet $\mathcal{F}_2$, is popped at iteration 2 must have that $\Phi(\mathcal{F}_2) \geq \Phi(\mathcal{F}_\mathcal{P}(1))$. As, by definition, for all $i > 1$,  $\Phi(\mathcal{F}_\mathcal{P}(1)) \leq \Phi(\mathcal{F}_\mathcal{P}(i))$ it suffices to show that any facet $\mathcal{F}_\mathcal{S}$ of $\mathcal{S}$ added to the priority queue must have $\Phi(\mathcal{F}_\mathcal{P}(1)) \leq \Phi(\mathcal{F}_\mathcal{S})$. For any facet of $\mathcal{F}_\mathcal{S}$, we have that $\Phi(\mathcal{F}_\mathcal{S}) := \min\limits_{y \in \mathcal{F}_\mathcal{S}(1))} \phi(y)$. Letting $y_{min}$ be an element of the argmin of this minimum, we utilize the ray-monotonic property of $\phi$. We let $v = y_{min} - x_0$ and note that $\Phi(\mathcal{F}_\mathcal{S})=\phi(x_0 + v)$. As $y_{min}$ is not contained in the interior of $\mathcal{P}$, there must exist some $t \in [0, 1]$ such that $x_0 + tv$ lies in a facet of $\mathcal{P}$. By definition $\Phi(\mathcal{F}_\mathcal{P}(1)) \leq \phi(x_0 + tv) \leq \phi(x_0 + v)$, where the first inequality comes from the definition of $\Phi$, and the second inequality comes from the ray-monotonicity of $\phi$. This concludes the base case. 

The inductive step follows by a similar argument. Suppose the claim holds up to iteration $i-1$. At the $i^{th}$ iteration we pop facet $\mathcal{F}_i$, open up a previously-unseen polytope $\mathcal{S}$, and add a set of facets each corresponding to another unseen polytope: hence no potential facet added has been previously added to the priority queue. Again, considering any new facet $\mathcal{F}_\mathcal{S}$ and the argmin of its potential 
$$y_{min}\in \argmin\limits_{y \in \mathcal{F}_\mathcal{S}} \phi(y)$$
we note that $y_{min}$ is not contained in the interior of any of the set of seen polytopes $C$. Then again letting $y_{min} = x_0 + v$, there exists some $t\in (0, 1]$ such that $x_0+tv$ lies in some facet $\mathcal{G}$ that is contained in the priority queue at iteration $(i-1)$. Since $\Phi(\mathcal{F}_{(i-1)}) \leq \Phi(\mathcal{G}) \leq \phi(y_{min}) = \Phi(\mathcal{F}_\mathcal{S})$, we maintain our structural invariant and the proof is complete.
\end{proof}

\begin{theorem}\label{thm:geocert-correct}
    For a fixed polyhedral complex $\mathscr{P}$, a fixed input point $x_0$ and a potential function $\phi$ that is ray-monotonic, GeoCert returns a boundary facet with minimal potential $\Phi$. 
\end{theorem}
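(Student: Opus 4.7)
The plan is to derive Theorem \ref{thm:geocert-correct} directly from the structural invariant of Lemma \ref{lemma:pq-monotonic} by a contradiction argument in the style of Dijkstra's correctness proof. Specifically, let $\mathcal{F}^{*}$ be the facet that GeoCert returns, popped at some iteration $i^{*}$. By the termination condition $\mathcal{F}^{*}$ is a boundary facet, and by Lemma \ref{lemma:pq-monotonic} every facet $\mathcal{F}_i$ popped at an earlier iteration satisfies $\Phi(\mathcal{F}_i)\leq \Phi(\mathcal{F}^{*})$; none of those earlier facets are boundary facets (otherwise the algorithm would have terminated sooner). It therefore suffices to show that no boundary facet $\mathcal{F}'$ with $\Phi(\mathcal{F}')<\Phi(\mathcal{F}^{*})$ can exist.

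Suppose, toward contradiction, such an $\mathcal{F}'$ exists. Choose $y'\in\mathcal{F}'$ with $\phi(y')=\Phi(\mathcal{F}')$ and consider the straight segment from $x_0$ to $y'$. This segment enters a sequence of full-dimensional polytopes $\mathcal{P}_0,\mathcal{P}_1,\ldots,\mathcal{P}_j$ of $\mathscr{P}$, where $\mathcal{P}_0$ is the polytope containing $x_0$ and $\mathcal{P}_j$ is the (unique) polytope having $\mathcal{F}'$ as a facet; consecutive polytopes $\mathcal{P}_{k-1}$ and $\mathcal{P}_k$ meet along an $(n-1)$-face $\mathcal{G}_k$ which, because $\mathscr{P}$ is a polyhedral complex, is a facet of each. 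Write $y'=x_0+v$; ray monotonicity of $\phi$ says $\phi(x_0+tv)$ is nondecreasing in $t\in[0,1]$, so every crossing point $x_0+t_k v\in\mathcal{G}_k$ satisfies $\phi(x_0+t_k v)\leq \phi(y')=\Phi(\mathcal{F}')$, hence $\Phi(\mathcal{G}_k)\leq\Phi(\mathcal{F}')<\Phi(\mathcal{F}^{*})$.

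The proof is then finished by a simple induction on $k$ showing that each $\mathcal{G}_k$, and finally $\mathcal{F}'$ itself, must have been popped strictly before iteration $i^{*}$. Base case: $\mathcal{G}_1$ is a facet of $\mathcal{P}_0$, so it is pushed onto $Q$ during initialization, and by Lemma \ref{lemma:pq-monotonic} any facet with potential strictly less than $\Phi(\mathcal{F}^{*})$ must be popped before $\mathcal{F}^{*}$. Inductive step: once $\mathcal{G}_k$ is popped, the algorithm opens the unseen neighbor $\mathcal{P}_k$ and pushes all its facets, including $\mathcal{G}_{k+1}$; by the same monotonicity argument $\mathcal{G}_{k+1}$ is popped before $\mathcal{F}^{*}$. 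Iterating, $\mathcal{P}_j$ is eventually opened, at which point $\mathcal{F}'$ is pushed with potential $\Phi(\mathcal{F}')<\Phi(\mathcal{F}^{*})$, so $\mathcal{F}'$ is popped before $\mathcal{F}^{*}$. Since $\mathcal{F}'$ is a boundary facet, GeoCert would have terminated there, contradicting the identity of $\mathcal{F}^{*}$ as the returned facet.

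The main technical wrinkle I anticipate is the degenerate case where the segment $[x_0,y']$ fails to cross facets generically, for instance by traveling along a lower-dimensional face or passing through an intersection of several facets at once. This can be handled either by perturbing $y'$ slightly within $\mathcal{F}'$ (preserving $\phi(y')$ up to $\epsilon$ by continuity and shrinking $\epsilon\to 0$) to put the segment in general position, or by observing that whenever the segment lies in a face shared by several top-dimensional polytopes, each of those polytopes is in the two-hop neighborhood of a facet already in $Q$ and therefore gets opened by the same inductive mechanism; either way the potential-monotonic chain of pops still reaches $\mathcal{P}_j$, and the contradiction stands. A brief remark should also cover the case where $x_0$ lies in several polytopes simultaneously, handled by initializing $C$ with any one of them—the argument above still applies because the relevant chain starts from whichever polytope the segment first enters.
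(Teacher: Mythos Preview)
Your proof is correct and follows the same approach as the paper, which simply invokes Lemma~\ref{lemma:pq-monotonic} in a single sentence: since facets are popped in non-decreasing order of $\Phi$, the first boundary facet popped is necessarily one of minimal potential. Your version is considerably more thorough---explicitly ruling out an unreached lower-potential boundary facet via a ray-crossing induction that mirrors the proof of Lemma~\ref{lemma:pq-monotonic} itself---but the paper leaves this reachability argument implicit.
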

\begin{proof}
Leveraging Lemma \ref{lemma:pq-monotonic}, we note that since we only pop facets in non-decreasing order, the first `boundary facet' that is popped will be a boundary facet with minimal potential.
\end{proof}

Now we simply need to show that both choices of potential function discussed satisfy the ray-monotonicity property. 
\begin{corollary}\label{cor:lp-potential}
The distance potential, $\phi_{lp}(y):=||y - x_0||$ satisfies ray-monotonicity and Geocert using this as a potential returns the minimal distance boundary facet.
\end{corollary}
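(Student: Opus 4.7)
The plan is to verify ray-monotonicity of $\phi_{lp}(y) := \|y - x_0\|$ directly from the definition, and then appeal to Theorem \ref{thm:geocert-correct} to conclude. This should be a short computation since the distance potential is just the norm of a translate.

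First I would fix an arbitrary direction $v \in \mathbb{R}^n$ and compute $\eta_v(t) = \phi_{lp}(x_0 + tv) = \|x_0 + tv - x_0\| = \|tv\|$. For $t > 0$, positive homogeneity of any $\ell_p$-norm with $p \geq 1$ gives $\|tv\| = t\|v\|$, so $\eta_v(t) = t \|v\|$. Its derivative with respect to $t$ is $\|v\| \geq 0$, which verifies the condition in Definition \ref{def:ray-monotonic}. Note that this step does not require $\phi_{lp}$ to be differentiable at $x_0$ itself; we only need monotonicity along rays emanating from $x_0$, which follows from non-negativity of the derivative (or, equivalently, directly from $t_1 \|v\| \leq t_2 \|v\|$ whenever $0 < t_1 \leq t_2$).

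With ray-monotonicity established, Theorem \ref{thm:geocert-correct} applies verbatim: GeoCert returns a boundary facet $\mathcal{F}^*$ achieving the minimum value of $\Phi(\mathcal{F}) = \min_{y \in \mathcal{F}} \phi_{lp}(y) = \min_{y \in \mathcal{F}} \|y - x_0\|$ over all boundary facets. Since this quantity is exactly the minimum $\ell_p$-distance from $x_0$ to the facet, minimizing $\Phi$ over boundary facets returns a facet that realizes the minimum distance from $x_0$ to the boundary $T$ of the polyhedral complex, which by the corollary to Theorem \ref{thm:main_algo} equals the size of the centered Chebyshev ball.

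There is no real obstacle here: the only thing to watch is that the definition of ray-monotonicity is stated in differential form, while the natural argument for $\ell_p$ norms is algebraic (positive homogeneity). I would simply remark that $t \mapsto t\|v\|$ is non-decreasing on $(0, \infty)$, which is equivalent to the derivative condition in the almost-everywhere sense and is all that is used in the proof of Lemma \ref{lemma:pq-monotonic} (the lemma only uses the inequality $\phi(x_0 + tv) \leq \phi(x_0 + v)$ for $t \in (0,1]$, which is immediate).
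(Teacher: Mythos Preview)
Your proposal is correct and follows essentially the same approach as the paper: fix $v$, compute $\eta_v(t) = \|(x_0 + tv) - x_0\| = t\|v\|$ for $t>0$, observe that its derivative $\|v\|$ is nonnegative, and invoke Theorem~\ref{thm:geocert-correct}. Your additional remarks about positive homogeneity and the fact that Lemma~\ref{lemma:pq-monotonic} only needs the monotonicity inequality (not differentiability) are valid clarifications but go slightly beyond what the paper records.
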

\begin{proof}
We fix a vector $v$ and any scalar $t>0$. We define 
\begin{equation}
\eta_v(t):= ||(x_0+tv)-x_0|| = |t|\cdot ||v|| = t \cdot||v|| 
\end{equation}
Then $\dfrac{\delta \eta_v}{\delta t} = ||v|| \geq 0$ for all $t >0, v$. 
\end{proof}

\begin{corollary}
For a PLNN $f :\mathbb{R}^n \rightarrow \mathbb{R}^k$ and a point $x_0$, let $i := \argmax_j f_j(x_0)$. Let $DR(x_0) = \{x \; | \; \argmax_j f(z) =i\}$. Define $g_j(x) = f_i(x) - f_j(x)$ for all $j\neq i$, and let $L_j$ be a bound on the $\ell_q$ lipschitz constant of $g_j$: 
\begin{equation}
    |g_j(x) - g_j(y)| \leq L_j ||x - y||_p \; \; \; \;\;\;\forall x, y \in DR(x_0)
\end{equation}
then the potential  
\begin{align}
    \phi_{lip,j}(y) &:= ||y-x_0||_p + \frac{g_j(y)}{L_j}\\
    \phi_{lip}(y) &:= \min_{j} \phi_{lip,j}(y)
\end{align}

satisfies ray-monotonicity and Geocert using this as a potential returns the minimal distance boundary facet.
\end{corollary}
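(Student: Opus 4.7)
The strategy is to invoke Theorem~\ref{thm:geocert-correct}: it suffices to check that $\phi_{lip}$ is ray-monotonic in the sense of Definition~\ref{def:ray-monotonic}, and to then argue that the boundary facet of $DR(x_0)$ minimizing $\Phi_{lip}$ is precisely the boundary facet of minimum $\ell_p$ distance.

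For ray-monotonicity, I would fix an arbitrary direction $v\in\mathbb{R}^n$ and consider $\eta_j(t):=\phi_{lip,j}(x_0+tv)=t\|v\|_p+g_j(x_0+tv)/L_j$ for $t\geq 0$. Since $f$ is a PLNN, Lemma~\ref{lemma:neural}(i) implies each $g_j$ is continuous and piecewise linear, so $\eta_j$ is a continuous, piecewise-linear function of $t$. On any open subinterval where $g_j$ along the ray is linear, $\eta_j'(t)=\|v\|_p+\langle\nabla g_j(x_0+tv),v\rangle/L_j$. The Lipschitz hypothesis is equivalent to $\|\nabla g_j\|_{q}\leq L_j$ (where $q$ is the H\"older conjugate of $p$) wherever the gradient exists, so by H\"older's inequality $|\langle\nabla g_j,v\rangle|\leq L_j\|v\|_p$, which forces $\eta_j'(t)\geq 0$ on each linear piece. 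Continuity then upgrades this to $\eta_j$ being non-decreasing on the whole ray. Finally, the pointwise minimum of non-decreasing functions is non-decreasing, so $\phi_{lip}(x_0+tv)=\min_j\eta_j(t)$ is non-decreasing in $t$, verifying ray-monotonicity.

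For the equivalence of minimizers on boundary facets, I would note that any boundary facet $\mathcal{F}$ of $DR(x_0)$ separates a linear region labeled $i$ from a neighboring linear region in which some class $j^*\neq i$ attains the argmax. Then $g_{j^*}(y)=0$ for every $y\in\mathcal{F}$, while for $y$ in the closure of $DR(x_0)$ we have $g_j(y)\geq 0$ for every $j$. Hence $\min_j g_j(y)/L_j=0$ on $\mathcal{F}$, so $\phi_{lip}(y)=\|y-x_0\|_p$ on $\mathcal{F}$ and therefore $\Phi_{lip}(\mathcal{F})=\min_{y\in\mathcal{F}}\|y-x_0\|_p$. The argmin of $\Phi_{lip}$ over boundary facets thus coincides with the minimum-distance boundary facet, and Theorem~\ref{thm:geocert-correct} delivers the conclusion.

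The main obstacle is purely technical: the gradient estimate only gives non-negative derivatives on the interiors of the linear pieces of $g_j$, and one must lift this to monotonicity across the breakpoints. Continuity plus piecewise linearity resolve this cleanly. A subtler point is that the Lipschitz bound is stated only on $DR(x_0)$, yet ray-monotonicity is demanded along every ray from $x_0$ (which may leave $DR(x_0)$ and re-enter). This can be handled either by working with the ambient polyhedral complex of all linear regions of $f$ (guaranteed by Theorem~\ref{thm:relu_nets_are_PG}) together with a global Lipschitz bound for $g_j$—which always exists for a PLNN and can be substituted for $L_j$—or by restricting the argument to the rays actually explored by GeoCert, which traverse only the connected polytopes reachable from $x_0$. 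Either justification makes the gradient-to-monotonicity step rigorous without altering the outline above.
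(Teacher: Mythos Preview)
Your proposal is correct and follows essentially the same route as the paper: verify ray-monotonicity of each $\phi_{lip,j}$ via the derivative bound $\|v\|_p + \langle \nabla g_j, v\rangle/L_j \geq 0$ obtained from H\"older's inequality and $\|\nabla g_j\|_q \leq L_j$, pass to the minimum over $j$, and then observe that on any boundary facet some $g_{j^*}$ vanishes while all $g_j\geq 0$ on $DR(x_0)$, so $\phi_{lip}$ reduces to the $\ell_p$ distance there. If anything, your treatment is slightly more careful than the paper's, which silently differentiates $g_j$ along the ray without remarking on the breakpoints or on the domain restriction of the Lipschitz hypothesis.
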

\begin{proof}
We prove the ray-monotonicity for each $\phi_j$ and then demonstrate that this holds for their minimum as well. First we note that for every point $x \in DR(x_0)$ has that $g_j(x) \geq 0$. Fixing some $\phi_j$, $v$, and $t >0$ such that $x_0+tv \in DR(x_0)$, we consider 

\begin{equation}
    \eta_{j,v}(t):= \phi_{lip,j}(x_0 + tv) = t||v||_p + \frac{g_j(x_0 + tv) - g_j(x_0)}{L_j}
\end{equation}
which has derivative 
\begin{align}
    \dfrac{\delta \eta_{j,v}}{\delta t}(x_0 + tv) &= ||v||_p + \frac{1}{L_j}\dfrac{\delta g_j}{\delta t}(x_0 + tv) \\
    &= ||v||_p + \frac{1}{L_j}\langle v, \nabla g_j(x_0 + tv)\rangle \\ 
    &\geq ||v||_p  -\frac{1}{L_j} ||V||_p| ||\nabla g_j(x_0 + tv)||_q\\ 
    &\geq ||v||_p(1 - 1) \\
    &\geq 0
\end{align}
Where the first inequality comes from Hölder's inequality, and the second inequality comes from the fact that the norm of the gradient is bounded by the lipschitz constant. And since the minimum of monotonically increasing functions is also monotonically increasing, $\phi$ is ray-monotonic. This implies that GeoCert returns the minimal potential facet. However, note that along any boundary facet $\mathcal{F}_{bound}$, there exists a $j$ such that $g_j(y)=0 \forall y\in \mathcal{F}_{bound}$. Since each $g_j(y)\geq 0$ for all $y\in DR(x_0)$ for any $y \in \mathcal{F}_{bound}$, $\phi(y)=||x_0-y||_p$. In other words, this potential function is equivalent to the $\ell_p$ potential along  the decision boundary. Hence the first `boundary facet' popped is the boundary facet with minimal $\ell_p$ distance, as desired.
\end{proof}

\paragraph{Remarks:} Recall that as a subroutine, GeoCert using $\phi_{lip}$ as a potential, must compute $\Phi_{lip}(\mathcal{F})$ for each possible facet $\mathcal{F}$ to be added to the priority queue. This amounts to solving the following optimization problem 
\begin{equation}
    \Phi_{lip}(\mathcal{F}) := \min\limits_{y\in \mathcal{F}}\Big( ||y - x_0||_p + \min_{j\neq i} \frac{g_j(y)}{L_j}\Big)
\end{equation}
Along each piecewise linear region of a PLNN, certainly $f$ is a linear function, as is $g_j$. Hence, computing the minimum of $\phi_{lip, j}$ across a facet requires as much computation time as computing the $\ell_p$ projection to a facet. Since $\min_{j\neq i} g_j(y)$ is a pointwise minimum and hence not convex, computing $\Phi_{lip}$ is no longer computable by a single convex program. However one can minimize this for each $\phi_{lip, j}$ and return the overall minimum. This now requires multiple convex programs per facet. We find that (i) using a warm-start for our optimizations allows the second-through-final to finish much more quickly than the initial optimization, and (ii) a variant of GeoCert can be used where the facet-wise potential is replaced with a polytope-wise potential. Under this formulation, the number of optimizations per polytope with $m$ constraints goes from $m$, in the case of the $\ell_p$ potential, to $m + (k-1)$ where $k$ is the number of logits: we simply need to compute the feasibility of each facet ($m$ linear programs), to determine the neighbors of the right vertices in the graph, and $(k-1)$ optimizations to compute the polytope-wise potential.

Finally, we remark about the efficient computation of $L_j$. Under a fixed domain $\mathcal{D}$, if a lower and upper bound to each input to each ReLU of the neural net is known, a nontrivial upper bound to each $L_j$ can be computed with as much computation as is required by eight forward passes through the PLNN \cite{fastlin}. Indeed, by leveraging $\phi_{lip}$ as a potential, one can effectively propagate the lower-bound to pointwise robustness as computed by Fast-Lip: instead of computing a certifiable lower bound only on $f$ evaluated at $x_0$, as Fast-Lip does, the certifiable lower bound is now computed across every facet in the `frontier set' which expands outwards as GeoCert runs. This allows for Fast-Lip to converted into continually increasing lower bound.

\section{Polyhedral Complex Properties}\label{app:polyhedral_complices}
Here we will restate and prove the lemmas regarding iterative construction of polyhedral complices, and other useful tools when considering the centered Chebyshev ball contained in a polyhedral complex. 

\begin{customlem}{\ref{lemma:hyperplane-glue}}
Given an arbitrary polytope $\mathcal{P}:= \{x \; | \; Ax \leq b\}$ and a hyperplane $\mathcal{H}:= \{x \; | \; c^Tx = d\}$ that intersects the interior of $\mathcal{P}$, the two polytopes formed by the intersection of $\mathcal{P}$ and the each of closed halfpsaces defined by $\mathcal{H}$ are PC.
\end{customlem}
\begin{proof}
Let $\mathcal{H}^+:= \{x \; | \; c^Tx \geq d\}$ and $\mathcal{H}^- := \{x \; | \; c^Tx \leq d\}$, with $\mathcal{P}^+ := \mathcal{P}\cap \mathcal{H}^+$ and $\mathcal{P}^- := \mathcal{P} \cap \mathcal{H}^-$. Then each of $\mathcal{P}^+$, $\mathcal{P}^-$ are polytopes with nonempty interior and their intersection is exactly $\mathcal{P}\cap \mathcal{H}$, which is a face of both $\mathcal{P}^+, \mathcal{P}^-$.
\end{proof}

\begin{customlem}{\ref{lemma:double-hyperplane-glued}}
Let $\mathcal{P},\mathcal{Q}$ be two PC polytopes and let $H_\mathcal{P}$, $H_\mathcal{Q}$ be two hyperplanes that define two closed halfspaces each, $H^+_\mathcal{P}, H^-_\mathcal{P}, H^+_\mathcal{Q}, H^-_\mathcal{Q}$. If $\mathcal{P}\cap\mathcal{Q}\cap H_\mathcal{P} =\mathcal{P}\cap\mathcal{Q}\cap H_\mathcal{Q}$ then the subset of the four resulting polytopes $\{\mathcal{P}\cap H^+_\mathcal{P}, \mathcal{P}\cap H^+_\mathcal{P}, \mathcal{Q}\cap H^+_\mathcal{Q}, \mathcal{Q}\cap H^+_\mathcal{Q}\}$ with nonempty interior forms a polyhedral complex.
\end{customlem}
\begin{figure}
    \centering
    \includegraphics[scale=0.5]{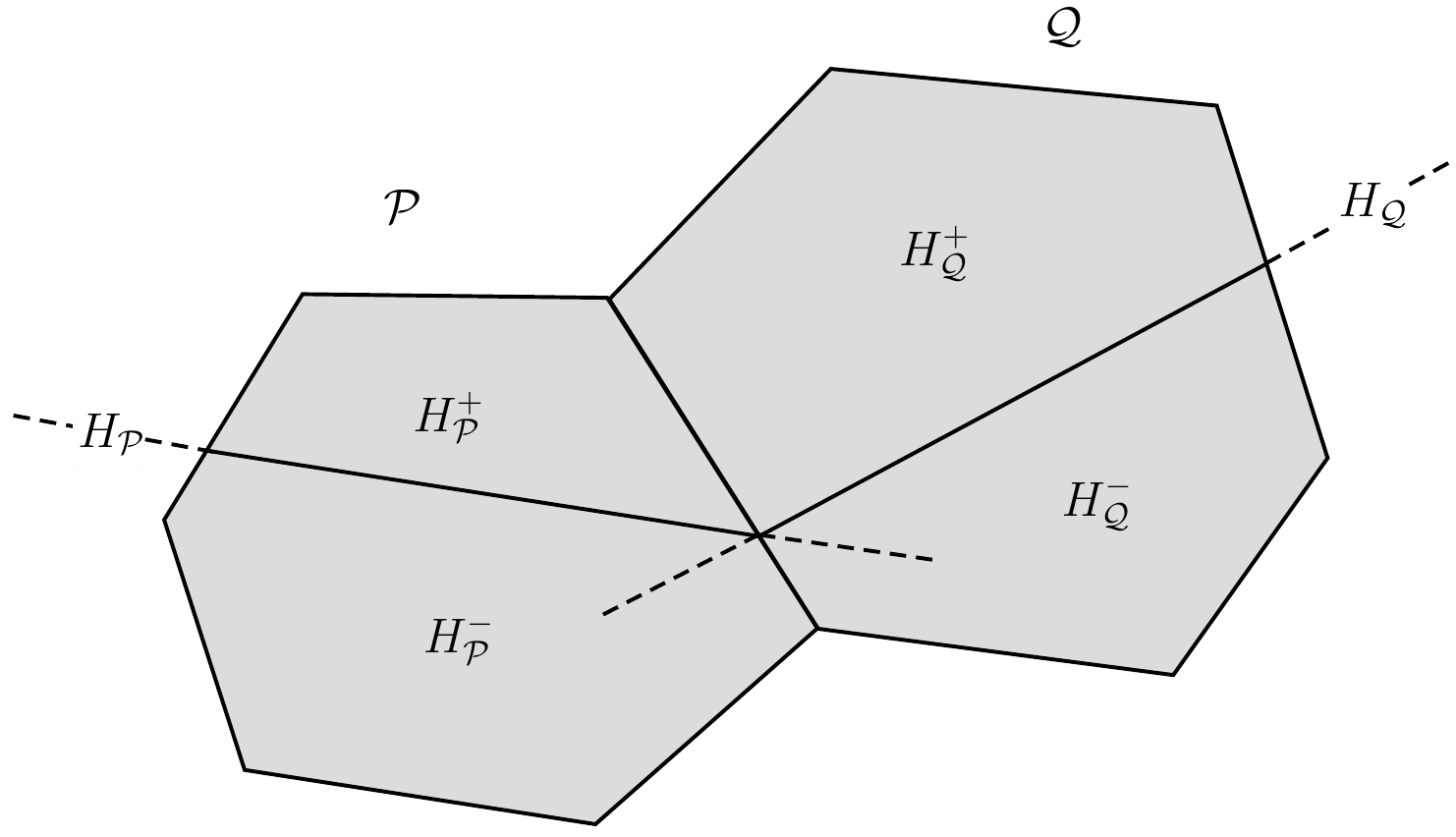}
    \caption{Pictorial aid for Lemma \ref{lemma:double-hyperplane-glued}.}
    \label{fig:double-hyperplane-glued}
\end{figure}

\begin{proof}
Let $F=\mathcal{P}\cap \mathcal{Q}$, which by definition is a face of both $\mathcal{P}, \mathcal{Q}$. Without loss of generality we can align the hyperplanes $H_\mathcal{P}, H_\mathcal{Q}$ such that $F\cap H^+_\mathcal{Q}=F\cap H^+_\mathcal{P}$. For ease of notation, we'll let $\mathcal{P}^+$ denote $\mathcal{P}\cap H^+_\mathcal{P}$, and similarly for $\mathcal{P}^-$, $\mathcal{Q}^+$, $\mathcal{Q}^-$. If $H_\mathcal{P}$ does not intersect the interior of $\mathcal{P}$, then exactly one of $\mathcal{P}^+, \mathcal{P}^-$ has empty interior and can be ignored. Otherwise, by lemma \ref{lemma:hyperplane-glue}, $\mathcal{P}^+$, $\mathcal{P}^-$ are PC, and likewise for $\mathcal{Q}^+$, $\mathcal{Q}^-$. To handle the cross-terms we proceed by cases. Letting $S =F \cap H_\mathcal{P}$, we handle the following four cases: (i) $S=\emptyset$, (ii) $S$ is a face of $F$, (iii) $S= F$, or (iv) none of the above. 

(i): In the case that $S=\emptyset$, then either $\mathcal{P}^+\cap F$ or $\mathcal{P}^-\cap F$ is empty. Likewise for $\mathcal{Q}^+\cap F, \mathcal{Q}^-\cap F$. Assume without loss of generality that $\mathcal{P}^+\cap F = \mathcal{Q}^+\cap F = \emptyset$. Then certainly $\mathcal{P}^+$ is disjoint from $\mathcal{Q}$ and therefore both $\mathcal{Q}^+$, $\mathcal{Q}^-$. Likewise for the interaction between $\mathcal{Q}^+$ and $\mathcal{P}^-, \mathcal{P}^+$. Finally, since $S=\emptyset$, $F$ is a face of both $\mathcal{P}^-$ and $\mathcal{Q}^-$ and $\mathcal{P}^-\cap \mathcal{Q}^-=F$, hence they are PC.

(ii): In the case that $S$ is a face of $F$, we label this face $G$. First note that $F$ needs to be fully contained by either $F\cap H^+_\mathcal{P}$ or $F\cap H^-_\mathcal{P}$. Thus $F$ is either a face of $\mathcal{P}^+$ or $\mathcal{P}^-$, where we can assume without loss of generality that it is a face of $\mathcal{P}^-$. Similarly, assume $F$ is a face of $\mathcal{Q}^-$, implying that $\mathcal{P}^-$ and $\mathcal{Q}^-$ are PC. By this assumption, $\mathcal{P}^+\cap F = G$. Note that $G$ is a face of $\mathcal{P}^+$.Since $G$ is a face of $F$, it is also a face of $\mathcal{Q}^-$, and $\mathcal{P}^+\cap\mathcal{Q}^-=G$, which is a face of each of them and therefore $\mathcal{P}^+$ and $\mathcal{Q}^-$ are PC. Likewise for $\mathcal{Q}^+$ and $\mathcal{P}^-$. Finally note that since $\mathcal{P}^+\cap F = \mathcal{Q}^+\cap F=G$, implying that $\mathcal{P}^+\cap \mathcal{Q}^+ = G$, hence $\mathcal{P}^+$ and $\mathcal{Q}^+$ are PC. 

(iii): If $S=F$, then we can assume without loss of generality that $\mathcal{P}^-=\mathcal{P}$ and $\mathcal{P}^+=F$, and similarly for $\mathcal{Q}$. Then since $\mathcal{Q}^+=\mathcal{P}^+=F$ they do not have nonempty interior and can be ignored. By definition $\mathcal{P}^-$ and $\mathcal{Q}^-$ are PC, and $\mathcal{P}^-, \mathcal{Q}^+$ are as well. 
(iv): In the final case, $S$ is neither the emptyset, $F$, nor a face of $F$. Then $F\cap H^+_\mathcal{Q}$ and $F\cap H^-_\mathcal{Q}$ are both nonempty polytopes with the same dimensionality as $F$. Letting $S^+ = F\cap H^+_\mathcal{Q}$, and defining $S^-$ likewise, note that $S$ is a face of $S^+, S^-$, by the same argument used in \ref{lemma:hyperplane-glue}. Since $F$ is a face of $\mathcal{P}$, $S^+$ is a face of $\mathcal{P}^+$ and likewise for $\mathcal{Q}^+$. And since $\mathcal{P}^+\subseteq \mathcal{P}$, $\mathcal{P}^+\cap \mathcal{Q}^+ \subseteq \mathcal{P}\cap \mathcal{Q} =F$. But $\mathcal{P}^+ \cap F= S^+$ and $\mathcal{Q}^+\cap F=S^+$, thus $\mathcal{P}^+\cap \mathcal{Q}^+=S^+$. Hence $\mathcal{P}^+$ and $\mathcal{Q}^+$ are PC. Likewise for $\mathcal{P}^-$ and $\mathcal{Q}^-$. Since $\mathcal{P}^+\cap \mathcal{Q}^- = S$ and $S$ is a face of $S^+$, $S^-$, it is a face of both $\mathcal{P}^+$, $\mathcal{Q}^-$ and the two are PC. Likewise for $\mathcal{P}^-$ and $\mathcal{Q}^+$.

\end{proof}

\begin{customlem}{\ref{lemma:hyperbox-pc}}
Let $\mathscr{P} = \{\mathcal{P}_1, \dots \mathcal{P}_k\}$ be a polyhedral complex and let $\mathcal{D}$ be any polytope. Then the set $\{\mathcal{P}_i \cap \mathcal{D} \; | \; \mathcal{P}_i \in \mathscr{P}\}$ also forms a polyhedral complex.
\end{customlem}

\begin{proof}
Letting $H_j$ be the hyperplanes that compose $\mathcal{D}$, i.e., $\mathcal{D} = \bigcap_j H_j$. Then it suffices to show that $\{\mathcal{P}_i \cap H_j \; | \; \mathcal{P}_i \in \mathscr{P}\}$ is a polyhedral complex, as we can repeat this iteratively for each $H_j$. This is equivalent to stating that for each $\mathcal{P}_i, \mathcal{P}_j \in \mathscr{P}$ with nonempty intersection, $\mathcal{P}_i\cap H_j$ and $\mathcal{P}_j\cap H_j$ are PC. This follows from a direct application of Lemma \ref{lemma:double-hyperplane-glued}.
\end{proof}

\begin{lemma}\label{lemma:intersection-check}
Let $\mathcal{P}$, $\mathcal{Q}$ be polytopes whose intersection is $(n-d)$ dimensional, for some $d\geq 2$, and let $x_0 \in \mathcal{P}$, with $B_t(x_0)$ the largest $\ell_p$-norm ball centered at $x_0$ contained in $\mathcal{P}\cup \mathcal{Q}$. Then $B_t(x_0)$ is contained entirely in $\mathcal{P}$.
\end{lemma}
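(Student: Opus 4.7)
The plan is to argue by reducing to a topological disconnection statement. Let $V$ be an affine subspace of dimension at most $n-2$ containing $\mathcal{P}\cap\mathcal{Q}$ (such a $V$ exists because $\mathcal{P}\cap\mathcal{Q}$ is an $(n-d)$-dimensional polytope with $d\geq 2$). The key observation is that since $V$ contains the intersection of $\mathcal{P}$ and $\mathcal{Q}$, the complement $(\mathcal{P}\cup\mathcal{Q})\setminus V$ is a \emph{disjoint} union of $\mathcal{P}\setminus V$ and $\mathcal{Q}\setminus V$, both of which are relatively closed in $(\mathcal{P}\cup\mathcal{Q})\setminus V$. I would then define the open set
\[
W := \mathrm{int}(B_t(x_0))\setminus V,
\]
which sits inside $\mathcal{P}\cup\mathcal{Q}$ and thus partitions as $W = (W\cap\mathcal{P})\sqcup(W\cap\mathcal{Q})$, each piece clopen in $W$.

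Next I would invoke the standard topological fact that removing an affine subspace of codimension at least two from a convex open set in $\mathbb{R}^n$ leaves a set that is open and path-connected: given any two points of $W$, the line segment between them meets $V$ in at most a finite set (or lies in an affine flat parallel to $V$), and small perturbations in the $\geq 2$ transverse directions suffice to avoid $V$ while staying inside the convex open ball. This is the step where the hypothesis $d\geq 2$ is essential; it is also the main technical obstacle, since the analogous statement fails for $d=1$, where $\mathcal{P}\cap\mathcal{Q}$ can separate the ball.

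Once $W$ is known to be connected, the clopen partition forces one of $W\cap\mathcal{P}$ or $W\cap\mathcal{Q}$ to be empty. To rule out $W\cap\mathcal{P}=\emptyset$, I would use that $\mathcal{P}$ has nonempty interior (as required for a polyhedral complex) and contains $x_0$: any point in $\mathrm{int}(\mathcal{P})\cap\mathrm{int}(B_t(x_0))$ lies off the codimension-$\geq 2$ set $V$ (since $V$ is nowhere dense in $\mathrm{int}(\mathcal{P})$), giving a point of $W\cap\mathcal{P}$. Therefore $W\cap\mathcal{Q}=\emptyset$, i.e., $W\subseteq\mathcal{P}$.

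Finally, I would promote this from $W$ to all of $B_t(x_0)$ by a closure argument. Every point of $\mathrm{int}(B_t(x_0))\cap V$ is a limit of points of $W$ (again because $V$ has codimension at least two, so it is a limit of points off $V$ inside the ball), and $W\subseteq\mathcal{P}$ with $\mathcal{P}$ closed gives $\mathrm{int}(B_t(x_0))\subseteq\mathcal{P}$. Taking closures once more yields $B_t(x_0)\subseteq\mathcal{P}$, as desired. The only care needed is the degenerate cases $t=0$ (trivial) and $n=1$ (where $d\geq 2$ forces $\mathcal{P}\cap\mathcal{Q}=\emptyset$ and the claim follows directly from connectedness of $B_t(x_0)$ and disjointness of $\mathcal{P},\mathcal{Q}$).
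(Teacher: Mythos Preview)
Your argument is correct and follows a genuinely different route from the paper. The paper argues metrically: via the identity $B_t(x_0)=\bigcup_{\|z-x_0\|\le t} B_{\,t-\|z-x_0\|}(z)$, it supposes some $z\in\mathcal{P}\cap\mathcal{Q}$ lies strictly inside $B_t(x_0)$, extracts a full-dimensional sub-ball $B_r(z)\subseteq\mathcal{P}\cup\mathcal{Q}$ centered on the low-dimensional flat $H\supseteq\mathcal{P}\cap\mathcal{Q}$, and derives a contradiction from the fact that a full-dimensional ball cannot sit inside an $(n{-}2)$-flat (the printed chain of inclusions is somewhat garbled, but this is the intent); from $r\le 0$ it then jumps to $B_t(x_0)\subseteq\mathcal{P}$. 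You instead excise the affine hull $V\supseteq\mathcal{P}\cap\mathcal{Q}$ up front and run a pure connectedness/clopen argument on $W=\operatorname{int}(B_t(x_0))\setminus V$. This is cleaner in two ways: it makes the role of the hypothesis $d\ge 2$ completely transparent (it is precisely what keeps $W$ path-connected), and it never invokes the maximality of $t$ --- indeed both arguments actually establish the stronger statement that \emph{every} ball centered at $x_0\in\mathcal{P}$ and contained in $\mathcal{P}\cup\mathcal{Q}$ already lies in $\mathcal{P}$. One small caveat you already anticipate: the step $W\cap\mathcal{P}\ne\emptyset$ uses that $\mathcal{P}$ has nonempty interior; this is implicit in the polyhedral-complex setting but is not stated in the lemma, and without it the conclusion can fail.
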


\begin{proof}
First we state an equivalent representation of $B_t(x_0)$, 
\begin{equation}\label{eq:ellp-reformulation}
  B_t(x_0) = \bigcup\limits_{\{z \; | \; ||x_0-z|| \leq t\}} B_d(z) \; \; \;\text {for } d = (t- ||x_0-z||) 
\end{equation}
Certainly the $\subseteq$ inclusion holds by setting $z=x_0$ and the $\supseteq$ inclusion holds by the triangle inequality. Now let's assume that $\mathcal{P}\cap \mathcal{Q}$ is nonempty and contained in an $(n-2)$-dimensional linear subspace, $H$. Suppose for the sake of contradiction that $r > 0$ for 
$$r := \sup\limits_{x\in \mathcal{P}\cap \mathcal{Q}} t - ||x - x_0||$$ 
and $z$ is defined as some point in $\mathcal{P}\cap \mathcal{Q}$ that attains this supremal distance. Such a $z$ must exist because $\mathcal{P}\cap \mathcal{Q}$ is closed. Then $B_r(z) \subseteq B_t(x_0) \subseteq (\mathcal{P}\cup\mathcal{Q}) \subseteq H$. But $B_r(z)$ is contains some $\ell_2$ ball, regardless of our choice of norm, contradicting the previous chain of inclusions. Thus $r\leq 0$, indicating that $B_t(x_0) \subseteq \mathcal{P}$.
\end{proof}

\section{Geometry of Piecewise Linear Neural Networks}\label{app:polyhedral}
In this appendix we restate and prove our theorems regarding the geometry of PLNN's. Specifically, we prove our lemma which describes that each ReLU configuration defines a polytope and, in general position, its facets correspond to exactly one ReLU being flipped. Then we prove that the decision region forms a polyhedral complex. 
\subsection{Computing the linear region of neural networks}
First we prove this lemma:
\begin{customlem}{\ref{lemma:neural}} For a given neuron configuration $A$, the following are true about $\mathcal{P}_A$,
\begin{enumerate} [label=(\roman*)]
    \item  Unless $\mathcal{P}_A = \mathbb{R}^n$ or $\emptyset$, there exists a neuron configuration $B$ such that $\mathcal{P}_A\cap \mathcal{P}_B \neq \emptyset$.
    \item $\mathcal{P}_A$ is a polytope, and for all layers $i$, $f^{(i)}(x)$ is linear in $x$ for all $x\in \mathcal{P}_A$.
\end{enumerate}
\end{customlem}
\begin{proof}
\textbf{Item \ref{neural:i}}: This is trivial as certainly every point in the domain corresponds to at least one neuron configuration. If both $\mathcal{P}_A$ and $\mathcal{P}_A^c$ are not the empty set, then their intersection is nonempty. But $\mathcal{P}_A^c$ is composed of a union of at least one piecewise linear region, at least one of which must intersect $\mathcal{P}_A$.

\textbf{Item \ref{neural:ii}}: This is easy to see by simply writing down the polytope $\mathcal{P}_A$ and its corresponding linear function. For neuron configuration $A$, we partition $A$ into $A_1, A_2, \dots A_{l-1}$, with $A_i$ corresponding to the neuron configuration at the $i^{th}$ layer. Then letting $\Lambda_i$ be a fixed matrix to replace each ReLU in the network, defined as $\Lambda_i:=\text{diag}(A_i)$ we note that 
$$
f^{(i)}(x) =
\begin{cases}
W_i x + b_i, & \text {if i = 1}\\
W_i \sigma(\Lambda_i) (f^{(i-1)}(x)) +b_i, & \text{if } i > 1  \\
\end{cases}
$$
Hence, as $\sigma(\Lambda_i)$ is constant across all points with neuron configuration $\mathcal{A}$, $f$ is a composition of linear functions and must be linear everywhere with that neuron configuration. To define the polytope $\mathcal{P}_A$, we note that each neuron adds one linear constraint to the polytope. Let us write down each of these constraints exactly. Since each $f^{(i)}(x)$ is linear, it can be written as $V_ix + c_i$ for some $V_i,c_i$. Recalling that $f^{(i)}(x)$ is the input to the $i^{th}$ ReLU layer, the constraints are of the form $f^{(i)}(x) \;\underbar{?}\; 0$ where $\underbar{?}$ is the comparator $\geq, \leq, =$ for $A_{i,j}$ being $1, -1, 0$ respectively. This can be encoded efficiently by multiplying the lefthand side by $-\Lambda_i$, so the total constraint becomes $\Lambda_i(V_ix + c_i) \geq 0$. We remark that $\Lambda_i$ can be computed with a single forward pass of the network, and each $V_i$ and $c_i$ can be computed with a two matrix multiplications, one of which is a diagonal matrix.

\end{proof}

\subsection{PLNN's Form Polyhedral Complices} 
We can now prove our main theorem regarding the linear regions of a PLNN.
\begin{customthm}{\ref{thm:relu_nets_are_PG}}
The collection of $\mathcal{P}_A$ for all $A$, such that $\mathcal{P}_A$ has nonempty interior forms a polyhedral complex. Further, the decision region of $F$ at $x_0$ also forms a polyhedral complex. 
\end{customthm}
\begin{proof}
Let $\mathscr{P}_{i,j}$ denote the set of polytopes generated by neuron configurations of all neurons in layer $k <i$, and the first $j$ neurons in layer $i$. Let $\mathscr{P}_{i,0}$ refer to the set of polytopes generated by neuron configurations from all neurons in layer $k < i$. We'll prove the theorem by induction across $i$, with an inner induction on $j$. 

As a base case, consider only the first layer $f^{(1)}(x)$. Examining only neuron $j$ of the first layer, note that $f^{(1)}(x)_j = W_{1,j}x + b_{1,j}$ implying that the, unless $W_{1,j} =0$, the set of inputs $x$ for which $f^{(1)}(x)_j = 0$ is exactly a hyperplane, which we shall denote $H_j$. Then we can perform a second, interior, induction across the neurons of the first layer of $f$. 

The first neuron in the first layer separates $\mathbb{R}^n$ into two closed halfspaces, such that $\mathscr{P}_{1,1}$ is PC. Now assume that $\mathscr{P}_{1,k}$ is PC. Consider now the addition of the $(k+1)^{th}$ neuron to generate $\mathscr{P}_{1, k+1}$. In particular, if $\mathscr{P}_{1, k}$ is generated by considering the arrangement of hyperplanes $H_1, \dots H_k$, then $\mathscr{P}_{1, k+1}$ is $\mathscr{P}_{1, k}$ with the addition of hyperplane $H_{k+1}$. Letting $\mathcal{P} \mathcal{Q}$ be two PC polytopes in $\mathscr{P}_{1, k}$, we can let $H_{k+1}$ define $H_{\mathcal{P}}$ and $H_\mathcal{Q}$ and apply lemma \ref{lemma:double-hyperplane-glued} to demonstrate that the polytopes generated by this intersection remain PC. This concludes the base case of the outer induction.

Now let's assume that for any layer $k$, $\mathscr{P}_{k, 0}$ is a polyhedral complex. Consider the difference between $\mathscr{P}_{k,0}$ and $\mathscr{P}_{k,1}$. Let $G_1$ refer to the set of points $x$ for which $f^{(k)}_1(x)=0$, i.e. the first neuron of layer $k$ has pre-ReLU value exactly zero. Now by \ref{lemma:neural} part \ref{neural:iii}, $f^{(k)}(x)_1$ is linear in each $\mathcal{P}_A \in \mathscr{P}_{k, 0}$. Thus for each such $\mathcal{P}_A$, $G_1\cap \mathcal{P}_A$ is either the emptyset or a hyperplane, $H_A$. Any two polytopes $\mathcal{P}_A, \mathcal{P}_B$ contained in $\mathscr{P}_{k, 0}$ with nonempty intersection, by inductive assumption, must be PC. If $H_A\cap F \neq \emptyset$, then certainly $G_1 \cap \mathcal{P}_B \neq \emptyset$ and thus there must be some hyperplane $H_B$ such that $H_B=\mathcal{P}_B\cap G_1$. Since $F \cap G_1 = H_A \cap F$ and $F\cap G_1 = H_B \cap F$, we meet the criteria to apply lemma \ref{lemma:double-hyperplane-glued} and thus the polytopes generated by the addition of $G_1$ remain PC. 

To conclude the proof of the first statement in the theorem, assume that $\mathscr{P}_{k, j}$ is PC. Then consider the addition of the $(j+1)^{th}$ neuron of layer $k$. Let $G_{j+1}$ refer to the set of points for which $f^{(k)}_{j+1}(x)=0$. Note that $f^{(k)}_{j+1}$ is linear across each $\mathcal{P}_A \in \mathscr{P}_{k, 0}$, since we just as well could have initially incorporated the $(j+1)^{th}$ neuron of this layer instead of the first one. Consider any pair of polytopes $\mathcal{P}_A, \mathcal{P}_B \in \mathscr{P}_{k, j}$ with nonempty intersection. These must be PC, and in particular their union must either be fully contained in some $\mathcal{P}_C\in \mathscr{P}_{k, 0}$ or not. If so, then there exists some hyperplane $H_C$ such that $G_{j+1} \cap \mathcal{P}_C=H_C\cap \mathcal{P}_C$ and thus $\mathcal{P}_A \cap \mathcal{P}_B \cap G_i = \mathcal{P}_A \cap \mathcal{P}_B \cap H_C$ so we satisfy the criteria to apply lemma \ref{lemma:double-hyperplane-glued}. If there is no such $\mathcal{P}_C$, then $\mathcal{P}_A \cap \mathcal{P}_B$ must be a facet of each of them, $F$. Then we can mimic the argument in the previous paragraph to show that the polytopes generated by the addition of $G_{j+1}$ remain PC. 

Finally, we need to prove that the decision region of $F$ at $x_0$ forms a polyhedral complex. Let $\mathscr{Q}$ be the collection of linear regions of $F$ that have a nonempty intersection with the decision region of $F$ at $x_0$. As any subset of a polyhedral complex is also a polyhedral complex, $\mathscr{Q}$ is certainly a polyhedral complex. Let $F(x_0) = i$ and let $g_j = \{x| f_i(x) \geq f_j(x)\}$. For each linear region of $f$, $g_j$ is a halfspace. The decision region of $F$ at $x_0$ is exactly $\{\mathcal{Q}_i \cap (\bigcap_{j \neq i} g_j \; | \; \mathcal{Q}_i \in \mathscr{Q}\}$. It suffices to show that for a single $j$, $\{\mathcal{Q}_i \cap g_j(x)) \; | \; \mathcal{Q}_i \in \mathscr{Q}\}$ is still a polyhedral complex, as we can iterate over all $j\neq i$. Then for a fixed $j$ and any $\mathcal{Q}_i, \mathcal{Q}_k \in \mathscr{Q}$ with nonempty intersection, and letting $g_j(\mathcal{P})$ be the hyperplane defining $g_j(x)$ for the linear region $\mathcal{P}$, we note that $\mathcal{P}\cap\mathcal{Q}\cap g_j(\mathcal{P}) =\mathcal{P}\cap\mathcal{Q}\cap g_j(\mathcal{Q})$. This is exactly the criteria required to apply lemma \ref{lemma:double-hyperplane-glued}, which maintains that the pair of polytopes $\mathcal{P}$ and $\mathcal{Q}$ lying in the decision region are PC. This holds for every pair of polytopes in $\mathcal{Q}$ with nonempty intersection, so $\mathscr{Q}\cap g_j$ is a polyhedral complex, and hence so is the entire decision region of $F$ at $x_0$.
\end{proof}

In fact, the following corollary demonstrates that except in extreme cases, the facets of each linear region correspond to exactly one neuron flipping configurations.
\begin{customcor}{\ref{cor:relu-hamming}}
If the network parameters are in general position and $A,B$ are neuron configurations such that $dim(\mathcal{P}_A)=dim(\mathcal{P}_B)=n$ and their intersection is of dimension $(n-1)$, then $A,B$ have hamming distance 1 and their intersection corresponds to exactly one ReLU flipping signs.
\end{customcor}
\begin{proof}
As both $\mathcal{P}_A$ and $\mathcal{P}_B$ are of full dimension, no coordinate of the neuron configurations $A,B$ can be zero.  Under the assumption of general position of the network parameters, the halfspace that defines each polytope constraint lies in a different $(n-1)$-dimensional affine subspace, hence each facet corresponds to exactly one neuron. Indeed, each facet of each linear region's polytope corresponds to at exactly one ReLU constraint being set to equality. Since $dim(\mathcal{P}_A\cap\mathcal{P}_B)=n-1$ and since $\mathcal{P}_A, \mathcal{P}_B$ are PC, $\mathcal{P}_A, \mathcal{P}_B$ must be a facet of each of them. This facet is a linear region of the network as well, corresponding to a neuron configuration $C$ that is identical to $A, B$, but with some coordinate set to zero. As $A \neq B$, and the neuron configuration $C$ has exactly one zero, it must be the case that the hamming distance between $A$ and $B$ is exactly one, corresponding to exactly one ReLU flipping signs.
\end{proof}
\section{An Approach For Computing Tighter Upper Bounds}\label{app:upper_bounds}
As mentioned in Section \ref{sec:speedups}, maintaining a nontrivial upper bound on the pointwise robustness accelerates the runtime of GeoCert by restricting the domain we have to search. This has a twofold benefit as (i) this allows us to quickly reject potential facets as infeasible by checking if their containing hyperplane intersects the restricted domain, and (ii) allows for tighter pre-ReLU activation bounds to be computed. This latter point allows for potential facets to be rejected without the computation of their projection as Corollary \ref{cor:relu-hamming} implies that neurons that are stable within a domain do not correspond to any facets inside that domain.

Fortunately, there has been an explosion in the field of computing upper bounds to the pointwise robustness, typically described as adversarial examples. In this section we present a variant of the attack techniques presented in \cite{kurakin2016adversarial, tramer2017ensemble, madry2017towards, cwattack}. Our goal is to be able to compute a reasonably tight upper bound for a single example in a very short amount of time. In general, attack techniques are viewed as optimizations over some perturbation that aims to maximize a loss that is large when the classifier makes a mistake. We discuss two popular existing adversarial attacks from an .

One attack, known as PGD performs \emph{gradient ascent} directly on the loss ands projects at each iteration back onto a set of allowable perturbations. Letting the allowable set of perturbations be $B_p^\epsilon(0)$ and the domain of valid images be $\mathcal{D}$, then the allowable set of adversarial perturbations for image $x_0$ is $\mathcal{D}':= B_p^\epsilon(0) \cap \{x - x_0 \; | \; x \in \mathcal{D}\}$. PGD seeks to solve the maximization problem
\begin{equation}
    \max\limits_{\delta \in \mathcal{D}'} \mathcal{L}(x_0+\delta, y)
\end{equation} 
where $\mathcal{L}(\cdot, y)$ is some loss that is small when the network classifies its argument as class $y$, and large otherwise. The PGD iterations become 
\begin{equation}
    \delta^+ = \Pi_{\mathcal{D}'}\Big(\delta + \eta \nabla_\delta \mathcal{L}(x_0 + \delta, y)\Big)
\end{equation}
Notice that the goal of PGD is not to induce a minimal distortion adversarial example, but simply to minimize classifier accuracy within a fixed threat model. We also note several tricks that are useful in practice such as a random initialization of $\delta \in \mathcal{D}'$ and repeated restarts to find more successful adversarial examples.

An alternative attack, pioneered by Carlini and Wagner \cite{cwattack} does aim to produce low-distortion adversarial examples by simply letting $\mathcal{D}':= \{x - x_0 \; | \; x \in \mathcal{D}\}$ and solving the optimization 
\begin{align} \label{eq:cw-attack}
\min_{\delta \in \mathcal{D}'} &\quad ||\delta|| & \\
    \text{s.t.} & F(x_0+\delta) \neq F(x_0) \notag \\ 
\end{align}
Where the adversarial constraint is typically put into the lagrangified form with the best multiplier found via binary search:

\begin{equation}
\min_{\delta \in \mathcal{D}'} \quad ||\delta|| + \lambda G(x_0+\delta)
\end{equation}
Where $G$ is a function that is zero everywhere where the classifier makes a mistake, and positive elsewhere. This is then solved with a standard gradient descent algorithm. The main critique of this method is that the binary search over the hyperparameter $\lambda$ dictates the runtime be several times longer than PGD. Note that during this optimization, once the intermediate iterate is outside $x_0$'s decision region, the gradient steps push the intermediate iterate radially inwards. However, unless step sizes are tuned nicely, many iterations with the radially-inward direction may be taken.

We provide a tweak to PGD that allows one to quickly generate adversarial examples that are optimized to have minimal distortion. This technique is as follows: for example image $x_0$, compute many random perturbations on $x_0$, and run PGD with a large domain on each of these randomly perturbed starting points. Once complete, collect each of the examples for which the classifier makes a mistake. Run a binary search along the line connecting the example and the starting point $x_0$, in an attempt to `project' onto the decision boundary. Return the minimal-distance of these projected adversarial attacks as the adversarial example for $x_0$. 

The binary search step requires only forward passes and is significantly faster than the several gradient steps required by CW to `project' back to the decision boundary. This allows one to effectively perform a quick PGD attack, which is almost always successful under a sufficiently large threat model, but also attain a successful adversarial attack with small distortion.

We note, the emphasis here is not on attaining the minimal distortion adversarial example, but on speed and guaranteed success. Our goal is to very quickly find an adversarial example that is incentivized to be close to the original point and will almost always succeed.

%This algorithm is outlined in \textcolor{red}{COPY ALGORITHM}

\begin{algorithm}[tb]
   \caption{Fast Upper Bound}
   \label{alg:fastupper}
 \hspace*{0.0em} \textbf{Input} classifier $f$, input $x_0$, initSize $\nu$, ballSize $\epsilon$\\
 \hspace*{2.8em} lr $\eta$, numIter $n$, numRand $r$\\
 \hspace*{2.8em} numBin $k$
\begin{algorithmic}
   \FOR{$i \in [r]$}
     \STATE $u_i = \infty$ \\ 
     \STATE $\delta_i \gets RandBall(\nu)$ \\ 
      \FOR{$iter \in [numIter]$}
        \STATE $\delta_i \gets \Pi_{\epsilon}(\delta_i + \eta \nabla f(x+\delta_i))$\\ 
      \ENDFOR
      \IF{$f(x+\delta_i) \neq f(x)$}
        \STATE $\delta_i \gets BinSearch(f, x_0, \delta_i, k)$\\
        \STATE $u_i \gets ||\delta_i||_p$\\
      \ENDIF\\
   \ENDFOR 
\end{algorithmic}
 \hspace*{0.0em} \textbf{RETURN} $\min_i u_i$
\end{algorithm}

\begin{algorithm}[tb]\label{alg:fastUpper}
   \caption{BinSearch}
   \label{alg:binsearch}
 \hspace*{0.0em} \textbf{Input} classifier $f$, point $x_0$\\
 \hspace*{2.8em} perturbation $\delta$, numIter $n$
\begin{algorithmic}
    \STATE $lo \gets 0,\; hi \gets 1$\\
    \FOR{$i \in [n]$}
        \IF{$f(x_0 + (lo + hi)/2 \cdot \delta) \neq f(x_0)$}
            \STATE $hi \gets (lo + hi) / 2$\\ 
        \ELSE
            \STATE $lo \gets (lo + hi) / 2$\\ 
        \ENDIF
    \ENDFOR 
\end{algorithmic}
 \hspace*{0.0em} \textbf{RETURN} $hi \cdot \delta $
\end{algorithm}

\section{Extra Experiments}\label{app:experiments}

\subsection{Extra Experiment 1:}

To reiterate, in the worst case our algorithm may need to explore an exponential number of polytopes. Here, we provide results which seem to suggest that for PLNNs trained on MNIST the number of polytopes is well removed from the worst case. Figure \ref{fig:extra_1} shows the number of polytopes encountered in an $\ell_\infty$ ball of size $t$ around several random images. (Note that the relevant network in this case is the 70NetBin network described previously.) The distance $t$ is increased until the region around each of the sampled points includes the entire domain for MNIST (i.e. [0, 1] hypercube). Thus, the maximum number of polytopes that could be encountered for this problem is very loosely upper bounded by 73. On average, the number of polytopes encountered for this example would be closer to 6 as the average distance is 0.19. This plot seems to suggest that the number of polytopes encountered is much smaller than the worst case possibility.

\begin{figure}
    \centering
    \includegraphics[scale=0.5]{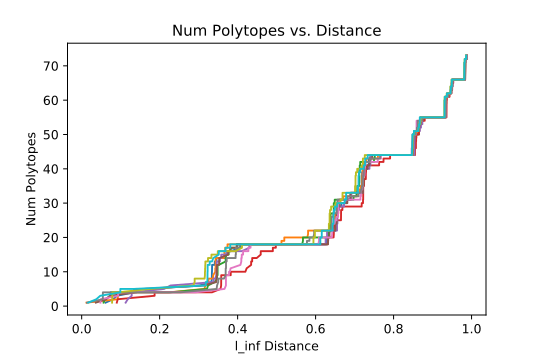}
    \caption{Evidence that verification for trained nets does not follow worst case behavior}
    \label{fig:extra_1}
\end{figure}

\subsection{Extra Experiment 2:}

Additionally, we run experiments to investigate the benefit of using a Lipschitz overapproximation based potential versus the standard $\ell_p$ distance. Table \ref{table:pot-table} demonstrates the average number of encountered polytopes when verifying pointwise robustness. 

\begin{table}[h!]
\centering
\label{table:pot-table}
\caption{Average number of polytopes explored until computing exact pointwise robustness across binary (1's and 7's only) MNIST, and full MNIST, and two architectures. The average is over 50 random examples. This demonstrates the benefit of leveraging the Lipschitz upper bound in the potential function.}
\begin{tabular}{@{}l|rr|rr@{}}
\toprule
                 & \multicolumn{2}{c}{Binary MNIST} & \multicolumn{2}{c}{Full MNIST} \\ \midrule
Potential        & 70Net           & 40Net          & 70Net          & 40Net         \\ \midrule
$\phi_{lip}$     & 4.2             & 15.3           & 9.7            & 27.5          \\
$\phi_{p}$ & 5.1             & 25.6           & 17.1           & 90.3          \\ \bottomrule
\end{tabular}
\end{table}

\end{document}